\def\ben{\begin{equation}}
\def\een{\end{equation}}
\def\bmx{\begin{bmatrix}}
\def\emx{\end{bmatrix}}
\def\N{\mathcal{N}}
\def\E{\mathbb{E}}
\def\R{\mathbb{R}}
\def\I{\mathbf{I}}
\def\C{\mathcal{C}}
\def\Xs{{\mathcal X}}
\def\Ys{{\mathcal Y}}
\def\X{{\mathbf X}}
\def\Er{\mathcal{E}}
\def\0{\mathbf{0}}
\def\1{\mathbf{1}}
\def\dist{\mathcal D}
\def\x{\mathbf{x}}
\def\y{\mathbf{y}}
\def\z{\mathbf{z}}
\def\n{\mathbf{n}}
\def\u{\mathbf{u}}
\def\T{\mathcal{T}}
\def\U{\mathbf U}
\def\V{\mathbf V}
\def\A{{\bf A}}
\newcommand{\chcolor}{black}
\def\tr{\operatorname{tr}}
\def\pdet{\operatorname{pdet}}
\def\diag{\operatorname{diag}}
\def\vect{\operatorname{vec}}
\def\Pr{\mathbb{P}\rm{r}}
\DeclareMathOperator*{\argmin}{arg\,min}
\newtheorem{remark}{Remark}
\newtheorem{theorem}{Theorem}
\newtheorem{proposition}{Proposition}
\newtheorem{lemma}{Lemma}
\def\trait{TRAIT\xspace}
\def\lrt{LRT\xspace}
\begin{document}
\title{The Role of Principal Angles in \\Subspace Classification}

\author{Jiaji Huang,~\IEEEmembership{Student Member,~IEEE,}
	Qiang Qiu,
	Robert~Calderbank,~\IEEEmembership{Fellow,~IEEE}
\thanks{The authors are with the Department
of Electrical and Computer Engineering, Duke University, Durham,
NC, 27708 USA e-mail:jiaji.huang@duke.edu, qiang.qiu@duke.edu, robert.calderbank@duke.edu}
}


\maketitle
\begin{abstract}

	Subspace models play an important role in a wide range of signal processing tasks, and this paper explores how the pairwise geometry of subspaces influences the probability of misclassification. When the mismatch between the signal and the model is vanishingly small, the probability of misclassification is determined by the product of the sines of the principal angles between subspaces. When the mismatch is more significant, the probability of misclassification is determined by the sum of the squares of the sines of the principal angles. Reliability of classification is derived in terms of the distribution of signal energy across principal vectors. Larger principal angles lead to smaller classification error, motivating a linear transform that optimizes principal angles. The transform presented here (TRAIT) preserves some specific characteristic of each individual class, and this approach is shown to be complementary to a previously developed transform (LRT) that enlarges inter-class distance while suppressing intra-class dispersion. 
	{\color{\chcolor}
	Theoretical results are supported by demonstration of superior classification accuracy on synthetic and measured data even in the presence of significant model mismatch.
	}
\end{abstract}

\begin{IEEEkeywords}
subspace, classification, SNR
\end{IEEEkeywords}

\IEEEpeerreviewmaketitle

\section{Introduction}
\IEEEPARstart{S}{ignals}
that are nominally high dimensional often exhibit a low dimensional geometric structure. 
For example, fixed-pose images of human faces are recorded using more than 1000 pixels, but can be represented by a 9-dimensional harmonic subspace~\cite{Basri2003}. 
Motion trajectories of a rigid body might be recorded by hundreds of sensors, but must intrinsically be represented by a 4-dimensional subspace~\cite{Rao2008}. 
There are many more examples where a low-dimensional subspace model captures intrinsic geometric structure, ranging from 
user ratings in a recommendation system~\cite{Rennie2005} to signals emitted by multiple sources impinging at an antenna array~\cite{Scharf2002}.
The subspace geometry has assisted tasks of interest to both signal processing~\cite{Xie2013,Petrels2013} and machine learning communities\cite{Vidal2009,QiangJMLR}.

A Gaussian Mixture Model (GMM) measures proximity to a union of linear or affine subspaces, by imposing a low-rank structure on the covariance of each mixture component. It can be used to approximate a nonlinear manifold by fitting mixture components to local patches of the manifold~\cite{minhua2010,Xie2013}, hence providing a high fidelity representation of a wide variety of signal geometries. The simplicity of the model facilitates signal reconstruction~\cite{BCS2008,Yu2011,Yu2012,FracescoRecon}, making GMMs a very attractive signal source model in compressed sensing. 
The value of low-rank GMMs extends to classification, where each class is modeled as a low-rank mixture component, and classes are identified by their projections onto linear features. Optimal feature design is addressed in ~\cite{MinhuaLDA,QiangJMLR}. 

The GMM is usually only an approximation to the truth. For example, the full spectrum associated with a face image follows a power law distribution, and when we truncate to the first 9 harmonic dimensions, the residual energy will be a source of error in classification. Even if the true model were a GMM, we can only learn an approximation to the true model from training data. The more data we see, the better is the fit of our empirical model, but some degree of mismatch is unavoidable. If we treat this mismatch as a form of noise, then we can use information theory to derive fundamental limits on the number of classes that can be discerned (see~\cite{MattCapacity} for more details).


This paper explores how the pairwise geometry of subspaces influences the probability of misclassification. There are parallels with non-coherent wireless communication~\cite{Hochwald2000}, where information is encoded as a subspace drawn from a fixed alphabet, and the function of the receiver is to distinguish the transmitted subspace.
When each component is perfectly modeled as a Gaussian, the performance of the MAP classifier can be analyzed using the Chernoff Bound~\cite{Duda2000}. When fidelity is perfect, there is no mismatch, and fundamental limits on performance are determined by the rank of the intersection of the classes~\cite{FracescoRate,MattCapacity}.

In this paper, we further consider how best to discriminate classes, when the alignment between the GMM model and the data is only approximate. We make three main contributions in this paper:


\begin{enumerate}
	\item 
	We express the probability of pairwise misclassification in terms of the principal angles between the corresponding subspaces.
	\textcolor{\chcolor}{
	This expression depends on the mismatch between the signal and the model. Interpreting this mismatch as noise, we provided analysis of the low, moderate, and high SNR regimes.}
	This improves upon \cite{FracescoRate}, in the sense that we have a more explicit expression of the ``measurement gain" as proposed in \cite{FracescoRate}.
	\item 
We characterize the probability of misclassification for more general distributions near subspaces.
This is motivated by the case where training samples per class are insufficient for a reliable estimate of covariance. 
In these cases, we have very little knowledge about the signal's distribution and a MAP classifier is not good fit.
The Nearest Subspace Classifier (NSC) provides an alternative and we use the NSC classifier rather than the MAP to bound the probability of misclassification.
	\item 
We develop a feature extraction method, TRAIT, that 
\textcolor{\chcolor}{effectively enlarges principal angles between different subspaces and preserves intra-class structure. We demonstrate superior classification accuracy on synthetic and measured data, particularly in the presence of significant model mismatch.}
\end{enumerate}

This paper is organized as follows. Section~\ref{sec:pre} presents the subspace geometry framework.
Section~\ref{sec:MAP} analyzes the Maximum a Posteriori (MAP) classifier under the GMM assumption.
Section~\ref{sec:NSC} analyzes the performance of Nearest Subspace Classifer (NSC), which relaxes the GMM assumption.
Section~\ref{sec:trait} proposes a feature extraction method, TRAIT, that exploits subspace geometry, and presents experimental results for both synthetic and \textcolor{\chcolor}{measured} datasets. Section~\ref{sec:conclu} provides a final summary.

A note on notations: we use bold upper case letters for matrices, e.g., $\bf X$, and bold lower case letters for vectors, e.g., $\bf x$. The transpose of a matrix $\bf X$ is denoted by $\bf X^\top$. 
Scalars are written as plain letters, e.g., $\lambda$, $K$.

\section{Geometric Framework}
\label{sec:pre}
Consider two subspaces $\Xs$ and $\Ys$ of $\R^n$ with dimensions $\ell$ and $s$ respectively, where $\ell\leq s$.
The principal angles between $\Xs$ and $\Ys$, denoted as $\theta_1,\dots,\theta_\ell$,
are defined recursively as follows
\[\begin{array}{ll}\theta_1=\min_{\x_1\in\Xs, \y_1\in\Ys}\arccos \left(\x_1^\top \y_1\over\|\x_1\|\|\y_1\| \right),&
\\
\qquad\vdots&\\
\theta_j=\min_{\substack{\x_j\in\Xs, \y_j\in\Ys\\ \x_j\perp \x_1,\dots,\x_{j-1}\\ \y_j\perp \y_1,\dots,\y_{j-1}}}
\arccos\left(\x_j^\top \y_j \over \|\x_j\|\|\y_j\|\right),& j=2,\dots,\ell.\\
\end{array}\]
The vectors $\x_1,\dots,\x_\ell$ and $\y_1,\dots,\y_\ell$, are called principal vectors. 
The dimension of $\Xs\cap\Ys$ is the multiplicity of zero as a principal angle.
It is straightforward to compute the principal angles by calculating the singular values of $\bf X^\top \bf Y$, where $\bf X$ and $\bf Y$ are orthonormal bases for $\Xs$ and $\Ys$ respectively. 
The singular values of $\bf X^\top \bf Y$ are then $\cos\theta_1,\dots,\cos\theta_\ell$. 

Let $\ell=s$. The principal angles induce several distance metrics \textcolor{\chcolor}{on the Grassmann manifold},
of which the most widely used is the (squared) chordal distance $\dist_c^2(\Xs,\Ys)$~{\color{\chcolor}\cite{Edelman1998}}, given by 
\[\dist_c^2(\Xs,\Ys)=\sum_{i=1}^s \sin^2\theta_i.\]
The chordal distance is an aggregate, and in the following sections we will see how probability of misclassification depends, not so much on this aggregate, but on the individual principal angles.

\section{The MAP Classifier for a GMM}
\label{sec:MAP}
We begin by considering the MAP classifier, which is optimal when the signal distribution is known. We focus on binary classification, where the two classes are equiprobable, since the generalization from two to many classes is well understood \cite{Chenhao2012,FracescoRate}.

We model each class as zero mean Gaussian distributed, where the covariance is near low-rank. 
Classification
can be formulated as the following binary hypothesis testing problem
\ben\begin{array}{lcl}
	H_1: \x&\sim&\N(\0,\boldsymbol \Sigma_1)\\
	H_2: \x&\sim&\N(\0,\boldsymbol \Sigma_2).
	\label{eq:hypo}
\end{array}
\een
We justify the zero-mean assumption by observing that in applications such as face recognition~\cite{MaYiSRC}, or motion trajectory segmentation~\cite{Rao2008}, the actual mean is considered as a nuisance parameter, and is removed prior to processing. Given the near-subspace assumption, we model the two covariances as
\ben
\begin{aligned}
	\boldsymbol \Sigma_1 &= \U_1\boldsymbol\Lambda_1\U_1^\top+\sigma^2\I \\
	\boldsymbol \Sigma_2 &= \U_2\boldsymbol\Lambda_2\U_2^\top+\sigma^2\I.
\end{aligned}
\label{eq:noiseModel}
\een
where $\U_1, \U_2\in\R^{n\times d}$ are the orthonormal bases for the two signal subspaces, 
denoted by $\Xs_1$ and $\Xs_2$.  Typically $n\gg d$.
$\boldsymbol\Lambda_1, \boldsymbol\Lambda_2\in\R^{d\times d}$ are \textcolor{\chcolor}{diagonal matrices of} eigenvalues. 
We assume that the two subspaces have the same dimension $d$, and that the diagonal elements of $\boldsymbol\Lambda_1$, $\boldsymbol\Lambda_2$ are arranged in descending order. In the application to motion trajectories we take $d=4$, and in the application to face recognition we might take $d=9$. Denote the $i$-th largest eigenvalue of $\boldsymbol\Lambda_j$ by $\lambda_{j,i}$. Finally let $\sigma^2$ be the variance of the noise, which quantifies the degree of mismatch between the subspace model and the data.

\textcolor{\chcolor}{
Denote the probability of mistaking hypothesis 2 for hypothesis 1 by ${\Pr}(H_2|H_1)$, and define ${\Pr}(H_1|H_2)$ similarly.
Under the assumption that the two hypotheses are equiprobable, the error probability $P_e$ of a MAP (optimal) classifier is
\ben
\begin{aligned}
	P_e &= {1\over2}[{\Pr}(H_2|H_1)+{\Pr}(H_1|H_2)]\\
	&={1\over2}\int \min({\Pr}(\x|H_1),{\Pr}(\x|H_2))d\x
\end{aligned}
\een
Since this integral does not admit a closed form solution, we study the Bhattacharyya upper bound~\cite{Bbound} to $P_e$ instead. This bound is a special case of the Chernoff bound~\cite{Duda2000} derived using the observation $\min(a,b)\leq\sqrt{ab}$.} The Bhattacharyya bound gives
\ben
P_e\leq {1\over2}e^{-K},~\mbox{where } K={1\over 2}\ln\frac{\det\left(\frac{\boldsymbol\Sigma_1+\boldsymbol\Sigma_2}{2}\right)}{\sqrt{\det\boldsymbol\Sigma_1\cdot\det\boldsymbol\Sigma_2}}.
\label{eq:Bbound}
\een
The numerator inside the logarithm measures the volume of space occupied by both subspaces together, and the denominator measures the volumes occupied separately. These quantities depend on the principal angles, and we now study the performance of the Bhattacharyya bound in the high, low \textcolor{\chcolor}{and moderate} SNR regimes.

\subsection{The High SNR Regime}
\label{sec:highSNR}

We first consider the case when $\sigma^2\to0$, which means that the mismatch between the signal and the model becomes vanishingly small. 
Since the intersection $\Xs_1\cap\Xs_2$ between the two subspaces plays a special role, we write the two covariances as
\ben
\begin{array}{l}
	\boldsymbol\Sigma_1 = \U_{1,\cap}\boldsymbol\Lambda_{1,\cap} \U_{1,\cap}^\top+\U_{1,\backslash}\boldsymbol\Lambda_{1,\backslash} \U_{1,\backslash}^\top+\sigma^2\I,\\
	\boldsymbol\Sigma_2 = \U_{2,\cap}\boldsymbol\Lambda_{2,\cap} \U_{2,\cap}^\top+\U_{2,\backslash}\boldsymbol\Lambda_{2,\backslash} \U_{2,\backslash}^\top+\sigma^2\I
\end{array}
\label{eq:Sigs}
\een 
Here both $\U_{1,\cap}\in\R^{n\times r}$ and $\U_{2,\cap}\in\R^{n\times r}$ span $\Xs_1\cap\Xs_2$ with singular values 
$\boldsymbol\Lambda_{1,\cap}$ and $\boldsymbol\Lambda_{2,\cap}$ respectively. 
$\U_{1,\backslash}\in\R^{n\times (d-r)}$ spans $\Xs_1\backslash\Xs_2$ with singular values $\boldsymbol\Lambda_{1,\backslash}$.
And $\U_{2,\backslash}\in\R^{n\times (d-r)}$ spans $\Xs_2\backslash\Xs_1$ with singular values $\boldsymbol\Lambda_{2,\backslash}$.

The following theorem bounds the classification error in the high SNR regime.
\begin{theorem}
	\label{thm:highSNR}
	Assume $n\geq 2(d-r)$.
	As $\sigma^2\rightarrow 0$, the classification error is upper bounded as
	\[
	P_e\leq 
	c_1 (\sigma^2)^{d-r\over2}\left(\prod_{i=r+1}^d\sin^2\theta_i\right)^{-{1\over2}}+o\left((\sigma^2)^{d-r\over2}\right)
	\]
	where ``$g(\sigma^2)=o(f(\sigma^2))$" stands for $\lim_{\sigma^2\to0}\frac{g(\sigma^2)}{f(\sigma^2)}=0$. 
	The constant $c_1$ is given by,
	\[\begin{aligned}
	c_1=2^{{2d-r\over2}-1}&\left[ \frac{\pdet(\U_{1,\cap}\boldsymbol\Lambda_{1,\cap}\U_{1,\cap}^\top+\U_{2,\cap}\boldsymbol\Lambda_{2,\cap}\U_{2,\cap}^\top)}{\sqrt{\prod_{i=1}^r\lambda_{1,\cap,i}\cdot\prod_{i=1}^r\lambda_{2,\cap,i}}} \right.\\
	&\quad\left.\cdot
	\prod_{i=1}^{d-r}\sqrt{\lambda_{1,\backslash,i}\cdot\lambda_{2,\backslash,i}} \right]^{-{1\over2}}
	\end{aligned}\]
	where $\pdet$ denotes the pseudo-determinant.
\end{theorem}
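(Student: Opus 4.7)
The plan is to start from the Bhattacharyya bound (\ref{eq:Bbound}) and track the behaviour of the ratio $R(\sigma^2):=\det\bigl(\tfrac{\boldsymbol\Sigma_1+\boldsymbol\Sigma_2}{2}\bigr)\big/\sqrt{\det\boldsymbol\Sigma_1\cdot\det\boldsymbol\Sigma_2}$ as $\sigma^2\to 0$. Since $P_e\leq \tfrac12 R(\sigma^2)^{-1/2}$, it suffices to show that $R(\sigma^2)$ blows up like $(\sigma^2)^{-(d-r)}$ and to identify the constant in front. First I would use the fact that $\boldsymbol\Sigma_j = M_j+\sigma^2\I$ with $\rank M_j=d$, so $\det\boldsymbol\Sigma_j = (\sigma^2)^{n-d}\prod_{k=1}^{d}(\lambda_{j,k}+\sigma^2)=(\sigma^2)^{n-d}\prod_{k=1}^d\lambda_{j,k}+o\bigl((\sigma^2)^{n-d}\bigr)$. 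The denominator therefore contributes $(\sigma^2)^{n-d}\sqrt{\prod\lambda_{1,k}\prod\lambda_{2,k}}$ to leading order.

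The heart of the argument is the numerator. Write $\tfrac{\boldsymbol\Sigma_1+\boldsymbol\Sigma_2}{2}=M+\sigma^2\I$ with
\[
M=\tfrac12\bigl(\U_{1,\cap}\boldsymbol\Lambda_{1,\cap}\U_{1,\cap}^\top+\U_{2,\cap}\boldsymbol\Lambda_{2,\cap}\U_{2,\cap}^\top\bigr)+\tfrac12\bigl(\U_{1,\backslash}\boldsymbol\Lambda_{1,\backslash}\U_{1,\backslash}^\top+\U_{2,\backslash}\boldsymbol\Lambda_{2,\backslash}\U_{2,\backslash}^\top\bigr).
\]
The key structural observation is that $\U_{1,\backslash}$ and $\U_{2,\backslash}$ are both orthogonal to $\Xs_1\cap\Xs_2$ (by construction $\U_{i,\backslash}\perp\U_{i,\cap}$, and since $\U_{1,\cap}$ and $\U_{2,\cap}$ span the same subspace, each $\U_{i,\backslash}$ is orthogonal to both). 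Under the hypothesis $n\geq 2(d-r)$, the "difference" block has rank exactly $2(d-r)$, so $\rank M=r+2(d-r)=2d-r$ and
\[
\det(M+\sigma^2\I)=(\sigma^2)^{n-(2d-r)}\pdet(M)+o\bigl((\sigma^2)^{n-(2d-r)}\bigr).
\]
The orthogonality between the intersection block and the difference block then lets me factor $\pdet(M)=2^{-(2d-r)}\pdet(A_{\cap})\cdot\pdet(A_{\backslash})$, where $A_{\cap}=\U_{1,\cap}\boldsymbol\Lambda_{1,\cap}\U_{1,\cap}^\top+\U_{2,\cap}\boldsymbol\Lambda_{2,\cap}\U_{2,\cap}^\top$ and $A_{\backslash}=\U_{1,\backslash}\boldsymbol\Lambda_{1,\backslash}\U_{1,\backslash}^\top+\U_{2,\backslash}\boldsymbol\Lambda_{2,\backslash}\U_{2,\backslash}^\top$.

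Next I would compute $\pdet(A_{\backslash})$ explicitly to extract the sines of the principal angles. Writing $Q=[\U_{1,\backslash},\U_{2,\backslash}]$ and $D=\diag(\boldsymbol\Lambda_{1,\backslash},\boldsymbol\Lambda_{2,\backslash})$, the nonzero spectrum of $QDQ^\top$ coincides with that of $D^{1/2}Q^\top Q\,D^{1/2}$, whose determinant equals $\det(D)\cdot\det(Q^\top Q)$. After choosing principal vectors as bases, $Q^\top Q=\bigl[\begin{smallmatrix} \I & C\\ C^\top & \I\end{smallmatrix}\bigr]$ with $C=\diag(\cos\theta_{r+1},\ldots,\cos\theta_d)$, so by the Schur complement $\det(Q^\top Q)=\det(\I-C^\top C)=\prod_{i=r+1}^d\sin^2\theta_i$. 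This yields $\pdet(A_{\backslash})=\prod_{i=1}^{d-r}\lambda_{1,\backslash,i}\lambda_{2,\backslash,i}\cdot\prod_{i=r+1}^{d}\sin^2\theta_i$.

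Finally I would assemble $R(\sigma^2)$: the $(\sigma^2)^{n-d}$ and $(\sigma^2)^{n-(2d-r)}$ powers combine to give $(\sigma^2)^{-(d-r)}$, the factor $2^{-(2d-r)}$ contributes $2^{(2d-r)/2}$ after the $^{-1/2}$ in the Bhattacharyya bound, the ratio $\prod\lambda_{i,\backslash,i}\lambda_{2,\backslash,i}/\sqrt{\prod\lambda_{1,\backslash,i}\prod\lambda_{2,\backslash,i}}$ collapses to $\prod\sqrt{\lambda_{1,\backslash,i}\lambda_{2,\backslash,i}}$, and the $\pdet(A_{\cap})$ piece and the $\prod\lambda_{j,\cap,i}$ piece combine exactly as stated; the $\prod\sin^2\theta_i$ factor, raised to $-1/2$, yields the principal-angle factor in the theorem. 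The $o(\cdot)$ remainders in each determinant expansion propagate to an $o\bigl((\sigma^2)^{(d-r)/2}\bigr)$ residual. The step I expect to be the main obstacle is justifying the factorization of $\pdet(M)$ into intersection and difference pieces with the correct scalar factor $2^{-(2d-r)}$, since one has to be careful that the two blocks really do live on mutually orthogonal subspaces so that the rank adds and the pseudo-determinant multiplies; once that is nailed down the rest is bookkeeping.
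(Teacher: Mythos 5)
Your proposal is correct and follows essentially the same route as the paper's proof: expand the Bhattacharyya bound, extract the leading powers of $\sigma^2$ from $\det\boldsymbol\Sigma_j$ and $\det\bigl(\tfrac{\boldsymbol\Sigma_1+\boldsymbol\Sigma_2}{2}\bigr)$, split the pseudo-determinant of the rank-$(2d-r)$ part into an intersection block and a difference block via orthogonality, and evaluate the difference block's Gram determinant (which is where $n\geq 2(d-r)$ is used) by a Schur complement to produce $\prod_{i=r+1}^d\sin^2\theta_i$. The only cosmetic difference is that you factor $\det(D^{1/2}Q^\top Q D^{1/2})=\det(D)\det(Q^\top Q)$ and evaluate $\det(Q^\top Q)$ in a principal-vector basis, whereas the paper performs the Schur complement with the $\boldsymbol\Lambda_{\backslash}$ factors kept in place; these are equivalent computations.
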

\begin{proof}
	The method is to expand the Bhattacharyya bound in terms of principal angles, and the details are provided in Appendix \ref{proof:highSNR}. 
\end{proof}
\begin{remark}
	\begin{enumerate}
		\item
			Typically $n\gg d$ for measured data, so the condition $n\geq 2(d-r)$ is usually satisfied.
		\item
			The classification error is upper bounded by $(\sigma^2)^{d-r\over2}$; the smaller the overlap between subspaces, the easier it is to discriminate between classes. When two subspaces overlap completely, there is an error floor.
	\end{enumerate}
\end{remark}
There is a duality between the GMM classification problem and multiple antenna communication~\cite{Tarokh1998}.
In multiple antenna communications, a codeword is a $d\times n$ array, where the rows are indexed by transmit antennas, the columns are indexed by time slots in a data frame, and the entries are the symbols to be transmitted. The probability of mistaking codeword $C_i$ for codeword $C_j$, ${\Pr}(i\to j)$, satisfies
\[
	{\Pr}(i\to j) \leq (\sigma^2/2)^k(1/\lambda_1^2\dots\lambda_k^2),
\]
where $k$ is the rank of $C_i-C_j$, whose singular values are $\lambda_1,\dots,\lambda_k$. 
The primary objective in code design for multiple antenna wireless communication is to maximize the minimum rank of the difference between distinct codewords. If the minimum rank is $k$, the code is said to achieve a diversity gain of $k$. 

An important secondary objective in code design for multiple antenna wireless communication is to maximize the minimum product of the singular values of the difference between distinct codewords. This minimum product determines the coding gain.
			
The counterpart of coding gain in classification is the product of sines of the principal angles. This quantity determines the intercept of the error exponent with the vertical axis. The smaller the energy in the intersection of the subspaces, the smaller is the classification error. The larger the principal angles, the smaller is the classification error.

\subsection{The Low SNR Regime}
\label{sec:lowSNR}
This is the case where the noise variance $\sigma^2$ and the singular values are commensurable; in other words, the mismatch between the signal and the empirical model cannot be neglected.
The MAP classifier in this case is characterized by the following theorem.
\begin{theorem}
	\label{thm:lowSNR}
	When $\sigma^2$ is sufficiently large, the Bhattacharyya upper bound is sandwiched between 
	\[\underline{P_e}^{UB}={1\over2}\exp\left\{-{1\over\sigma^4}\left(c_2-{1\over 16}\lambda_{1,1}\lambda_{2,1}\sum_{i=1}^d\cos^2\theta_i \right)\right\}\]\\
	and
	\[\overline{P_e}^{UB}={1\over2}\exp\left\{-{1\over\sigma^4}\left(c_3-{1\over 8}\lambda_{1,1}\lambda_{2,1}\sum_{i=1}^d\cos^2\theta_i \right)\right\},\]
	where $\overline{P_e}^{UB}>\underline{P_e}^{UB}$. And the constants $c_2$ and $c_3$ are given by
	\[
	\begin{aligned} 
		c_2 =& {\sigma^4\over4}\left[\sum_{i=1}^d{\lambda_{1,i}\over\sigma^2}-{1\over2}\sum_{i=1}^d \left({\lambda_{1,i}\over 2\sigma^2}\right)^2-\sum_{i=1}^d\ln\left(1+{\lambda_{1,i}\over\sigma^2}\right) \right]\\
		&+{\sigma^4\over4}\left[\sum_{i=1}^d{\lambda_{2,i}\over\sigma^2}-{1\over2}\sum_{i=1}^d \left({\lambda_{2,i}\over 2\sigma^2}\right)^2
		-\sum_{i=1}^d\ln\left(1+{\lambda_{2,i}\over\sigma^2}\right) \right]
	\end{aligned}\]
	\[\begin{aligned}
		c_3=& {\sigma^4\over4}\left[\sum_{i=1}^d{\lambda_{1,i}\over\sigma^2}-\sum_{i=1}^d \left({\lambda_{1,i}\over 2\sigma^2}\right)^2
		-\sum_{i=1}^d\ln\left(1+{\lambda_{1,i}\over\sigma^2}\right) \right]\\
		&+{\sigma^4\over4}\left[\sum_{i=1}^d{\lambda_{2,i}\over\sigma^2}-\sum_{i=1}^d \left({\lambda_{2,i}\over 2\sigma^2}\right)^2
		-\sum_{i=1}^d\ln\left(1+{\lambda_{2,i}\over\sigma^2}\right) \right].
	\end{aligned}
	\]
\end{theorem}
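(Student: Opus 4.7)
The plan is to expand the Bhattacharyya exponent $K$ of \eqref{eq:Bbound} as an asymptotic series in $1/\sigma^2$ --- legitimate because $\sigma^2$ is large --- and to isolate the contribution of the principal angles into a single trace that can be two-sided.

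First I would factor $\sigma^2\I$ out of each covariance. Writing $\ln\det\boldsymbol\Sigma_j = n\ln\sigma^2 + \sum_{i=1}^d\ln(1+\lambda_{j,i}/\sigma^2)$ and $\ln\det\tfrac{\boldsymbol\Sigma_1+\boldsymbol\Sigma_2}{2} = n\ln\sigma^2 + \ln\det(\I+\mathbf{N}_1+\mathbf{N}_2)$, where $\mathbf{N}_j := \tfrac{1}{2\sigma^2}\U_j\boldsymbol\Lambda_j\U_j^\top$, the $n\ln\sigma^2$ terms cancel and
\[ K = \tfrac{1}{2}\ln\det(\I+\mathbf{N}_1+\mathbf{N}_2) - \tfrac{1}{4}\sum_{j=1}^{2}\sum_{i=1}^d\ln(1+\lambda_{j,i}/\sigma^2). \]
I would retain the denominator logarithms in closed form, since they are exactly the $\ln(1+\lambda_{j,i}/\sigma^2)$ factors appearing inside $c_2$ and $c_3$; only the first term needs approximating. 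Applying the scalar bracket $x-\tfrac{x^2}{2}\leq\ln(1+x)\leq x-\tfrac{x^2}{2}+\tfrac{x^3}{3}$ eigenvalue-wise to the PSD matrix $\mathbf{M}:=\mathbf{N}_1+\mathbf{N}_2$ gives
\[ \tr\mathbf{M} - \tfrac{1}{2}\tr\mathbf{M}^2 \;\leq\; \ln\det(\I+\mathbf{M}) \;\leq\; \tr\mathbf{M} - \tfrac{1}{2}\tr\mathbf{M}^2 + O(\sigma^{-6}), \]
and the $O(\sigma^{-6})$ tail is subdominant in the regime of interest. Substituting $\tr(\mathbf{N}_1+\mathbf{N}_2)=\tfrac{1}{2\sigma^2}\sum_{j,i}\lambda_{j,i}$ and $\tr\mathbf{N}_j^2=\tfrac{1}{4\sigma^4}\sum_i\lambda_{j,i}^2$ reproduces exactly the $\sigma$-dependent scaffolding of $c_2$ and $c_3$, which differ only in whether the quadratic coefficient carries an extra factor of $\tfrac{1}{2}$.

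The principal angles enter only through the cross trace $\tr(\mathbf{N}_1\mathbf{N}_2)$. Setting $\mathbf{P} := \U_1^\top\U_2 \in \R^{d\times d}$, whose singular values are $\cos\theta_1,\dots,\cos\theta_d$,
\[ \tr(\mathbf{N}_1\mathbf{N}_2) = \tfrac{1}{4\sigma^4}\tr(\boldsymbol\Lambda_1\mathbf{P}\boldsymbol\Lambda_2\mathbf{P}^\top) = \tfrac{1}{4\sigma^4}\sum_{i,k}\lambda_{1,i}\lambda_{2,k}P_{ik}^2. \]
Because $\boldsymbol\Lambda_1\mathbf{P}\boldsymbol\Lambda_2\mathbf{P}^\top$ is PSD and $\boldsymbol\Lambda_j\preceq\lambda_{j,1}\I$, one obtains the two-sided bound
\[ 0 \;\leq\; \tr(\boldsymbol\Lambda_1\mathbf{P}\boldsymbol\Lambda_2\mathbf{P}^\top) \;\leq\; \lambda_{1,1}\lambda_{2,1}\|\mathbf{P}\|_F^2 = \lambda_{1,1}\lambda_{2,1}\sum_{i=1}^d\cos^2\theta_i. \]
Plugging the extreme values into the expansion and then applying $P_e\leq\tfrac{1}{2}e^{-K}$ yields the sandwich: pairing the lower side of the Taylor bracket with the maximal cross trace produces the smallest $K$, hence $\overline{P_e}^{UB}$ with the $-\tfrac{1}{8}$ coefficient; pairing the upper side with the minimal (zero) cross trace produces the largest $K$, hence $\underline{P_e}^{UB}$ with the $-\tfrac{1}{16}$ coefficient.

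The main obstacle will be the bookkeeping. The quadratic trace splits as $\tr\mathbf{M}^2 = \tr\mathbf{N}_1^2 + \tr\mathbf{N}_2^2 + 2\tr(\mathbf{N}_1\mathbf{N}_2)$, and only the cross summand carries principal-angle dependence, so the two diagonal pieces must be carried intact into $c_2$ and $c_3$ while the cross piece is routed separately through the trace inequality above. The delicate final step is checking that the combined approximation error from the Taylor bracket and from the elementary trace bound is subdominant when $\sigma^2$ is sufficiently large, so that the $O(\sigma^{-6})$ tail of $\ln\det(\I+\mathbf{M})$ is dominated by the leading $O(\sigma^{-4})$ behavior of $K$ itself, justifying both sides of the sandwich.
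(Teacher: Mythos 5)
Your overall skeleton matches the paper's: split $K$ into $\tfrac12\ln\det(\I+\mathbf{M})$ minus the exact sums $\sum_i\ln(1+\lambda_{j,i}/\sigma^2)$, bound the joint log-determinant through traces of $\mathbf{M}$ and $\mathbf{M}^2$, and isolate the principal angles in the cross trace. But two of your specific estimates cannot produce the constants in the statement, and one is not even a valid bound in the direction you need. The factor-of-two discrepancy between $c_2$ and $c_3$ (and between $1/16$ and $1/8$) comes precisely from an \emph{asymmetric} quadratic bracket, $x-\tfrac{x^2}{2}\le\ln(1+x)\le x-\tfrac{x^2}{4}$ for $x\in[0,1]$, which the paper applies eigenvalue-wise to get $\tr\mathbf{D}-\tfrac12\tr\mathbf{D}^2\le\ln\det(\I+\mathbf{D})\le\tr\mathbf{D}-\tfrac14\tr\mathbf{D}^2$. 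Your bracket $x-\tfrac{x^2}{2}\le\ln(1+x)\le x-\tfrac{x^2}{2}+\tfrac{x^3}{3}$ carries the same coefficient $-\tfrac12$ on both sides, so it cannot generate $c_2\neq c_3$; and you cannot discard the $+x^3/3$ term as ``subdominant,'' because $x-\tfrac{x^2}{2}$ is a \emph{lower} bound for $\ln(1+x)$, so dropping the positive cubic term destroys the upper bound. The theorem is an exact sandwich with specified constants, not an asymptotic expansion, so the $O(\sigma^{-6})$ remainder has nowhere to be absorbed.

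Second, on the side producing $\underline{P_e}^{UB}$ you need a \emph{lower} bound on $\tr(\U_1\boldsymbol\Lambda_1\U_1^\top\U_2\boldsymbol\Lambda_2\U_2^\top)$ that still carries the angle sum. The paper's trace lemma gives the two-sided estimate $\lambda_{1,d}\lambda_{2,d}\sum_i\cos^2\theta_i\le\tr(\U_1\boldsymbol\Lambda_1\U_1^\top\U_2\boldsymbol\Lambda_2\U_2^\top)\le\lambda_{1,1}\lambda_{2,1}\sum_i\cos^2\theta_i$, obtained from $\lambda_{\min}(\A)\tr({\bf B})\le\tr(\A{\bf B})\le\lambda_{\max}(\A)\tr({\bf B})$ for positive semidefinite $\A,{\bf B}$. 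Your lower bound of $0$ erases the $\sum_i\cos^2\theta_i$ term from $\underline{P_e}^{UB}$ entirely, which contradicts the $-\tfrac{1}{16}\lambda_{1,1}\lambda_{2,1}\sum_i\cos^2\theta_i$ you claim to land on; as written, your own pairing (``upper Taylor side with the minimal (zero) cross trace'') yields a bound with no principal-angle dependence at all. To recover the stated result you need both the asymmetric quadratic bracket and the two-sided eigenvalue--trace inequality. (A separate caveat: the paper's own appendix derives the upper bound on $K$ with $\lambda_{1,d}\lambda_{2,d}$ rather than the $\lambda_{1,1}\lambda_{2,1}$ appearing in the theorem statement, but that inconsistency is the paper's, not yours.)
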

\begin{proof}
	The details are given in appendix~\ref{proof:lowSNR}.
\end{proof}

\begin{remark}
	The dimension of the overlap between the two subspaces plays a less important role in the low SNR regime, and classification error is a function of chordal distance. This gives rise to an interesting duality between GMM model based classification and the space-time decoding~\cite{Packing_RMcodes}, where error probability is influenced by product or sum diversity in high or low SNR regime respectively.
\end{remark}

\subsection{\textcolor{\chcolor}{The Moderate SNR Regime}}
\label{sec:moderateSNR}
{\color{\chcolor}
We now consider a moderate noise/mismatch regime, where ${p\over c(p)}\leq\frac{\lambda_{1,j}}{\sigma^2}, \frac{\lambda_{2,j}}{\sigma^2}\leq p$ for $j=1,\dots,d$ and $p>1,c(p)>1$.
Moderate SNR also implies that $p$ is not very large.

The most important element in the analysis of classification error is to lower bound the term $\ln\det\left(\frac{\boldsymbol{\Sigma}_1+\boldsymbol{\Sigma}_2}{2}\right)$ in Eq.~\eqref{eq:Bbound},
\[
\begin{aligned}
	\ln\det\left(\frac{\boldsymbol{\Sigma}_1+\boldsymbol{\Sigma}_2}{2}\right) =& \ln\det\left(\I+\frac{\U_1\boldsymbol{\Lambda}_1 \U_1^\top+\U_2\boldsymbol{\Lambda}_2 \U_2^\top}{2\sigma^2}\right)\\
	&+n\ln\sigma^2.
\end{aligned}
\]
Denote the non-zero singular values of $\mathbf{D}\triangleq{1\over2\sigma^2}(\U_1\boldsymbol{\Lambda}_1 \U_1^\top+\U_2\boldsymbol{\Lambda}_2 \U_2^\top)$ by $\lambda_1,\dots,\lambda_{2d-r}$. Then 
\ben
\ln\det\left(\frac{\boldsymbol{\Sigma}_1+\boldsymbol{\Sigma}_2}{2}\right)=\sum_{i=1}^{2d-r}\ln(1+\lambda_i)+n\ln(\sigma^2).
\een
The following lemma provides a lower bound on $\ln(1+\lambda_i)$.
\begin{lemma}
	\label{lemma:ell(p)}
	There exists $0\leq L<{p-1\over2}$ such that for any $\lambda_i\in[L,p]$, 
	\ben
	\ln(1+\lambda_i)\geq \ln(1+p)+{1\over 1+p}(\lambda_i-p)-{1\over(1+p)^2}(\lambda_i-p)^2.
	\label{eq:expansion_p}
	\een 
\end{lemma}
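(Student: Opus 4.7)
The plan is to reduce the bound to a $p$-free inequality by rescaling. Set $\mu = (1+\lambda_i)/(1+p)$, so that $\lambda_i - p = (1+p)(\mu - 1)$ and $\ln(1+\lambda_i) = \ln(1+p) + \ln\mu$. Substituting into \eqref{eq:expansion_p}, the right-hand side loses its dependence on $p$ and the desired inequality becomes equivalent to
\[
\psi(\mu) \;:=\; \ln\mu + (1 - \mu) + (\mu - 1)^2 \;\ge\; 0.
\]
Under this change of variables, $\lambda_i \le p$ corresponds to $\mu \le 1$, the range $\lambda_i \ge 0$ corresponds to $\mu \ge 1/(1+p)$, and the condition $L < (p-1)/2$ translates exactly to requiring the left endpoint in $\mu$-coordinates to lie strictly below $1/2$.

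I would then analyze $\psi$ directly. A routine computation gives $\psi(1)=0$, $\psi'(\mu) = (2\mu-1)(\mu-1)/\mu$, and $\psi''(\mu) = 2 - 1/\mu^2$. The critical points on $(0,\infty)$ are therefore exactly $\mu = 1/2$ and $\mu = 1$, and the sign pattern of $\psi'$ shows that $\psi$ increases on $(0,1/2)$, decreases on $(1/2,1)$, and increases on $(1,\infty)$, with a local maximum at $\mu = 1/2$ and a local minimum at $\mu = 1$.

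The key numerical fact is the explicit evaluation
\[
\psi\!\left(\tfrac{1}{2}\right) \;=\; -\ln 2 + \tfrac{1}{2} + \tfrac{1}{4} \;=\; \tfrac{3}{4} - \ln 2 \;>\; 0.
\]
Because $\psi(0^+) = -\infty$ and $\psi$ is strictly increasing on $(0,1/2)$, there is a unique $\mu^{\star} \in (0,1/2)$ with $\psi(\mu^{\star}) = 0$. Monotonicity then yields $\psi \ge 0$ on $[\mu^{\star},1/2]$ (increasing from $0$ to $3/4-\ln 2$) and on $[1/2,1]$ (decreasing from $3/4 -\ln 2$ to $0$), hence on the entire interval $[\mu^{\star},1]$. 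Translating back, the inequality \eqref{eq:expansion_p} holds for every $\lambda_i \in [(1+p)\mu^{\star}-1,\,p]$, and since $\mu^{\star} < 1/2$ we have $(1+p)\mu^{\star}-1 < (1+p)/2 - 1 = (p-1)/2$. Setting $L := \max\{0,\,(1+p)\mu^{\star}-1\}$ produces a valid choice with $0 \le L < (p-1)/2$, where $p>1$ is used to ensure that the target interval is nonempty.

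The main obstacle is that the naive second-order Taylor remainder for $\ln(1+\lambda_i)$ about $p$ only gives the inequality with coefficient $-1/(2(1+p)^2)$ in front of $(\lambda_i - p)^2$, and even then only for $\lambda_i$ above roughly $(1+p)/\sqrt{2} - 1$, which is strictly larger than $(p-1)/2$. One therefore cannot prove the lemma by a direct Taylor estimate: the coarser coefficient $-1/(1+p)^2$ is exactly what allows $L$ to be pushed below $(p-1)/2$, and the universal constant $3/4-\ln 2$ that makes this possible is only visible after the $p$-free rescaling to $\psi$.
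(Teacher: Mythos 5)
Your proof is correct, and at its core it is the same argument as the paper's: the paper defines $f(\lambda_i)=\ln(1+\lambda_i)-\ln(1+p)-\frac{1}{1+p}(\lambda_i-p)+\frac{1}{(1+p)^2}(\lambda_i-p)^2$ and shows $f'(\lambda_i)=\frac{(p-\lambda_i)(p-1-2\lambda_i)}{(1+\lambda_i)(1+p)^2}$, so that $f$ increases on $[0,\frac{p-1}{2})$, decreases on $(\frac{p-1}{2},p]$, and vanishes at $p$ --- exactly the unimodal picture you obtain for $\psi$, since your substitution $\mu=(1+\lambda_i)/(1+p)$ carries $\frac{p-1}{2}\mapsto\frac12$ and $p\mapsto 1$ and turns $f$ into $\psi$. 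What your rescaling genuinely buys, and the paper does not state, is that the relevant zero $\mu^\star\approx 0.316$ of $\psi$ is a \emph{universal} constant independent of $p$, giving the closed form $L(p)=\max\{0,(1+p)\mu^\star-1\}$; this makes the paper's numerically reported values of $c(p)=\frac{p}{2L(p)}$ (e.g.\ $3.44$ at $p=4$, $2.79$ at $p=5$) immediate rather than case-by-case computations. One small correction to your closing remark: the second-order Taylor lower bound $\ln\mu\geq(\mu-1)-\frac12(\mu-1)^2$ does not hold down to roughly $(1+p)/\sqrt{2}-1$ in $\lambda$-coordinates --- the difference has derivative $(\mu-1)^2/\mu\geq0$ and vanishes at $\mu=1$, so that bound in fact fails for \emph{every} $\lambda_i<p$; this only reinforces your point that the coarser coefficient $-\frac{1}{(1+p)^2}$ is essential.
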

\begin{proof}
	See Appendix~\ref{sec:proofModerateSNR}.
\end{proof}
Let $L(p)$ be the smallest possible value of $L$, define $c(p)={p\over2L(p)}$ if $L(p)>0$ and $c(p)=+\infty$ if $L(p)=0$. Note that $c(p)>1$ since $L(p)<{p-1\over2}$.
\begin{theorem}
	\label{thm:moderateSNR}
	If ${p\over c(p)}\leq {\lambda_{1,i}\over\sigma^2}, {\lambda_{2,i}\over\sigma^2}\leq p$,
	then the classification error is upper bounded as
	\[
	P_e\leq {1\over2}\exp\left\{-c_4(2d-r)+{\lambda_{1,1}\lambda_{2,1}\over4\sigma^4(1+p)^2}\sum_i\cos^2\theta_i+c_5  \right\},
	\]
	where $c_4={1\over2}\left[\ln(1+p)-{p\over1+p}-{p^2\over(1+p)^2} \right]$ and $c_5$ depends on $p$ and $\lambda_{1,i}\over\sigma^2$, $\lambda_{2,i}\over\sigma^2$.
\end{theorem}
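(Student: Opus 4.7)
The plan is to lower-bound the Bhattacharyya exponent $K$ in Eq.~\eqref{eq:Bbound} by $c_4(2d-r)-\lambda_{1,1}\lambda_{2,1}[4\sigma^4(1+p)^2]^{-1}\sum_i\cos^2\theta_i-c_5$, then invoke $P_e\le\tfrac12 e^{-K}$. Using the decomposition preceding Lemma~\ref{lemma:ell(p)} together with $\ln\det\boldsymbol\Sigma_j=\sum_i\ln(1+\lambda_{j,i}/\sigma^2)+n\ln\sigma^2$, the $n\ln\sigma^2$ contributions cancel between the numerator and denominator of $K$, yielding
\[
K = \tfrac12\sum_{i=1}^{2d-r}\ln(1+\lambda_i) - \tfrac14\sum_{i=1}^{d}\left[\ln\!\left(1+\tfrac{\lambda_{1,i}}{\sigma^2}\right)+\ln\!\left(1+\tfrac{\lambda_{2,i}}{\sigma^2}\right)\right].
\]
The second sum depends only on $p$ and the ratios $\lambda_{j,i}/\sigma^2$, and will end up inside $c_5$.

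Assuming each nonzero eigenvalue of $\mathbf{D}$ lies in $[L(p),p]$, I would apply Lemma~\ref{lemma:ell(p)} termwise to the first sum to obtain
\[
\sum_i\ln(1+\lambda_i)\ge (2d-r)\ln(1+p)+\frac{\tr(\mathbf{D})-(2d-r)p}{1+p}-\frac{\sum_i(\lambda_i-p)^2}{(1+p)^2}.
\]
Expanding $\sum_i(\lambda_i-p)^2=\tr(\mathbf{D}^2)-2p\,\tr(\mathbf{D})+(2d-r)p^2$ isolates the only angle-dependent piece, $\tr(\mathbf{D}^2)$. A direct calculation gives
\[
\tr(\mathbf{D}^2)=\frac{1}{4\sigma^4}\left[\|\boldsymbol\Lambda_1\|_F^2+\|\boldsymbol\Lambda_2\|_F^2+2\,\tr(\boldsymbol\Lambda_1\U_1^\top\U_2\boldsymbol\Lambda_2\U_2^\top\U_1)\right],
\]
and the cross term is bounded using the fact that the singular values of $\U_1^\top\U_2$ are exactly $\cos\theta_i$: with $\mathbf{M}=\U_1^\top\U_2$,
\[
\tr(\boldsymbol\Lambda_1\mathbf{M}\boldsymbol\Lambda_2\mathbf{M}^\top)=\sum_{i,j}\lambda_{1,i}\lambda_{2,j}\mathbf{M}_{ij}^2\le\lambda_{1,1}\lambda_{2,1}\|\mathbf{M}\|_F^2=\lambda_{1,1}\lambda_{2,1}\sum_i\cos^2\theta_i.
\]
Collecting terms, the $\ln(1+p)$, $p/(1+p)$ and $p^2/(1+p)^2$ contributions combine into $2c_4(2d-r)$; the cross term enters $-K$ with coefficient $+\lambda_{1,1}\lambda_{2,1}/[4\sigma^4(1+p)^2]$, matching the statement; and everything else, namely $\|\boldsymbol\Lambda_j\|_F^2$, $\tr(\boldsymbol\Lambda_j)$, and the $\ln(1+\lambda_{j,i}/\sigma^2)$ terms, depends only on $p$ and the ratios $\lambda_{j,i}/\sigma^2$ and is absorbed into $c_5$.

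The main obstacle is verifying the range hypothesis needed for Lemma~\ref{lemma:ell(p)}, namely that every nonzero eigenvalue of $\mathbf{D}$ lies in $[L(p),p]$. The upper bound is immediate from $\|\mathbf{D}\|_{\mathrm{op}}\le(\lambda_{1,1}+\lambda_{2,1})/(2\sigma^2)\le p$. The lower bound is where the assumption $\lambda_{j,i}/\sigma^2\ge p/c(p)=2L(p)$ enters: it implies $\mathbf{D}\succeq L(p)(\U_1\U_1^\top+\U_2\U_2^\top)$, and a Weyl/interlacing argument restricted to the $(2d-r)$-dimensional common range $\Xs_1+\Xs_2$ is needed to transfer this into the required lower bound on the smallest nonzero eigenvalue of $\mathbf{D}$. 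Once applicability of the lemma is secured, the remainder reduces to the algebraic bookkeeping outlined above.
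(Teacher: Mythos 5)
Your overall route is the paper's: lower-bound the exponent $K$ in Eq.~\eqref{eq:Bbound} by writing $\ln\det\left(\frac{\boldsymbol\Sigma_1+\boldsymbol\Sigma_2}{2}\right)=n\ln\sigma^2+\sum_{i=1}^{2d-r}\ln(1+\lambda_i)$, applying Lemma~\ref{lemma:ell(p)} termwise, expanding $\sum_i(\lambda_i-p)^2$ through $\tr(\mathbf D)$ and $\tr(\mathbf D^2)$, and isolating the angle dependence in the cross term of $\tr(\mathbf D^2)$. Your bookkeeping is correct and reproduces $c_4$, $c_5$ and the coefficient $\lambda_{1,1}\lambda_{2,1}/[4\sigma^4(1+p)^2]$. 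The one genuinely different micro-step is the cross-term bound: you expand $\tr(\boldsymbol\Lambda_1{\bf M}\boldsymbol\Lambda_2{\bf M}^\top)=\sum_{i,j}\lambda_{1,i}\lambda_{2,j}{\bf M}_{ij}^2\le\lambda_{1,1}\lambda_{2,1}\|{\bf M}\|_F^2$ entrywise, whereas the paper invokes Lemma~\ref{traceIneq} (Kleinman's trace inequality, Eq.~\eqref{eq:useLemma}). Your version is more elementary and equally valid.

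The gap is exactly the step you flag: showing that every nonzero eigenvalue $\lambda_i$ of $\mathbf D$ lies in $[L(p),p]$ so that Lemma~\ref{lemma:ell(p)} applies. The route you sketch cannot close it: from $\mathbf D\succeq L(p)(\U_1\U_1^\top+\U_2\U_2^\top)$ and the fact that the nonzero eigenvalues of $\U_1\U_1^\top+\U_2\U_2^\top$ are $1\pm\cos\theta_i$, restricting to $\Xs_1+\Xs_2$ only yields a lower bound of $L(p)(1-\cos\theta_{r+1})$ on the smallest nonzero eigenvalue of $\mathbf D$, which can be arbitrarily smaller than $L(p)$ when the smallest nonzero principal angle is small. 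The paper instead applies Weyl's inequality directly to the two PSD summands $\U_1\boldsymbol\Lambda_1\U_1^\top/\sigma^2$ and $\U_2\boldsymbol\Lambda_2\U_2^\top/\sigma^2$: since the smallest eigenvalue of each summand is $0$, Weyl gives $\gamma_i\geq\mu_i+\nu_n=\mu_i\geq p/c(p)$, hence $\lambda_i=\gamma_i/2\geq p/(2c(p))=L(p)$. Note that even this argument only directly controls the $d$ largest eigenvalues of $\mathbf D$ (for $i>d$ both $\mu_i$ and $\nu_i$ vanish and Weyl yields nothing), so your instinct that this is the delicate point of the proof is well founded; but as written your proposal supplies neither the paper's argument nor a working substitute, so the applicability of Lemma~\ref{lemma:ell(p)} to all $2d-r$ eigenvalues remains unjustified.
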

\begin{proof}
	See Appendix~\ref{sec:proofModerateSNR}.
\end{proof}
\begin{remark}
	It is straightforward to show numerically that $c(p)=3.44$, $2.79$ for $p=4,5$ respectively, that $c(p)\geq 2.02$ for $p\leq 10$, and that $c(p)\geq 1.61$ for $p\leq 100$.
	The form of the upper bound suggests that in the moderate SNR regime, the role of chordal distance is more important than the product of the sines of the principal angles.
\end{remark}
}

\subsection{Numerical Analysis of Synthetic Data}
\label{sec:MAP_nume}
We explore the difference between classification in the low and high SNR regimes through a simple numerical example. Consider the following pairs of subspaces: \\
case 1:
\[
\U_1=\bmx 1 & 0 & 0 & 0 \\0 &1 &0 &0 \emx^\top \quad \U_2=\bmx 1 & 0 & 0 & 0 \\0 &0 &1 &0 \emx^\top.
\]
case 2:
\[
\U_1=\bmx 1 & 0 & 0 & 0 \\0 &1 &0 &0 \emx^\top \quad \U_2={1\over\sqrt{2}}\bmx 1 & 0 & 0 & -1 \\0 &1 &1 &0 \emx^\top.
\]
We set $\boldsymbol\Lambda_1=\boldsymbol\Lambda_2=\I$ for both cases. 
In case 1, the two principal angles are $\theta_1=0, \theta_2=\pi/2$ and in case 2, the two principal angles are $\theta_1=\pi/4, \theta_2=\pi/4$. The chordal distances in these two cases are the same, but in case 1 the product of sines of non-zero principal angles is 1, whereas in case 2 it is 1/2.
However, there is a nontrivial intersection dimension in case 1. 
The product of nonzero sine principal angles is $1$ for case 1, and ${1\over2}$ for case 2.

We vary the degree of mismatch $\sigma^2$, and evaluate the bounds developed in the above \textcolor{\chcolor}{three} theorems.
In the high SNR regime, we plot the empirical misclassification probability $P_e$ with the value $
c_1 (\sigma^2)^{d-r\over2}\left(\prod_{i=r+1}^d\sin^2\theta_i\right)^{-{1\over2}}$ given in Theorem \ref{thm:highSNR}.
In the low SNR regime, we plot the upper bound $\overline{P_e}^{UB}$ in Theorem \ref{thm:lowSNR}.
\textcolor{\chcolor}{
In the moderate SNR regime, we take $p=6$, and we vary
$\sigma^2$ between ${1\over p}$ and ${c(p)\over p}$, so that ${p\over c(p)} \leq {\lambda_{1,i}\over\sigma^2}, {\lambda_{2,i}\over\sigma^2} \leq p$. We then plot the upper bound in Theorem~\ref{thm:moderateSNR}, against the empirical classification error.}
In the high SNR regime (Fig.~\ref{fig:MAP_Pe_highSNR}), the classification error decays faster in Case 2 than in Case 1, consistent with Theorem \ref{thm:highSNR}. 
In the low SNR regime (Fig.~\ref{fig:MAP_Pe_lowSNR}), there is little difference in classification error between the two cases, consistent with Theorem \ref{thm:lowSNR}. 
\textcolor{\chcolor}{
In the moderate SNR regime (Fig.~\ref{fig:MAP_Pe_lowSNR}), 
classification performance in case 1 is inferior to that in case2, because there is a shared 1-dimensional subspace, and this is predicted by Theorem~\ref{thm:moderateSNR}.}
\begin{figure}[h!]
	\vspace{-20pt}
	\subfloat[high SNR regime]{\label{fig:MAP_Pe_highSNR}\includegraphics[width=0.33\columnwidth]{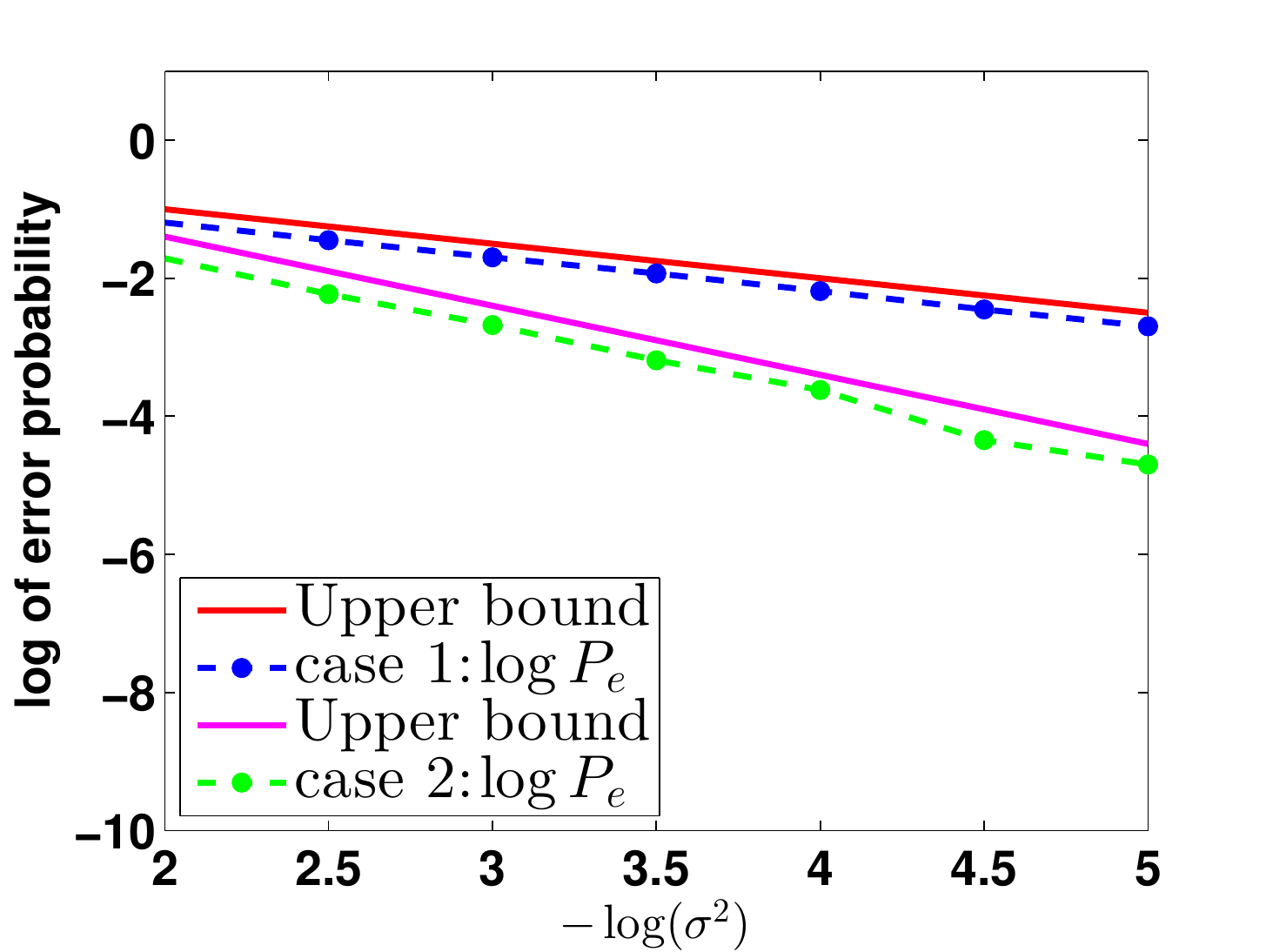}}
	\subfloat[low SNR regime]{\label{fig:MAP_Pe_lowSNR}\includegraphics[width=0.33\columnwidth]{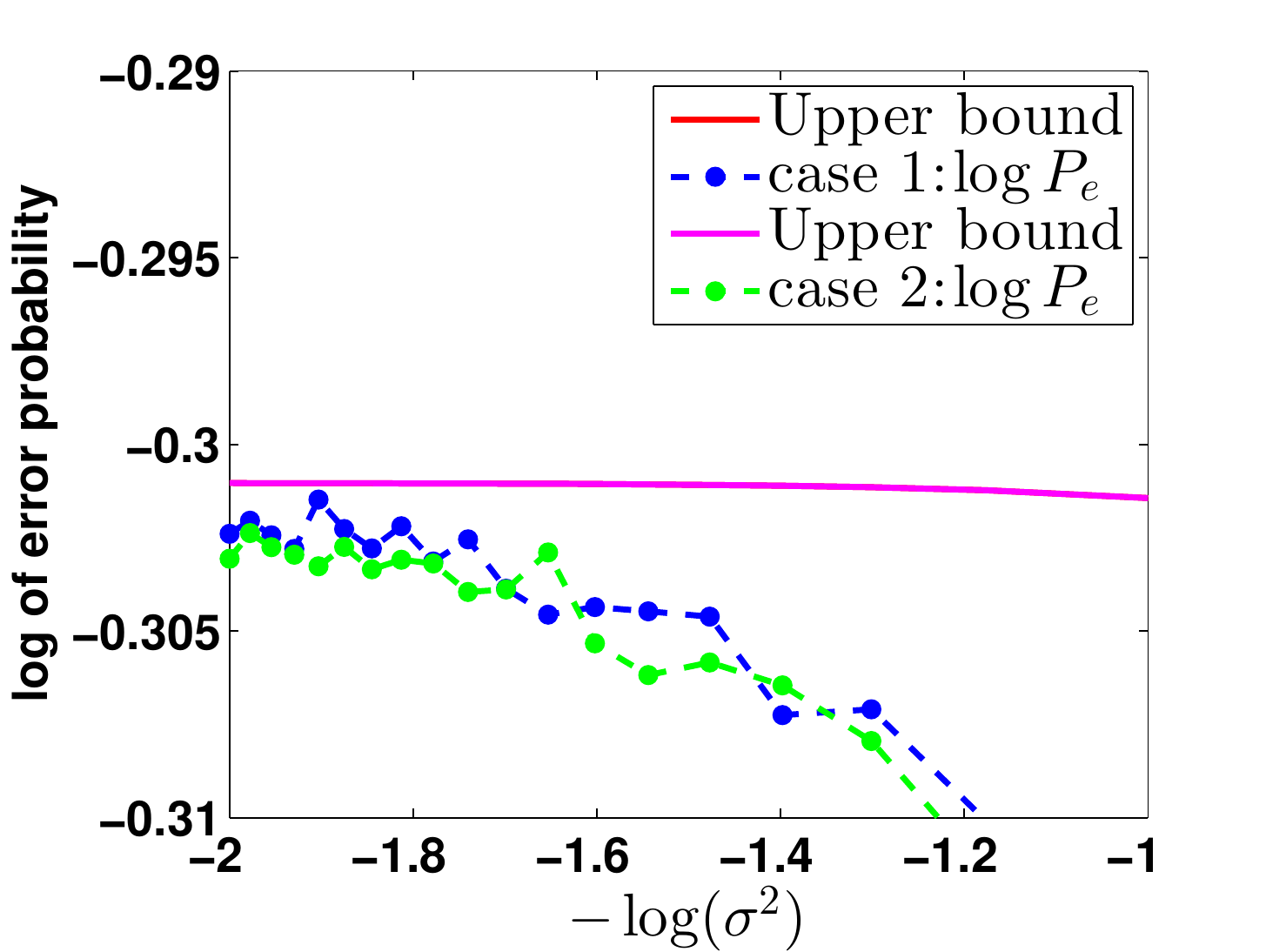}}
	\subfloat[\textcolor{\chcolor}{moderate SNR regime}]{\label{fig:MAP_Pe_moderateSNR}\includegraphics[width=0.33\columnwidth]{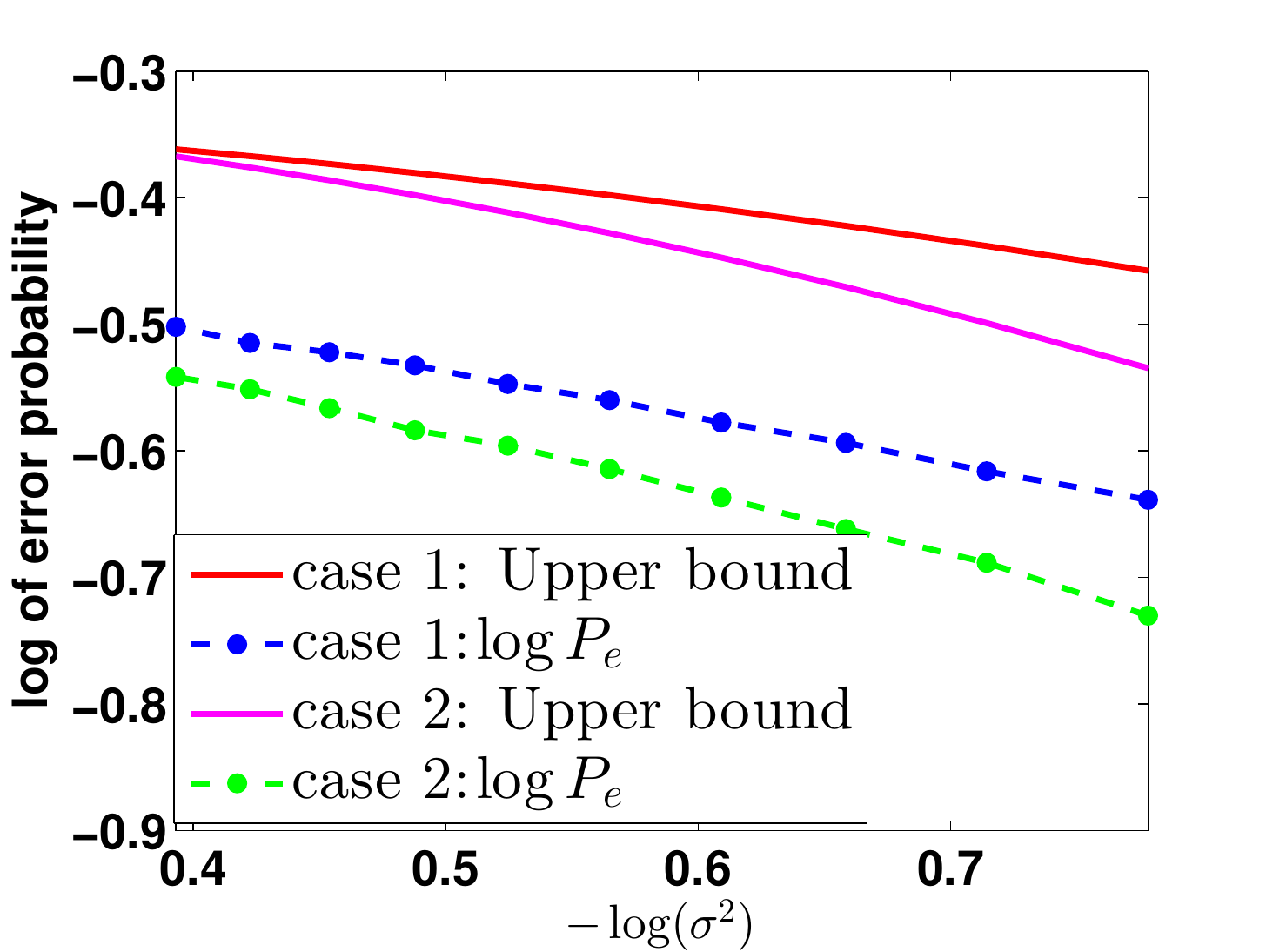}}
	\caption{
		 Error probability as a function of the degree of mismatch. Dashed lines represent empirical estimates, and solid lines represent upper bounds. In the low SNR regime the two upper bounds coincide.
		}
	\label{fig:MAP_Pe}
	\vspace{-10pt}
\end{figure}

Concluding this section, we have characterized the pair-wise classification error using the principal angles between a pair of subspaces. The union bound then makes it possible to derive an upper bound on classification error for multiple classes.

\section{Nearest Subspace Classifier: extending GMM}
\label{sec:NSC}
If the class distribution is known (for example through its covariance) then the MAP classifier is optimal. 
If however we only know that each class is near a known low-dimensional subspace (possibly inferred from less training data) then we can substitute a Nearest Subspace Classifier (NSC) for the MAP. This Section connects performance of the NSC with principal angles, and for simplicity we focus on discriminating pairs of classes, given that the extension to multiple classes is straightforward.

\textcolor{\chcolor}{
Consider two classes, labeled $C_1$ and $C_2$, distributed near two subspaces with orthonormal bases $\U_1,\U_2\in\R^{n\times d}$. 
The NSC determines the class label of a test sample $\x$, $\hat\C$, by comparing the norms of the projections onto $\U_1$ and $\U_2$.
\ben
\hat\C =\left\{\begin{array}{ll}  
	\C_1 & \|\U_1^\top \x\|^2\geq \|\U_2^\top \x\|^2\\
	\C_2 & \mbox{otherwise}
\end{array}\right..
\een
The preferred class label has a basis that is better aligned to the signal.}

\subsection{Derivation of the Upper Bound}
Starting from the projection onto each subspace, we model the distribution of these two classes as
\ben
\begin{aligned} 
p(\x|\C_1)=&\int p(\x|\boldsymbol\alpha,\C_1)p(\boldsymbol\alpha)d\alpha= \int \N(\x;\U_1\boldsymbol\alpha,\sigma^2\I)p(\boldsymbol\alpha)d\boldsymbol\alpha\\
p(\x|\C_2)=&\int p(\x|\boldsymbol\alpha,\C_2)q(\boldsymbol\alpha)d\alpha= \int \N(\x;\U_2\boldsymbol\alpha,\sigma^2\I)q(\boldsymbol\alpha)d\boldsymbol\alpha.
\end{aligned}
\label{eq:NSC_binary_setup}
\een
The NSC knows $\U_1$ and $\U_2$, but is blind to $p(\boldsymbol\alpha)$ and $q(\boldsymbol\alpha)$, where $\boldsymbol\alpha$ is the expansion of the projection $\U_i^\top \x$ in the basis $\U_i$. 
Note that since we are not assuming a GMM, the vector $\boldsymbol\alpha$ need not be multivariate normal.

Let $\V\diag\{\cos\theta_1,\dots,\cos\theta_d\}\bf W^\top$ be the singular value decomposition of $\U_1^\top \U_2$, where $\V$, $\bf W$ are unitary, and the principal angles $\{\theta_1,\dots,\theta_d\}$ are taken in ascending order. We may, absorb $\V$, $\bf W$ into $\U_1$, $\U_2$ at the cost of redefining $p(\boldsymbol\alpha)$, $q(\boldsymbol\alpha)$. Thus we may without loss of generality
assume $\V=\bf W=\I$, \textit{i.e.},
\ben
\U_1^\top \U_2 = \diag\{\cos\theta_1,\dots,\cos\theta_d\}\triangleq {\bf C}.
\een

Define $\Pr(\C_2|\C_1)$ as the probability of mistaking $\C_2$ for $\C_1$ and define $\Pr(\C_1|\C_2)$ similarly.
Then the classification error is
\ben
P_e = {1\over2}\Pr(\C_2|\C_1)+{1\over2}\Pr(\C_1|\C_2).
\een
We bound $\Pr(\C_2|\C_1)$ using principal angles, and $\Pr(\C_1|\C_2)$ can be analyzed in the same manner.
We expand $\Pr(\C_2|\C_1)$ using Bayes rule as
\ben
{\Pr}(\C_2|\C_1) = \int{\Pr}(\C_2|\C_1,\boldsymbol\alpha) p(\boldsymbol\alpha)d\boldsymbol\alpha.
\een
We bound $\Pr(\C_2|\C_1,\boldsymbol\alpha)$ by writing $\x = \U_1\boldsymbol\alpha + \n$, where the noise $\n\sim\N(0,\sigma^2\I)$.
\ben
\begin{aligned}
{\Pr}(\C_2|\C_1,\boldsymbol\alpha) =& {\Pr} (\|\U_1^\top (\U_1\boldsymbol\alpha+\n)\|^2 \leq \|\U_2^\top (\U_1\boldsymbol\alpha+\n)\|^2) \\
=& {\Pr} (\|\boldsymbol\alpha+\U_1^\top \n\|^2 \leq \|C\boldsymbol\alpha+\U_2^\top \n\|^2),
\end{aligned}
\label{eq:tmp1}
\een
where the probability is taken w.r.t. $\n$.
Denote the $i$-th column in $\U_1$($\U_2$) as $\u_{1,i}$($\u_{2,i}$), and the $i$-th element of $\boldsymbol\alpha$ as $\alpha_i$.
It follows from Eq.~\eqref{eq:tmp1} that
\ben
\begin{aligned}
&{\Pr} (\|\boldsymbol\alpha+\U_1^\top \n\|^2 \leq \|{\bf C}\boldsymbol\alpha+\U_2^\top \n\|^2)\\
&= {\Pr}\left(\sum_i (\alpha_i+\u_{1,i}^\top \n)^2\leq \sum_i (\cos\theta_i\alpha_i+\u_{2,i}^\top \n)^2\right).
\end{aligned}
\label{eq:tmp2}
\een
We now define $a_i\triangleq \alpha_i+\u_{1,i}^\top \n$ and $b_i\triangleq \cos\theta_i\alpha_i+\u_{2,i}^\top \n$.
Then Eq.~\eqref{eq:tmp2} simplifies to
\ben
\begin{aligned}
&{\Pr}\left(\sum_i (\alpha_i+\u_{1,i}^\top \n)^2\leq \sum_i (\cos\theta_i\alpha_i+\u_{2,i}^\top \n)^2\right)\\
&={\Pr}\left(\sum_i(a_i+b_i)(a_i-b_i)\leq 0\right).
\end{aligned}
\een
\begin{lemma} 
	\label{lemma:independence}
	Let $a_i$, $b_i$ as defined as above. 
	For any pair of $i,j$ where $i\neq j$:
	\begin{enumerate}
		\item $a_i$ is independent from $a_j$
		\item $b_i$ is independent from $b_j$
		\item $a_i$ is independent from $b_j$
		\item $a_i+b_i$ is independent from $a_i-b_i$
	\end{enumerate}
\end{lemma}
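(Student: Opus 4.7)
The plan is to exploit the fact that, conditional on $\boldsymbol\alpha$, every one of the quantities $a_i, b_i$ is an affine function of the single Gaussian vector $\n\sim\N(\0,\sigma^2\I)$. Therefore any finite collection of the $a_i$'s and $b_i$'s is jointly Gaussian, and independence reduces to checking that the relevant covariances vanish.

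First I would record the building blocks inherited from the setup. Since $\U_1$ and $\U_2$ have orthonormal columns, we have $\u_{1,i}^\top\u_{1,j}=\delta_{ij}$ and $\u_{2,i}^\top\u_{2,j}=\delta_{ij}$; and by the reduction to $\U_1^\top\U_2=\mathbf C=\diag\{\cos\theta_1,\dots,\cos\theta_d\}$, we also have $\u_{1,i}^\top\u_{2,j}=\cos\theta_i\,\delta_{ij}$. For any deterministic vectors $\u,\v\in\R^n$, $\mathrm{Cov}(\u^\top\n,\v^\top\n)=\sigma^2\u^\top\v$, which is the only computation needed below.

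Next I would dispatch the four claims. For (1)--(3) with $i\neq j$:
\[
\mathrm{Cov}(a_i,a_j)=\sigma^2\u_{1,i}^\top\u_{1,j}=0,\qquad
\mathrm{Cov}(b_i,b_j)=\sigma^2\u_{2,i}^\top\u_{2,j}=0,
\]
\[
\mathrm{Cov}(a_i,b_j)=\sigma^2\u_{1,i}^\top\u_{2,j}=\sigma^2\cos\theta_i\,\delta_{ij}=0.
\]
For (4), the key observation is that $a_i$ and $b_i$ have the same variance $\sigma^2$, so
\[
\mathrm{Cov}(a_i+b_i,\,a_i-b_i)=\mathrm{Var}(a_i)-\mathrm{Var}(b_i)+\mathrm{Cov}(b_i,a_i)-\mathrm{Cov}(a_i,b_i)=0.
\]
Because the pairs in (1)--(4) are in each case jointly Gaussian (as linear combinations of the $\n$ coordinates plus deterministic shifts depending on $\boldsymbol\alpha$), vanishing covariance upgrades to full independence, completing the proof.

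I do not anticipate a real obstacle here: the entire lemma is an orthonormality bookkeeping exercise wrapped around the standard ``uncorrelated Gaussians are independent'' principle. The only subtlety worth flagging in the write-up is that we are conditioning on $\boldsymbol\alpha$, so the means of $a_i,b_i$ are treated as constants and do not enter the covariance computation; this is why claim (4) works despite $a_i$ and $b_i$ having different (and generally nonzero) means $\alpha_i$ and $\cos\theta_i\,\alpha_i$.
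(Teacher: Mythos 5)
Your proof is correct and follows essentially the same route as the paper: both arguments observe that the relevant pairs are jointly Gaussian (being affine functions of $\n$) and then verify that the off-diagonal covariances vanish using the orthonormality of $\U_1$, $\U_2$ and the reduction $\U_1^\top\U_2=\mathbf C$. The paper simply writes out the full $2\times2$ joint distributions rather than the individual covariance entries, but the content is identical, including the cancellation $\mathrm{Var}(a_i)-\mathrm{Var}(b_i)=0$ underlying claim (4).
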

\begin{proof}
	The proof is given in appendix~\ref{sec:proof_NSC}.
\end{proof}
It follows from Lemma \ref{lemma:independence} that $\sum_i(a_i+b_i)(a_i-b_i)$ is the sum of products of independently distributed normal random variables. However the product of independently distributed normal random variables need not be normal, and so we need to show that $(a_i+b_i)(a_i-b_i)$ is normally distributed.

\begin{lemma}[product of normal random variable\cite{Aroian1947}]
	\label{lemma:productNormal}
	Let $x\sim\N(\mu_x,\sigma_x^2)$ and $y\sim\N(\mu_y,\sigma_y^2)$ be two independent normal variables. 
	If $\mu_x/\sigma_x\to\infty$ and $\mu_y/\sigma_y\to\infty$ in any manner, 
	then the distribution of $xy$ approaches normality with mean $\mu_x\mu_y$ and 
	variance $\mu_x^2\sigma_y^2+\mu_y^2\sigma_x^2+\sigma_x^2\sigma_y^2$.
\end{lemma}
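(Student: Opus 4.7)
The plan is to reduce everything to a simple expansion of $xy$ around its mean and then argue that, in the stated asymptotic regime, the quadratic fluctuation term is dominated by a linear Gaussian piece whose mean and variance match the claimed values. Concretely, I would first standardize and write
\[
x=\mu_x+\sigma_x z_1,\qquad y=\mu_y+\sigma_y z_2,
\]
with $z_1,z_2$ independent standard normals. Expanding the product gives
\[
xy=\mu_x\mu_y+\mu_x\sigma_y z_2+\mu_y\sigma_x z_1+\sigma_x\sigma_y z_1 z_2.
\]
A direct moment calculation (using $E[z_i]=0$, $E[z_i^2]=1$, independence, and $E[z_1 z_2]=0$) verifies $E[xy]=\mu_x\mu_y$ and $\operatorname{Var}(xy)=\mu_x^2\sigma_y^2+\mu_y^2\sigma_x^2+\sigma_x^2\sigma_y^2$, matching the claim exactly, not merely asymptotically.

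For the distributional statement, let $S=\operatorname{Var}(xy)$ and center/normalize $T\triangleq(xy-\mu_x\mu_y)/\sqrt S$. The expansion above splits $T=L+Q$, where
\[
L=\frac{\mu_x\sigma_y z_2+\mu_y\sigma_x z_1}{\sqrt S},\qquad Q=\frac{\sigma_x\sigma_y\,z_1 z_2}{\sqrt S}.
\]
Since $L$ is a linear combination of independent normals, it is itself exactly Gaussian, with variance $(\mu_x^2\sigma_y^2+\mu_y^2\sigma_x^2)/S$, which tends to $1$ as $\mu_x/\sigma_x\to\infty$ and $\mu_y/\sigma_y\to\infty$ in any manner, because the $\sigma_x^2\sigma_y^2$ term becomes negligible relative to the other two in $S$. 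The residual $Q$ has mean $0$ and variance $\sigma_x^2\sigma_y^2/S\to0$ in the same limit. By Chebyshev, $Q\to 0$ in probability, and by Slutsky's theorem $T$ converges in distribution to a standard normal. Undoing the normalization yields the asymptotic normality of $xy$ with the stated mean and variance.

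The only real subtlety, and what I expect to be the main obstacle, is making precise the clause ``in any manner.'' The hypotheses do not constrain the joint rate of $\mu_x/\sigma_x$ and $\mu_y/\sigma_y$, so one must check that the ratio $\sigma_x^2\sigma_y^2/S$ vanishes no matter how the two ratios diverge. This follows from the elementary inequality
\[
\frac{\sigma_x^2\sigma_y^2}{\mu_x^2\sigma_y^2+\mu_y^2\sigma_x^2+\sigma_x^2\sigma_y^2}
=\frac{1}{(\mu_x/\sigma_x)^2+(\mu_y/\sigma_y)^2+1},
\]
which tends to $0$ whenever either ratio tends to infinity. The same bound shows $\operatorname{Var}(L)\to 1$. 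An alternative, equally clean finish is to compute the characteristic function $E[e^{itT}]$ using the independence of $z_1,z_2$ (it factors into a one-dimensional Gaussian integral in $z_1$ after integrating out $z_2$) and pass to the limit; I would present the Slutsky argument above because it is shorter and keeps the exact mean/variance identities front and center.
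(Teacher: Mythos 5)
Your argument is correct, but it cannot be compared against a proof in the paper because the paper offers none: Lemma~\ref{lemma:productNormal} is imported verbatim from Aroian (1947) with only a citation. What you have supplied is therefore a self-contained replacement for that citation, and it holds up. The bilinear decomposition $xy=\mu_x\mu_y+\mu_x\sigma_y z_2+\mu_y\sigma_x z_1+\sigma_x\sigma_y z_1z_2$ gives the exact mean and variance (the three cross-covariances vanish since $E[z_1z_2^2]=E[z_1^2z_2]=E[z_1z_2]=0$), and the split $T=L+Q$ with $L$ exactly Gaussian of variance $1-\bigl((\mu_x/\sigma_x)^2+(\mu_y/\sigma_y)^2+1\bigr)^{-1}\to1$ and $\operatorname{Var}(Q)\to0$ closes the distributional claim via Chebyshev and Slutsky; independence of $L$ and $Q$ is not needed for that step, and you correctly do not assume it. Your handling of ``in any manner'' through the identity $\sigma_x^2\sigma_y^2/S=\bigl((\mu_x/\sigma_x)^2+(\mu_y/\sigma_y)^2+1\bigr)^{-1}$ is exactly the right way to make that clause precise. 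For comparison, Aroian's original route goes through the moment generating function and Gram--Charlier/Edgeworth expansions of the product distribution, which yields rate information but is considerably heavier; your Slutsky argument is more elementary, proves precisely what the lemma asserts, and is arguably the version that should accompany the statement if one wants the paper to be self-contained. One cosmetic remark: ``undoing the normalization'' should be read as the standard statement that $(xy-\mu_x\mu_y)/\sqrt{S}$ converges in law to $\N(0,1)$, since $S$ itself varies along the limit; your phrasing already implies this, but it is worth saying explicitly.
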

Applying Lemma~\ref{lemma:productNormal} and combining the independence stated in Lemma~\ref{lemma:independence}, we have 
\begin{lemma}
	\label{lemma:ourProductNormal}
	As $\sigma\to0$, $\sum_i(a_i+b_i)(a_i-b_i)\sim\N\left(\sum_i \sin^2\theta_i\alpha_i^2, 4\sigma^2\sum_i\sin^2\theta_i (\alpha_i^2+\sigma^2) \right)$	
\end{lemma}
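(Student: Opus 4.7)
The plan is to reduce the sum $\sum_i(a_i+b_i)(a_i-b_i)$ to a sum of independent products of independent Gaussians, apply Lemma~\ref{lemma:productNormal} termwise, and then add the resulting Gaussians. First I would invoke part (4) of Lemma~\ref{lemma:independence} to conclude that, for each $i$, $a_i+b_i$ and $a_i-b_i$ are independent Gaussians, and invoke parts (1)--(3) to conclude that the pair $(a_i+b_i,\,a_i-b_i)$ is jointly independent of $(a_j+b_j,\,a_j-b_j)$ for $j\neq i$, so that the products $(a_i+b_i)(a_i-b_i)$ are mutually independent.

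Next I would compute the means and variances of $a_i\pm b_i$ by direct expansion. Since $\n\sim\N(\0,\sigma^2\I)$ and $\u_{1,i}^\top\u_{2,i}=\cos\theta_i$ (after the alignment that put $\U_1^\top\U_2$ in diagonal form), one gets
\[
a_i+b_i\sim\N\bigl((1+\cos\theta_i)\alpha_i,\ 2\sigma^2(1+\cos\theta_i)\bigr),
\]
\[
a_i-b_i\sim\N\bigl((1-\cos\theta_i)\alpha_i,\ 2\sigma^2(1-\cos\theta_i)\bigr).
\]
As $\sigma\to 0$, both mean-to-standard-deviation ratios diverge (assuming the generic case $\alpha_i\neq 0$, $\theta_i\notin\{0,\pi\}$; the degenerate cases either contribute a deterministic zero or can be absorbed into the limit), so Lemma~\ref{lemma:productNormal} applies termwise. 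Multiplying the means gives $(1-\cos^2\theta_i)\alpha_i^2=\sin^2\theta_i\,\alpha_i^2$, and the variance formula $\mu_x^2\sigma_y^2+\mu_y^2\sigma_x^2+\sigma_x^2\sigma_y^2$ simplifies, after factoring out $(1-\cos^2\theta_i)=\sin^2\theta_i$, to $4\sigma^2\sin^2\theta_i(\alpha_i^2+\sigma^2)$.

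Finally, using the independence across $i$ established in the first step, summing independent Gaussians yields a Gaussian whose mean and variance are the sums of the individual means and variances, namely $\sum_i\sin^2\theta_i\alpha_i^2$ and $4\sigma^2\sum_i\sin^2\theta_i(\alpha_i^2+\sigma^2)$, matching the claim. The only delicate step I anticipate is justifying the asymptotic normality of a finite sum of products when Lemma~\ref{lemma:productNormal} only asserts marginal convergence: because the collection is finite and its terms are independent, joint convergence in distribution follows from the continuous mapping theorem applied to the vector of pairs $(a_i+b_i,a_i-b_i)_i$, so adding them preserves the Gaussian limit.
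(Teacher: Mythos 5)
Your proposal is correct and follows essentially the same route as the paper: read off the joint Gaussian law of $(a_i+b_i,\,a_i-b_i)$ from Lemma~\ref{lemma:independence}, apply Lemma~\ref{lemma:productNormal} termwise as $\sigma\to0$, and sum the independent Gaussian limits across $i$. You simply make explicit the variance algebra and the cross-index independence that the paper's proof leaves implicit.
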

\begin{proof}
	The proof is given in appendix~\ref{sec:proof_NSC}.
\end{proof}
It follows that ${\Pr}\left(\sum_i(a_i+b_i)(a_i-b_i)\leq 0\right)$ is the tail probability of a normal distribution. Applying the standard tail bound, we arrive at the following theorem.
\begin{theorem}
	\label{thm:NSCbound}
	As $\sigma^2\to0$, the classification error is upper bounded as 
	\[
	P_e \leq\int \Er(\boldsymbol\theta,\boldsymbol\alpha,\sigma^2) {p(\boldsymbol\alpha)+q(\boldsymbol\alpha)\over2} d\alpha
	\]
	where $\Er(\boldsymbol\theta,\boldsymbol\alpha,\sigma^2)={1\over2}\exp\left[-\frac{\left(\sum_{i=1}^d \sin^2\theta_i\alpha_i^2 \right)^2}{8\sigma^2\sum_{i=1}^d\sin^2\theta_i (\alpha_i^2+\sigma^2 )} \right]$.
\end{theorem}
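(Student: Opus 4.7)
The plan is to combine the asymptotic normality in Lemma~\ref{lemma:ourProductNormal} with a standard Gaussian tail estimate, and then average over the two class-conditional densities. The quantity of interest for bounding ${\Pr}(\C_2|\C_1,\boldsymbol\alpha)$ has already been reduced, in the preceding derivation, to
\[
{\Pr}\!\left(\sum_i(a_i+b_i)(a_i-b_i)\leq 0\right),
\]
so the main substantive content is already in place.

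First I would invoke Lemma~\ref{lemma:ourProductNormal} to treat $Z\triangleq\sum_i(a_i+b_i)(a_i-b_i)$ as asymptotically $\N(\mu_Z,\sigma_Z^2)$ with
\[
\mu_Z=\sum_{i=1}^d\sin^2\theta_i\,\alpha_i^2,\qquad \sigma_Z^2=4\sigma^2\sum_{i=1}^d\sin^2\theta_i(\alpha_i^2+\sigma^2),
\]
valid as $\sigma^2\to 0$ (which drives the required $\mu/\sigma\to\infty$ condition of Lemma~\ref{lemma:productNormal}, provided $\alpha_i$ is fixed and some $\sin^2\theta_i\alpha_i^2>0$). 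Next, I would apply the Chernoff-type Gaussian tail bound $\Pr(Z\leq 0)\leq \tfrac{1}{2}\exp(-\mu_Z^2/(2\sigma_Z^2))$, which upon substitution reproduces exactly the kernel $\Er(\boldsymbol\theta,\boldsymbol\alpha,\sigma^2)$ stated in the theorem; the factor $8$ in the exponent's denominator arises from $2\cdot 4\sigma^2$. Integrating over $p(\boldsymbol\alpha)$ via the Bayes expansion already written above yields
\[
{\Pr}(\C_2|\C_1)\leq \int \Er(\boldsymbol\theta,\boldsymbol\alpha,\sigma^2)\,p(\boldsymbol\alpha)\,d\boldsymbol\alpha.
\]

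For ${\Pr}(\C_1|\C_2)$, I would repeat the derivation from Eq.~\eqref{eq:tmp1} onward, but with $\x=\U_2\boldsymbol\alpha+\n$ and the inequality reversed. The key observation is that because $\U_1^\top\U_2={\bf C}=\diag\{\cos\theta_i\}$ is symmetric, swapping the roles of $\U_1$ and $\U_2$ produces the analogous sum of independent products $\sum_i(a'_i+b'_i)(a'_i-b'_i)$ whose asymptotic mean and variance have the same functional form in $\sin\theta_i$ and $\alpha_i$. Hence the same kernel $\Er(\boldsymbol\theta,\boldsymbol\alpha,\sigma^2)$ governs the second error event, giving ${\Pr}(\C_1|\C_2)\leq\int \Er(\boldsymbol\theta,\boldsymbol\alpha,\sigma^2)\,q(\boldsymbol\alpha)\,d\boldsymbol\alpha$. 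Averaging $\tfrac{1}{2}[{\Pr}(\C_2|\C_1)+{\Pr}(\C_1|\C_2)]$ produces the $(p+q)/2$ weighting in the statement.

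The main obstacle, in my view, is justifying the symmetry that makes both conditional error kernels equal: one must verify that after the diagonalization $\U_1^\top\U_2={\bf C}$ the dual computation genuinely yields the same $\sin^2\theta_i$ and $\alpha_i^2$ structure (and not, e.g., one involving $\cos^2\theta_i\alpha_i^2$). A secondary concern is the asymptotic nature of Lemma~\ref{lemma:ourProductNormal}: strictly speaking $Z$ is a sum of products of Gaussians, which is only normal in the $\sigma\to 0$ limit, so the stated inequality is asymptotic, matching the hypothesis ``$\sigma^2\to 0$'' in the theorem. Everything else is book-keeping: linearity of expectation over $p(\boldsymbol\alpha)$ and $q(\boldsymbol\alpha)$, and the Chernoff bound for a Gaussian tail.
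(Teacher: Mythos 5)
Your proposal is correct and follows essentially the same route as the paper's proof: apply Lemma~\ref{lemma:ourProductNormal}, use the Gaussian tail bound $\Pr(Z\leq 0)\leq\frac{1}{2}\exp(-\mu_Z^2/(2\sigma_Z^2))$ to obtain the kernel $\Er$, handle $\Pr(\C_1|\C_2,\boldsymbol\alpha)$ by symmetry, and average against $p$ and $q$. The symmetry you flag as the main obstacle does hold (swapping $\U_1$ and $\U_2$ gives $a_i'=\alpha_i+\u_{2,i}^\top\n$, $b_i'=\cos\theta_i\,\alpha_i+\u_{1,i}^\top\n$, so the product again has mean $\sin^2\theta_i\,\alpha_i^2$ and variance $4\sigma^2\sin^2\theta_i(\alpha_i^2+\sigma^2)$); the paper simply asserts this "in the same manner," so your treatment is, if anything, slightly more careful.
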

\begin{proof}
	The proof is given in appendix~\ref{sec:proof_NSC}.
\end{proof}
\begin{figure}[h!]
	\vspace{-25pt}
	\subfloat[case 1]{\label{fig:contour1}\includegraphics[width=0.5\columnwidth]{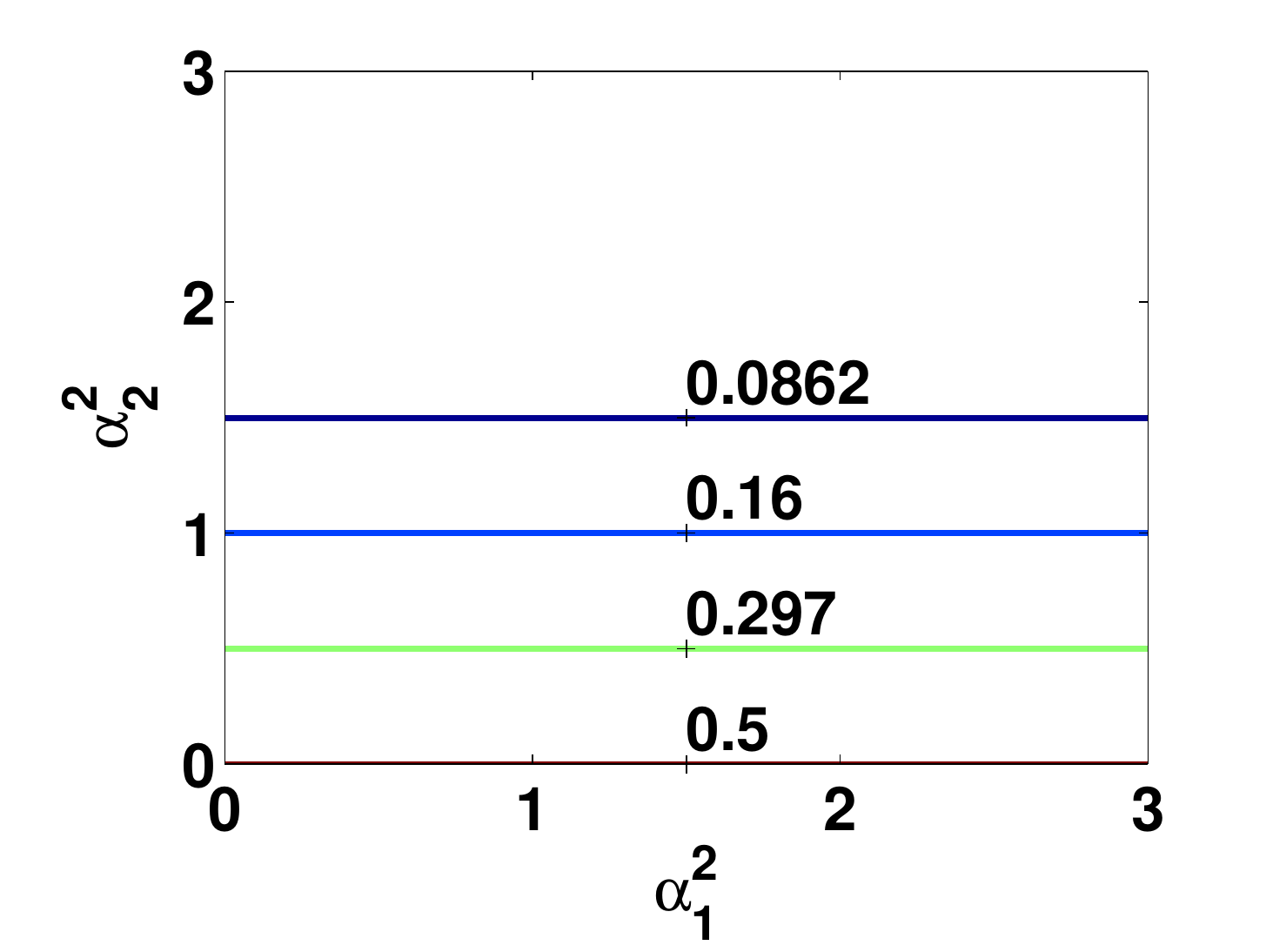}}
	\subfloat[case 2]{\label{fig:contour2}\includegraphics[width=0.5\columnwidth]{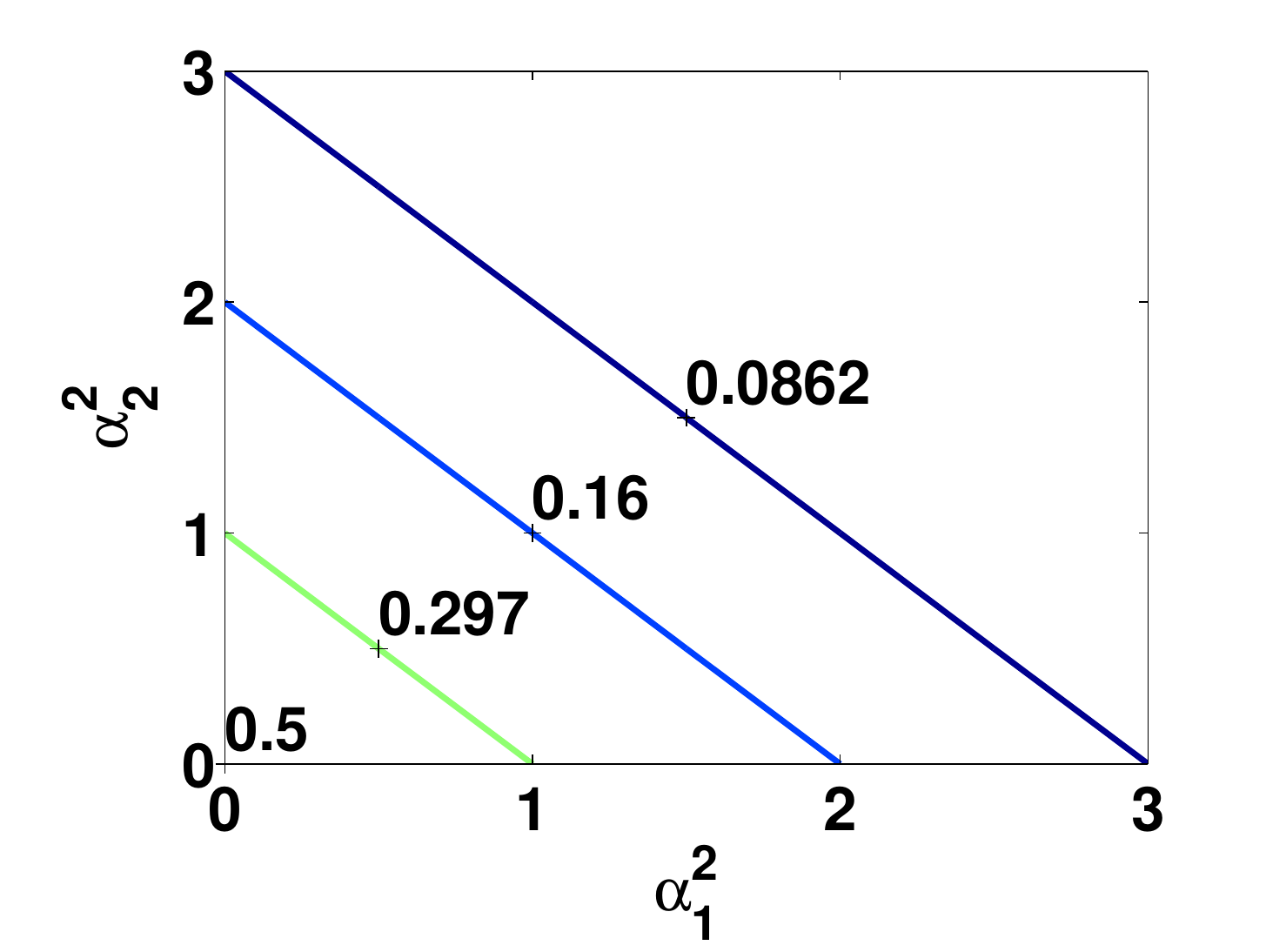}}
	\caption{Lines on which $\Er$ is constant for the two case studies introduced in section~\ref{sec:MAP_nume}.}
	\label{fig:contour}
\end{figure}
We return to the two case studies introduced in Section \ref{sec:MAP_nume} to provide some intuition about the kernel $\Er$. The principal angles are $[0, \pi/2]$ in Case 1, and $[\pi/4,\pi/4]$ in Case 2. In Case 1, the kernel is constant on horizontal lines, and in Case 2, it is constant on lines of slope -1. These two cases are shown in Fig. \ref{fig:contour}, and we now make a number of general observations.
\begin{remark}
	\label{remark:theoremNSC}
	1. 
		$\Er(\boldsymbol\theta,\boldsymbol\alpha,\sigma^2)$ is monotonically decreasing w.r.t. $\sum_i \sin^2\theta_i\alpha_i^2$, 
		and monotonically increasing w.r.t. $\sigma^2$.
		Therefore, bigger principal angles or signal energy results in smaller classification error.
		Bigger noise results in bigger classification error.
	2.
		Ignoring the higher order term of $\sigma^2$ in the denominator inside the $\exp(\cdot)$, we have 
		\[\begin{aligned}
		\Er(\boldsymbol\theta,\boldsymbol\alpha,\sigma^2)
		&\approx{1\over2}\exp\left(-\frac{\sum_i\sin^2\theta_i\alpha_i^2}{8\sigma^2}  \right)\\
		\end{aligned}\]
		which clearly indicates that classification performance is a function of discernibility (the sine principal angles) weighted by signal energy (the $\alpha_i^2$'s).
	3.
		For fixed energy, 
		classification error is decreased by allocating larger $\alpha_i^2$ to larger $\theta_i$.
\end{remark}

\subsection{Numerical Analysis of Synthetic Data}
We now examine the agreement between empirical error and the upper bound given in Theorem 3. 
Set $n=6$, $d=2$,
$$\U_1=\bmx \I_2, \0_4\emx^\top,~ \U_2=\bmx \cos\theta &0&0&0&\sin\theta&0\\0&\cos\theta &0&0&0&\sin\theta \emx^\top,$$ 
so that the two principal angles between $\U_1$ and $\U_2$ are $\theta_1=\theta_2=\theta$. 
Set $p(\boldsymbol\alpha)=q(\boldsymbol\alpha)=\N(\boldsymbol\alpha;0,\I_2)$, and vary $\sigma^2$ in $[0.01, 0.5]$. 
Fig.~\ref{fig:NSC1} considers three values of $\theta$ ($\pi/6$, $\pi/4$, and $\pi/3$), and shows that empirical NSC classification error tracks the upper bound obtained by numerical integration.
\begin{figure}[h!]
	\vspace{-10pt}
	\centering
	\subfloat[]{\label{fig:NSC1}\includegraphics[width=0.5\columnwidth]{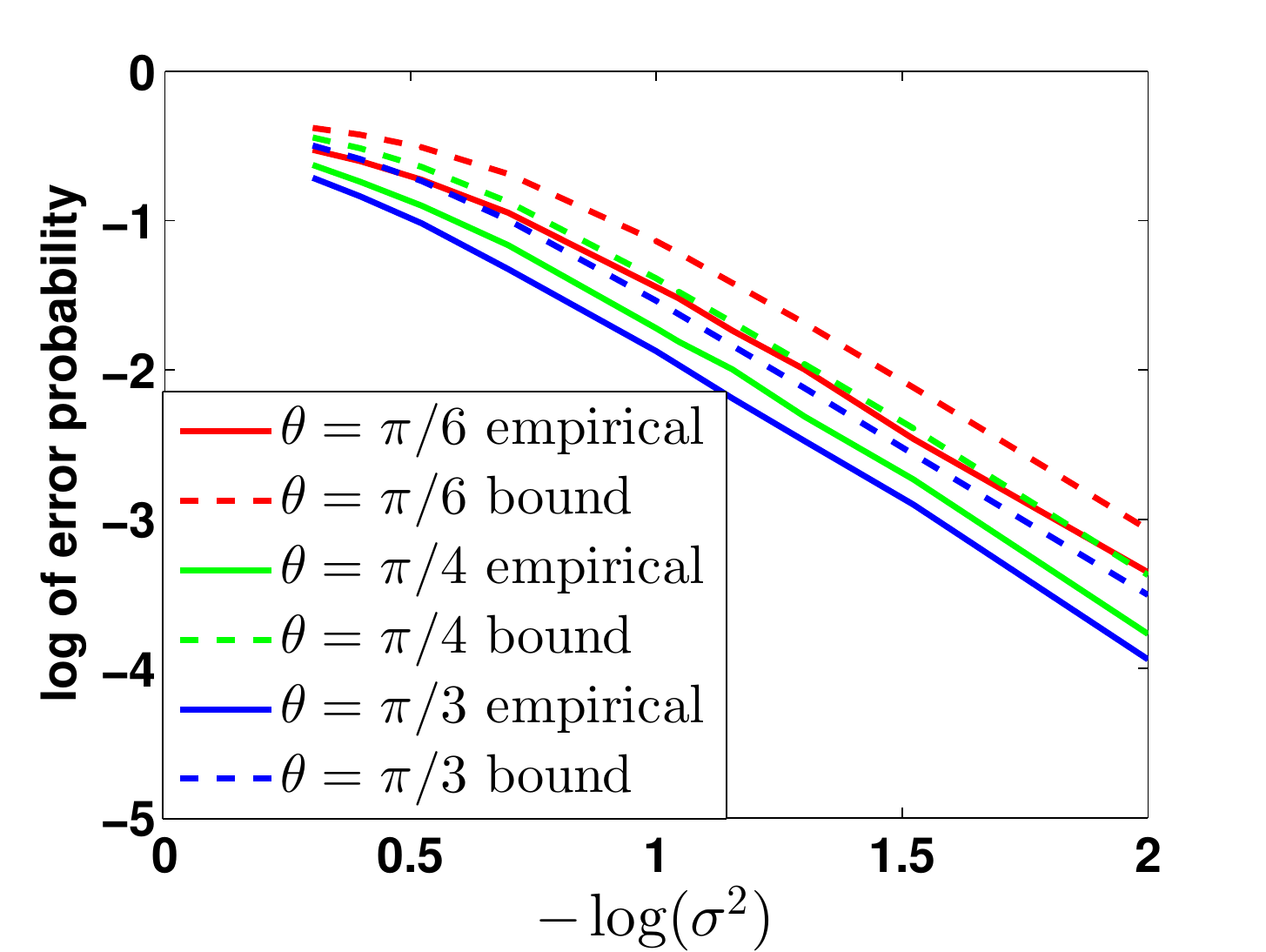}}
	\subfloat[]{\label{fig:NSC2}\includegraphics[width=0.5\columnwidth]{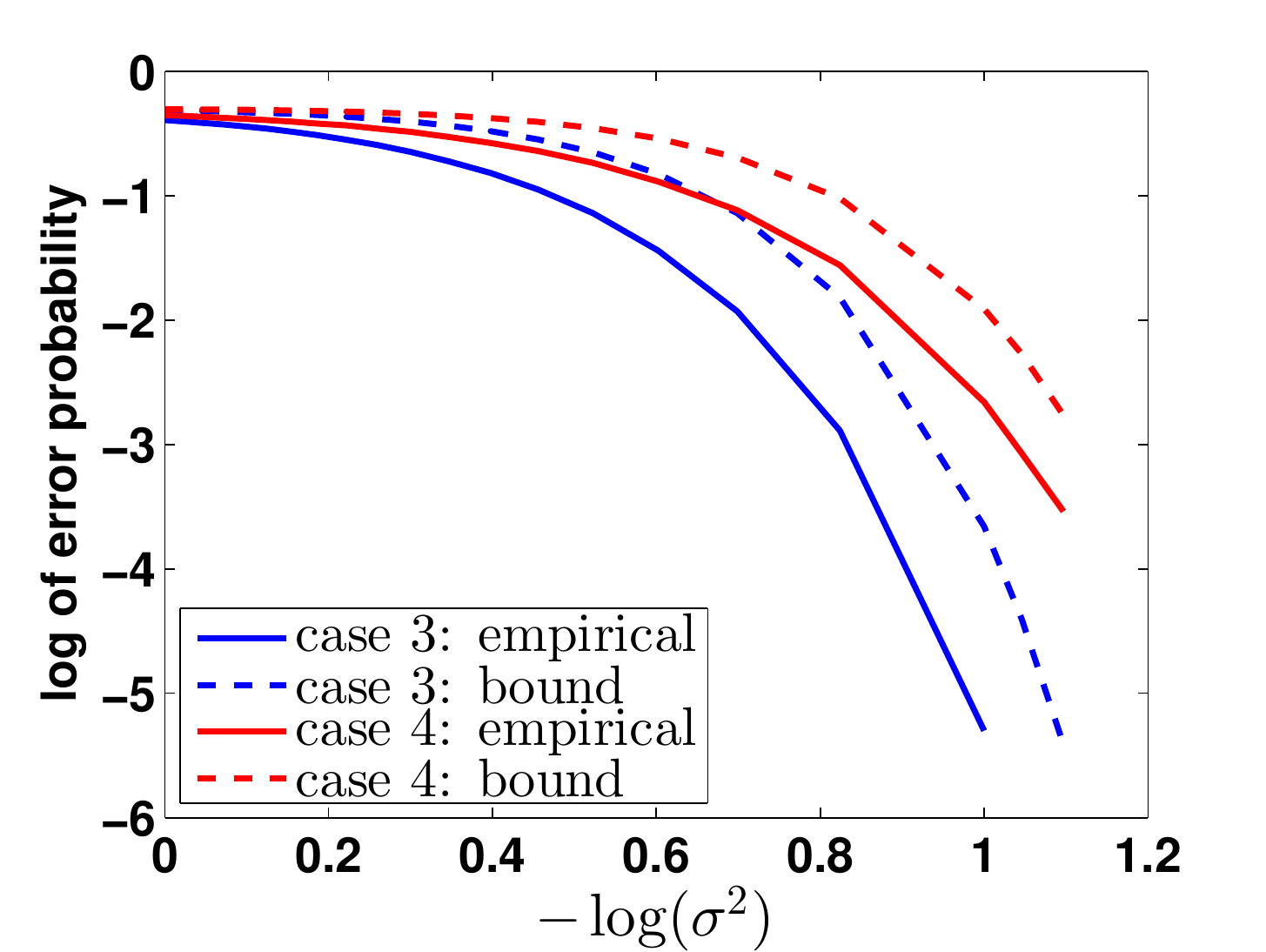}}
	\caption{
		Comparison of empirical NSC classification error with the upper bound obtained by numerical integration. (a) Larger principal angles reduce classification error; (b) Disproportionate assignment of signal energy to larger principal angles reduces classification error.}
\end{figure}

Next we examine the dependence of classification error on distribution of signal energy across the two modes. 
Set $n=6$, $d=2$, $\U_1=\bmx \I_2, \0_4\emx^\top$ and 
$$\U_2=\bmx \cos(\pi/6) &0&0&0&\sin(\pi/6)&0\\0&\sin(\pi/6) &0&0&0&\cos(\pi/6) \emx^\top,$$ 
so that the two principal angles are $\theta_1=\pi/6$ and $\theta_2=\pi/3$. Fix $\|\boldsymbol\alpha\|^2=1$, and compare the case when $\boldsymbol \alpha$ is distributed such that $|\alpha_1|<|\alpha_2|$ (Case 3 in Fig. \ref{fig:NSC2}), with the case when $\boldsymbol \alpha$ is distributed such that $|\alpha_1|>|\alpha_2|$ (Case 4 in Fig. \ref{fig:NSC2}). Empirical error is calculated for a range of noise variances, by randomly drawing 10,000 sample per class. Empirical NSC classification error tracks the upper bound given by numerical integration, with performance of Case 3 superior to that of Case 4.

\section{\trait: tunable recognition adapted to intra-class target}
\label{sec:trait}
In the previous theorems, it is the principal angles that determine the performance of the classifiers \textcolor{\chcolor}{in different SNR regimes}. 
This suggests that we might improve classification by applying a linear transformation that optimizes principal angles, even at the cost of reducing dimensionality.

We denote the collection of all labeled training samples as
$\X=[\X_1,\dots,\X_K]\in\R^{n\times N}$,
where columns in the submatrix $\X_k\in\R^{n\times N_k}$ are samples from the $k$-th class.
The signal subspace of $\X_k$ is spanned by the orthonormal basis $\U_k$ defined above.
The linear transform $\A\in\R^{m\times n}~(m\leq n)$ is designed to maximize separation of the subspaces $\A\U_1,\dots,\A \U_K$.
The maximal separation is achieved when $(\A\U_j)^\top (\A\U_k)=0$ for all $j\neq k$. 
In this case, all the principal angles are $\pi/2$. 
One approach is to use the SVD to compute the $\U_k$ and then to learn the linear transformation $\A$. However we may avoid pre-computing the $\U_k$ by simply encouraging
$(\A\X_j)^\top (\A\X_k)=0$ for all $j\neq k$.

We shall require that the transform $\A$ preserve some specific characteristic or trait of each individual class.
For example, we may target $(\A\X_k)^\top (\A\X_k)=\X_k^\top \X_k$ for all $k$, so that the original intra-class data structure (with noise) is preserved. 
Given access to a denoised signal, $\hat \X_k$, we might instead target $(\A\X_k)^\top (\A\X_k)=\hat \X_k^\top \hat \X_k$ again for all $k$.
In this case, the intra-class dispersion due to noise is suppressed. 
Thus, the Gram matrix $\bf T$ of the transformed signal can be designed to target preservation of particular intra-class structure. 
We formulate the optimization problem as
\ben
\min_{\A\in\R^{m\times n}} {1\over N^2}\|(\A\X)^\top(\A\X)-{\bf T}\|_F^2.
\label{eq:TRAIT}
\een
The block diagonal structure of the target Gram matrix $\bf T$ promotes larger principal angles between subspaces. 
At the same time the diagonal blocks can be tuned to different characteristics of individual classes. 
For example, when side information is available, we may consider incorporating it in diagonal blocks.
Here we only consider
\ben
{\bf T}=\mbox{diag}\{\X_1^\top \X_1,\dots,\X_K^\top \X_K\},
\label{eq:T}
\een
as a proof-of-concept.
We refer to this approach as the TRAIT algorithm,
where the acronym denotes {\bf T}unable {\bf R}ecognition {\bf A}dapted to {\bf I}ntra-class {\bf T}argets.

It is possible to minimize the objective in E.q.~\eqref{eq:TRAIT} by first minimizing $\|\X^\top {\bf P} \X-{\bf T}\|^2$ for ${\bf P}\succeq 0$ (as Proposition~\ref{prop:closedForm}), and then factoring $\bf P$ as ${\bf P}=\A^\top \A$ where $\A\in\R^{m\times n}$. 
\begin{proposition}
	The minimizer of $\|\X^\top {\bf P} \X-{\bf T}\|_F^2$ where ${\bf P}\succeq 0$, is ${\bf P}^\star=(\X\X^\top)^{-1} \X{\bf T}\X^\top (\X\X^\top)^{-1}$.
	\label{prop:closedForm}
\end{proposition}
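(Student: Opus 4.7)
The plan is to solve the problem first as an unconstrained quadratic minimization over all symmetric matrices $\mathbf{P}$, then verify that the resulting stationary point actually lies in the PSD cone, at which point the constraint is automatically satisfied and the solution is global because the objective is convex.

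First I would observe that the map $\mathbf{P}\mapsto \X^\top\mathbf{P}\X$ is linear, so the objective $f(\mathbf{P})=\|\X^\top\mathbf{P}\X-\mathbf{T}\|_F^2$ is a convex quadratic in $\mathbf{P}$. Next, using the symmetry of $\mathbf{P}$ and $\mathbf{T}$, I would expand
\[
f(\mathbf{P})=\tr(\X\X^\top\mathbf{P}\X\X^\top\mathbf{P})-2\tr(\X\mathbf{T}\X^\top\mathbf{P})+\tr(\mathbf{T}^2),
\]
and then differentiate with respect to symmetric $\mathbf{P}$ to obtain the first-order optimality condition
\[
\X\X^\top\,\mathbf{P}\,\X\X^\top=\X\mathbf{T}\X^\top.
\]
Under the standing assumption (implicit in the statement, where $(\X\X^\top)^{-1}$ appears) that $\X$ has full row rank so that $\X\X^\top$ is invertible, I would left- and right-multiply by $(\X\X^\top)^{-1}$ to isolate
\[
\mathbf{P}^\star=(\X\X^\top)^{-1}\X\mathbf{T}\X^\top(\X\X^\top)^{-1}.
\]

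The step that deserves the most care — and which I would identify as the main obstacle — is verifying that this unconstrained stationary point is feasible, i.e., $\mathbf{P}^\star\succeq 0$. This is where the specific structure of $\mathbf{T}$ from Eq.~\eqref{eq:T} enters: since $\mathbf{T}=\mathrm{diag}\{\X_1^\top\X_1,\dots,\X_K^\top\X_K\}$ is block-diagonal with Gram-matrix blocks, $\mathbf{T}\succeq 0$, so it admits a symmetric square root $\mathbf{T}^{1/2}$. Writing $\mathbf{M}\triangleq(\X\X^\top)^{-1}\X\mathbf{T}^{1/2}$, one gets $\mathbf{P}^\star=\mathbf{M}\mathbf{M}^\top\succeq 0$, so the PSD constraint is inactive at $\mathbf{P}^\star$.

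To conclude, I would invoke convexity: because $f$ is convex on the (convex) PSD cone and $\mathbf{P}^\star$ is a PSD critical point, it is a global minimizer of the constrained problem. A short final remark would note that this also shows $\mathbf{P}^\star$ admits a factorization $\mathbf{P}^\star=\A^\top\A$ with $\A\in\R^{m\times n}$, closing the loop with Eq.~\eqref{eq:TRAIT} where the rank $m$ equals the rank of $\mathbf{P}^\star$, which is at most $\min(n,\mathrm{rank}\,\mathbf{T})$.
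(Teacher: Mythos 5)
Your proposal is correct and follows essentially the same route as the paper: both treat the problem as an unconstrained linear least-squares problem in $\bf P$ and then observe that the resulting solution is automatically positive semidefinite (since it has the form ${\bf M}{\bf T}{\bf M}^\top$ with ${\bf T}\succeq 0$), so the constraint is inactive. The only difference is procedural --- you derive the normal equations $\X\X^\top{\bf P}\X\X^\top=\X{\bf T}\X^\top$ by differentiating the trace expansion, whereas the paper vectorizes the objective as $\|(\X^\top\otimes\X^\top)\vect({\bf P})-\vect({\bf T})\|_2^2$ and applies the pseudo-inverse formula --- and your version is, if anything, more explicit about the full-row-rank assumption on $\X$, the feasibility check, and the convexity argument that upgrades the stationary point to a global constrained minimizer.
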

\begin{proof}
	Proof is detailed in appendix~\ref{proof:Prop1}.
\end{proof}
However when $m<n$, such a rank-$m$ decomposition may not exist since this $\bf P$ is not guaranteed to be rank deficient.
An alternative is to learn a rank deficient $\bf P$ by solving
\[
\min_{P\succeq \0}\|\X^\top {\bf P} \X-{\bf T}\|_F^2+\lambda\|{\bf P}\|_\ast,
\]
\textcolor{\chcolor}{where the nuclear norm $\|\bf P\|_\ast$ regularizes the rank of $\bf P$.}
However this approach requires careful tuning of  $\lambda$, and it is computationally more complex since we work with a matrix $\bf P$ larger than $\A$. Given these considerations, we choose to solve~\eqref{eq:TRAIT} using gradient descent as described in Algorithm \ref{algo:trait}.

\begin{algorithm}[h!]
	\caption{\trait for feature extraction}
	\label{algo:trait}
	\begin{algorithmic}[1]
		\REQUIRE	labeled training samples $\X=[\X_1,\dots,\X_K]$, target dimension $m,~(m\leq n)$, target Gram matrix $\bf T$.
		\ENSURE	feature extraction matrix (transform) $\A\in\R^{m\times n}$.
		\STATE 	Initialize $\A=[{\bf e}_1,\dots,{\bf e}_m]^\top$, where ${\bf e}_i$ is the $i$-th standard basis.
		\WHILE{stopping criteria not met}
		\STATE		Compute gradient \[{\bf G}=\A(\X\X^\top \A^\top \A \X\X^\top-\X{\bf T}\X^\top).\]
		\STATE		Choose a positive step-size $\eta$ and take a gradient step
		\[\A\gets \A-\eta {\bf G}.\]
		\ENDWHILE
	\end{algorithmic}
\end{algorithm}

\subsection{Related Methods}
\label{sec:compare}
Linear Discriminant Analysis (LDA) is a classical feature extraction method which assumes each class to be Gaussian distributed. 
It achieves better performance on face recognition tasks than does PCA~\cite{FisherFace}.
LDA does not assume near low-rank structure of the covariances, 
and therefore considers a different data geometry than the one here studied.

Methods of feature extraction based on random projection have recently been developed and successfully applied to face recognition \cite{Wright2009}. Random projection is designed to preserve pairwise distances between all data points uniformly across class labels~\cite{JLlemma}.

More recently, the Low-Rank Transform (\lrt) has been proposed as a method of extracting features~\cite{QiangJMLR}.
It enlarges inter-class distance while suppressing intra-class dispersion.
\lrt uses the nuclear norm, $\|\A\X_i\|_\ast$, to measure the dispersion of the (transformed) data.
The transform $\A$ is
\[
\argmin_{\A\in\R^{m\times n}:\|\A\|_2\leq c} \sum_{i=1}^K\|\A\X_i\|_\ast-\|\A\X\|_\ast.\\
\label{eq:LRT}
\]
%
\textcolor{\chcolor}{
What motivates the choice of the nuclear norm is that it is the convex relaxation of rank~\cite{QiangJMLR}. 
In the high SNR regime, Theorem~\ref{thm:highSNR} suggests that classification error decreases when the union of subspaces has large rank. LRT encourages the rank of the union to be large, and it works well in a regime where model mismatch is small. Experiments presented in Section~\ref{sec:robustTRAIT} suggest that TRAIT may be more robust to model mismatch (Fig.~\ref{fig:varySNR_synthetic}).
}
\subsection{\textcolor{\chcolor}{Two Properties of the TRAIT Transform}}
{\color{\chcolor}
On synthetic and measured data,
we show that TRAIT effectively enlarges the angles between different subspaces and preserves intra-class structure.
We also compare the classification accuracy of features extracted by TRAIT and the methods in Section~\ref{sec:compare}. }
For synthetic data, the class distribution is known exactly, and the MAP classifier is used to measure classification accuracy. For measured data, the class distribution is unknown a priori, and the NSC classifier is employed instead.

\subsubsection{\textcolor{\chcolor}{Enlargement of the Principal angles}}
The synthetic dataset has parameters $n=10$, $d=1$ and $K=3$.
\[\boldsymbol\Sigma_k=\U_k\U_k^\top+10^{-2}\I (k=1,2,3),\]
where $\U_k$ is a normalized $n$-vector with i.i.d. Gaussian random entries.
Samples of the $k$-th class are i.i.d drawn from $\N(\0,\boldsymbol\Sigma_k)$. 
For each class, $100$ samples are used for learning the transform and $10000$ are used for testing.
On the training data, we learn the transform respectively via LDA, \lrt, and \trait with target dimension $m=3,\dots,10$. 
Then on each test datum, we apply the learned transforms as well as random projection (each entry drawn from $\N(0,1)$) 
and classify using a MAP classifier.

We visualize original and transformed data via projection (PCA basis) into 3-dimensional Euclidean space. When the target feature dimension $m=3$, the results are shown in Fig. \ref{fig:embed}. 
Each class is represented by a different color.
\textcolor{\chcolor}{
After transforming the data, we use the SVD to calculate the basis vector ($d=1$) that best describes each class, and we calculate the pairwise angles between basis vectors. The pairwise angles are significantly increased by both LRT and TRAIT. By contrast, neither LDA nor random projection increase separation between one-dimensional subspaces.}

\begin{figure}[h!]
	\subfloat[Original, angles:\newline $66.1^\circ$, $21.8^\circ$, $70.0^\circ$]{\includegraphics[width=0.33\columnwidth]{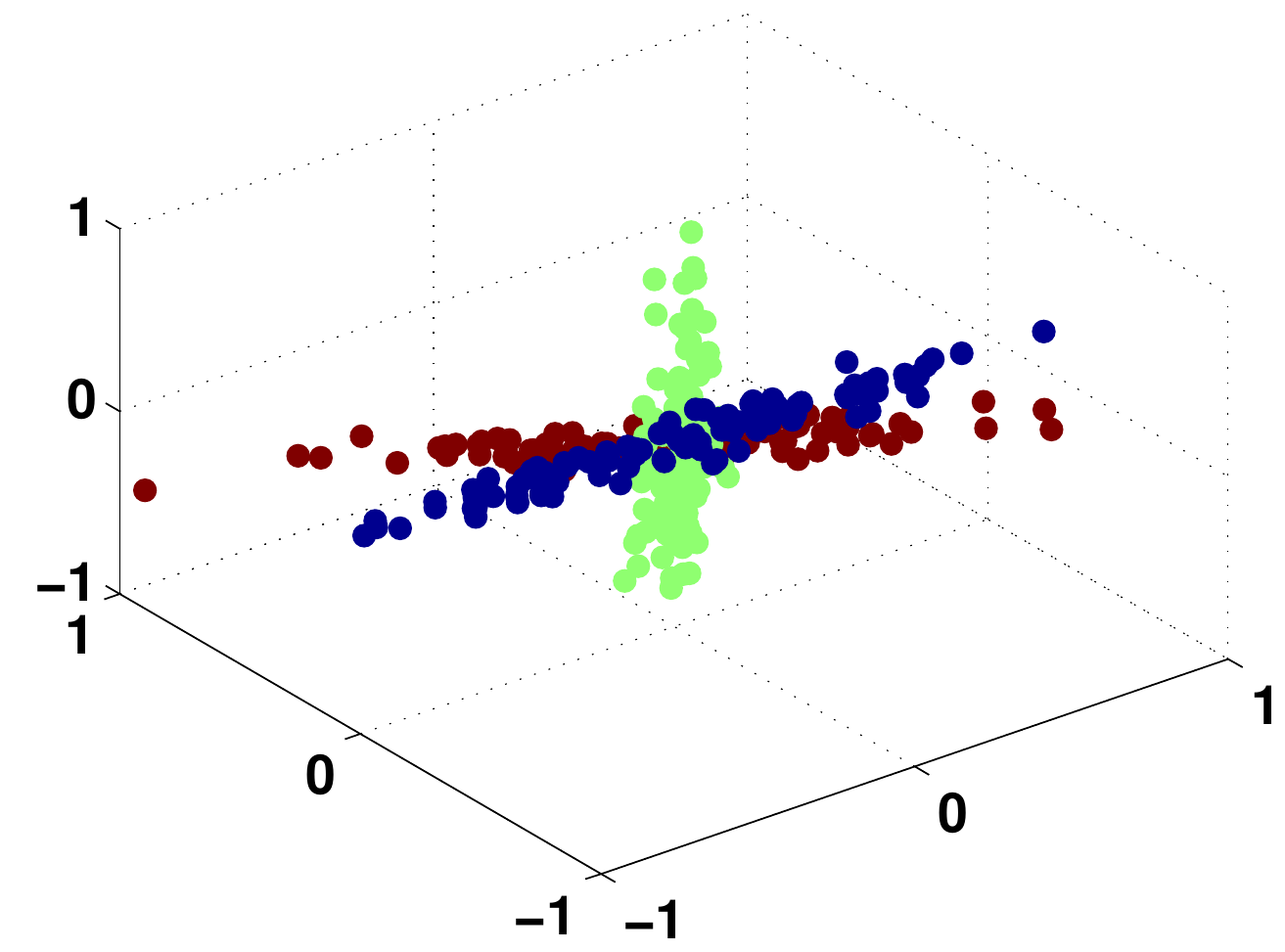}}~~
	\subfloat[\trait, angles:\newline $87.8^\circ$, $72.9^\circ$, $88.7^\circ$]{\includegraphics[width=0.33\columnwidth]{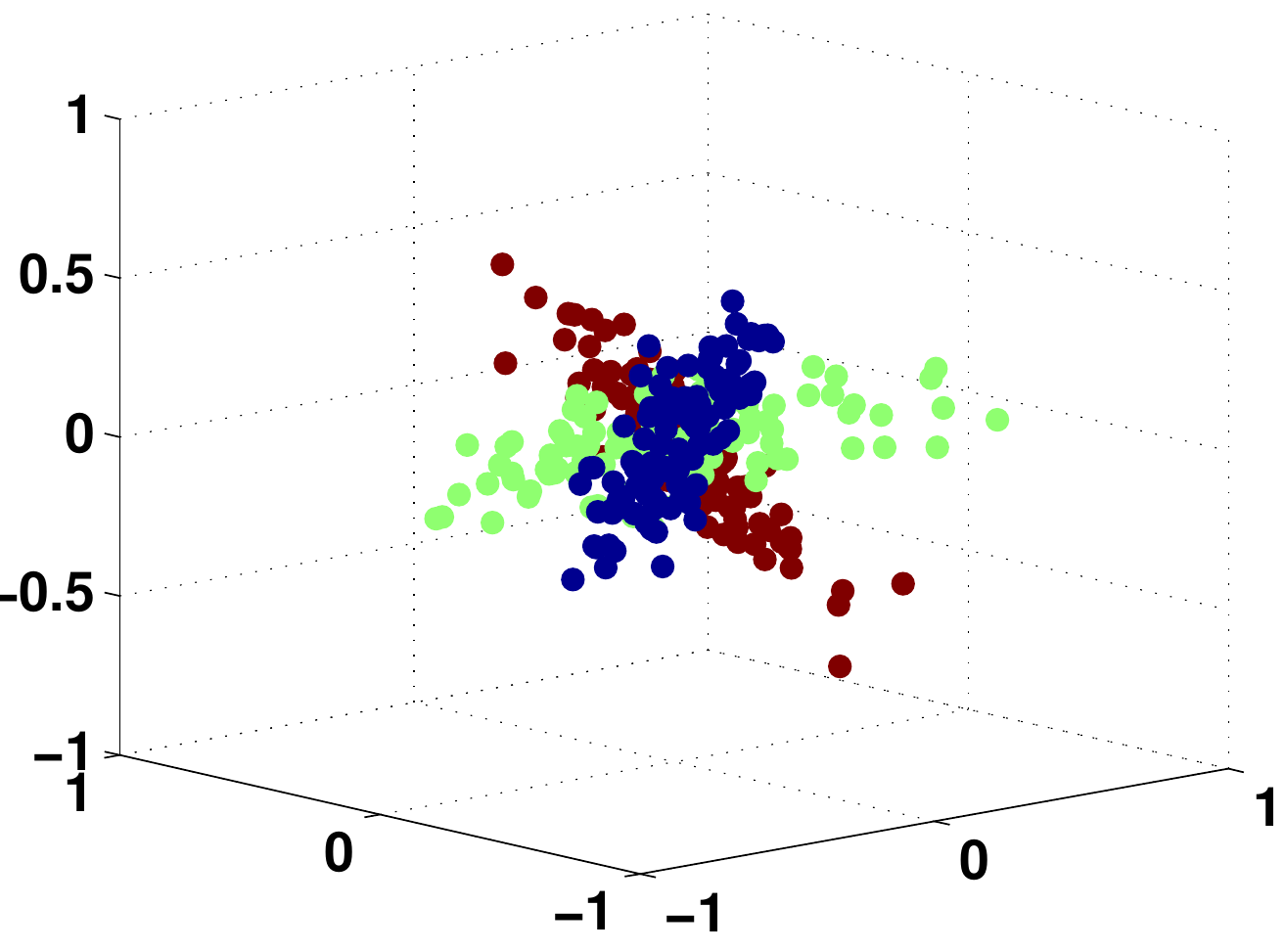}}~~
	\subfloat[\lrt, angles:\newline $86.0^\circ$, $77.1^\circ$, $83.0^\circ$]
	{\includegraphics[width=0.33\columnwidth]{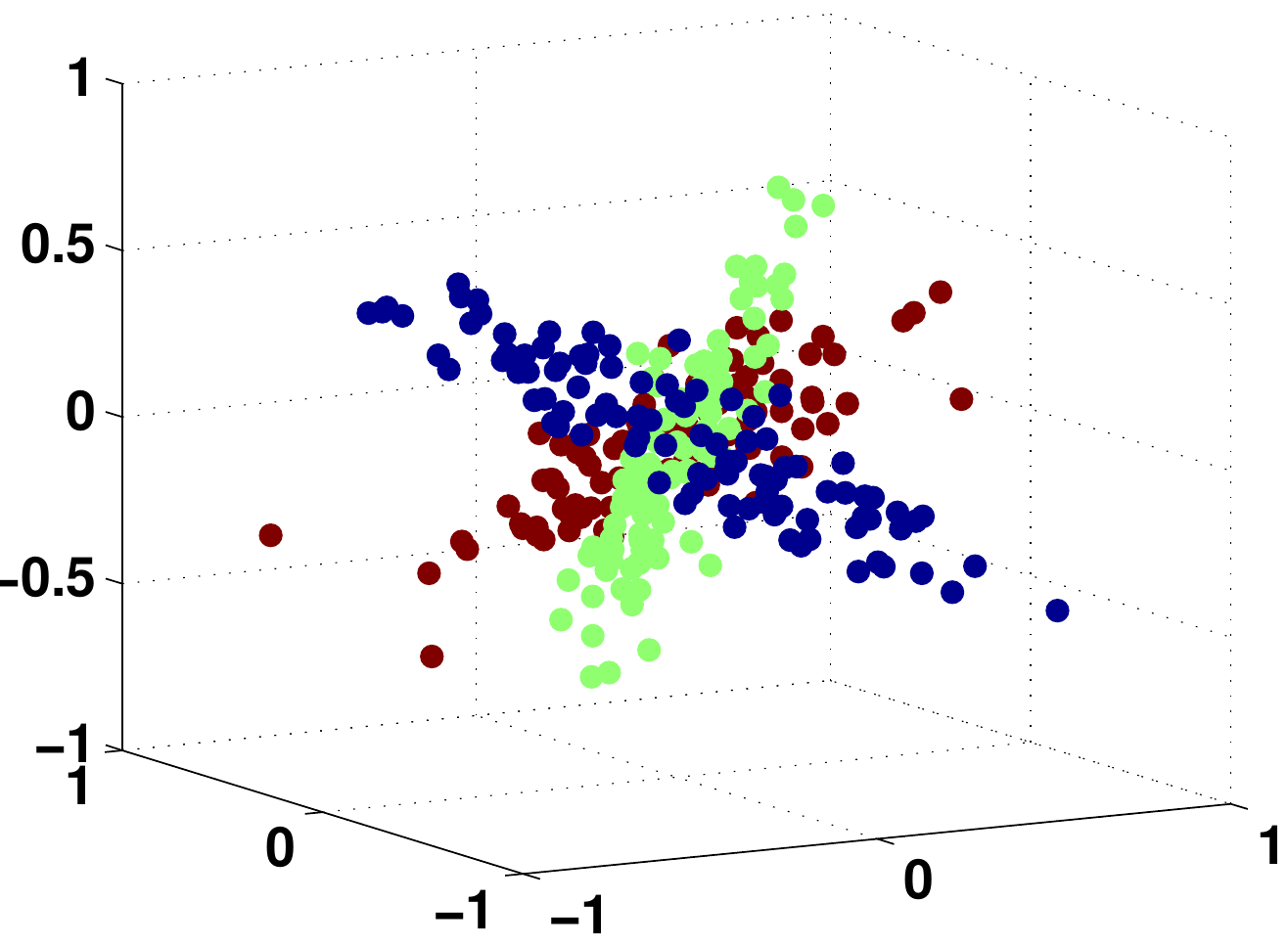}}\\
	\subfloat[LDA, angles:\newline $47.0^\circ$, $86.4^\circ$, $87.8^\circ$]
	{\includegraphics[width=0.33\columnwidth]{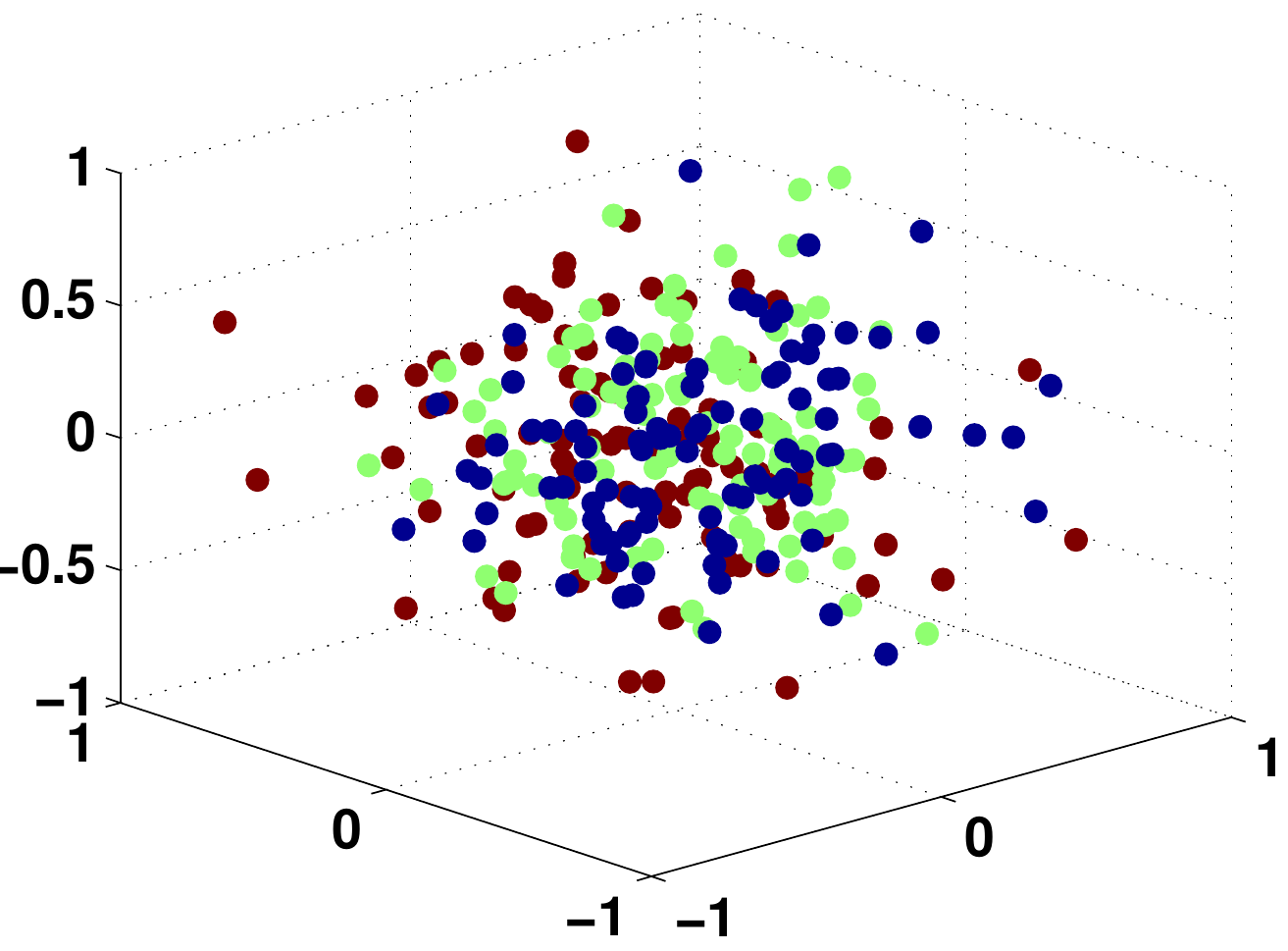}}~~
	\subfloat[Random, angles:\newline $72.4^\circ$, $2.5^\circ$, $73.0^\circ$]{\includegraphics[width=0.33\columnwidth]{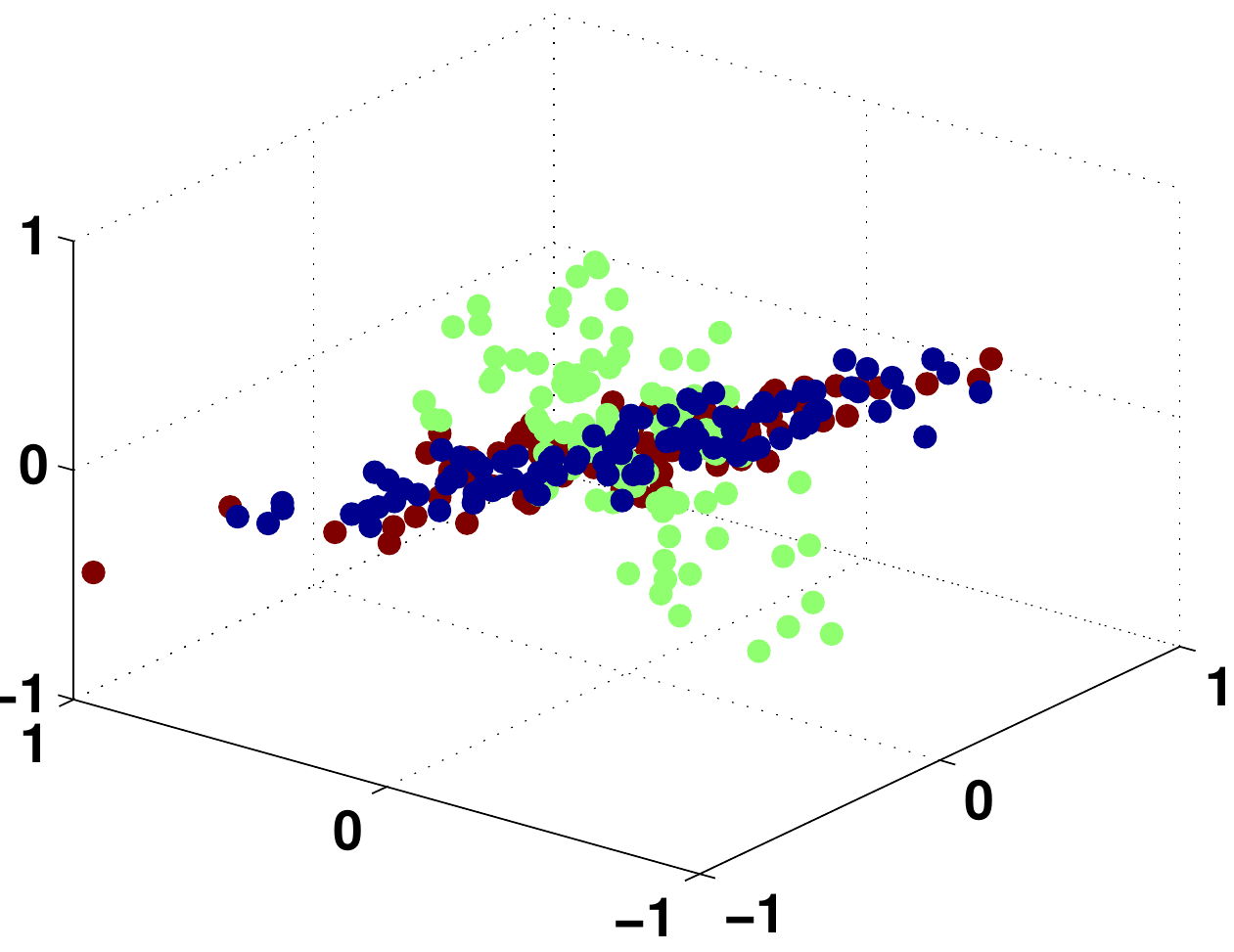}}
	\caption{Embeddings of original and transformed data.}
	\label{fig:embed}
\end{figure}
We now vary the feature dimension $m$, and compare the error probability of the MAP classifier across the different methods of extracting features. Fig. \ref{fig:comparePe} shows that the performance of TRAIT and LRT are similar, and that both are superior to LDA and random projection. Note that after dimension reduction. TRAIT is still able to match error probabilities achieved with the original data.

\subsubsection{\textcolor{\chcolor}{Preservation of Intra-class Structure}}
\label{sec:yaleB}
When a convex body, e.g., human face, is illuminated, the resulting image is represented by spherical harmonics. 
It has been shown that a 9-dimensional subspace is sufficient to capture the geometry of an individual subject~\cite{Basri2003}. The extended Yale B face database includes 38 subjects, each with 64 images taken under different illumination conditions. We use a cropped version of this data set\footnote{http://www.cad.zju.edu.cn/home/dengcai/Data/FaceData.html}, where each image is of size $32\times32=1024$.

For each subject, we randomly select half of the 64 images for training, and retain the other half for testing. For all feature extraction methods, we vary the target dimension $m$, and apply the NSC to the transformed data. The NSC achieves much higher accuracy on features extracted by TRAIT and LRT (Fig.~\ref{fig:facePe}).

\begin{figure}[h!]
	\centering
	\begin{minipage}[t]{0.45\columnwidth}
		\centering
		\includegraphics[width=\columnwidth]{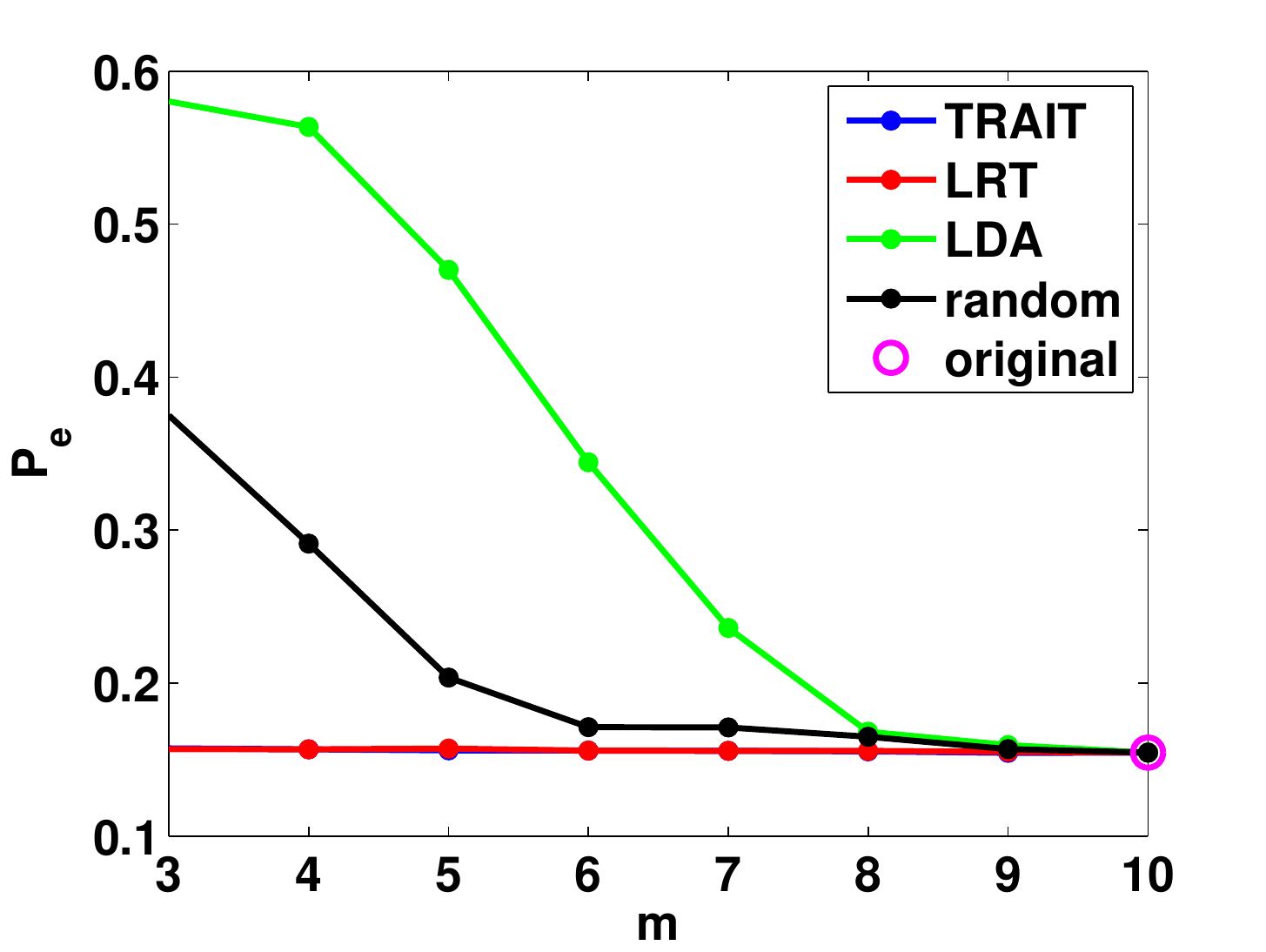}
		\caption{MAP classifier's $P_e$ on transformed data. Note that \trait(blue) and \lrt(red) almost overlap.}
		\label{fig:comparePe}
	\end{minipage}
	\quad
	\begin{minipage}[t]{0.45\columnwidth}
		\centering
		\includegraphics[width=\columnwidth]{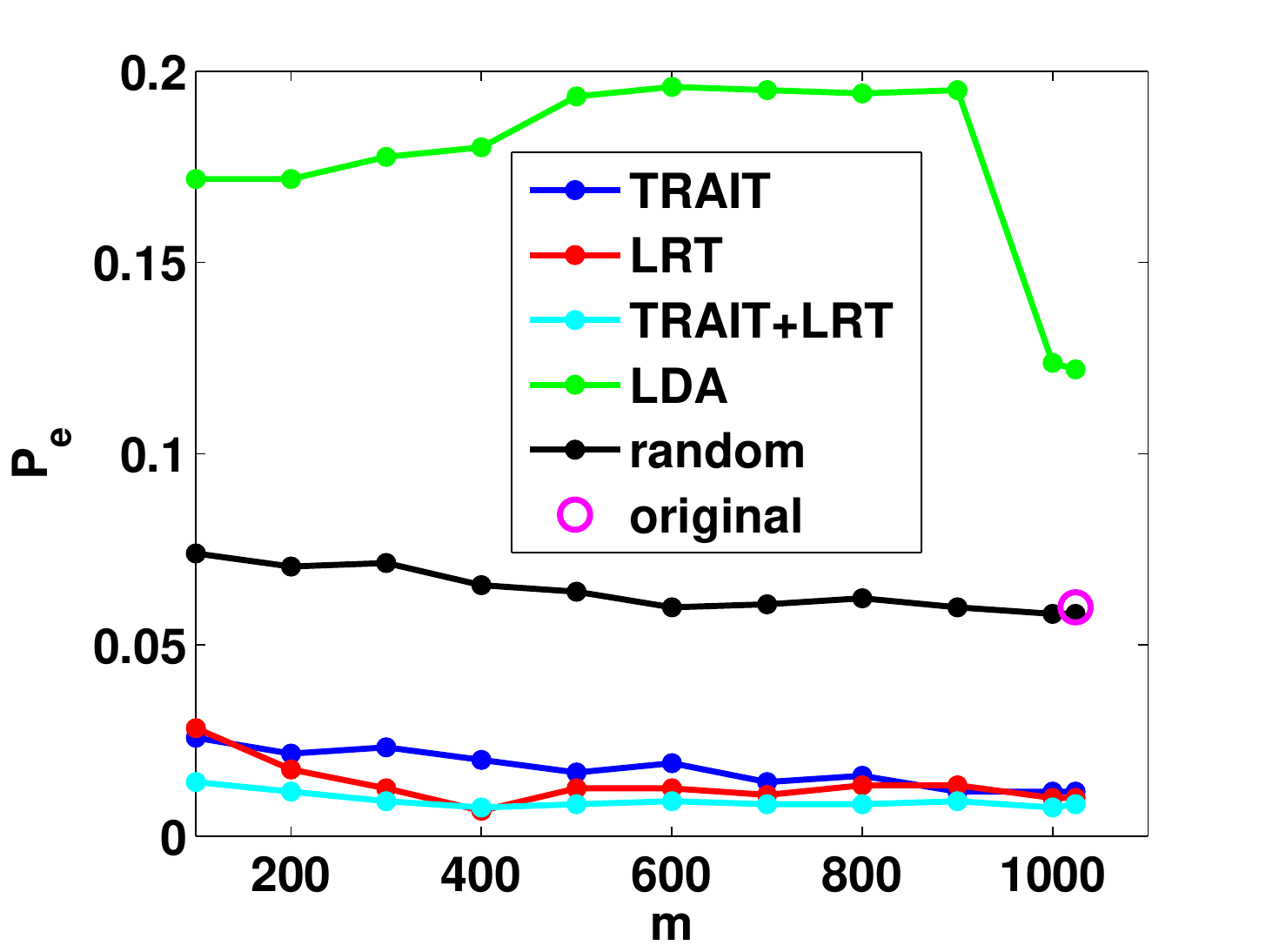}
		\caption{NSC's $P_e$ on original/transformed face images.
			\textcolor{\chcolor}{Concatenation of TRAIT and LRT features (TRAIT+LRT) provides superior results}}
		\label{fig:facePe}
	\end{minipage}
\end{figure}

We also observe in Fig.~\ref{fig:ExampleFace} that the features extracted by TRAIT and LRT are quite different, suggesting that information present in one view is somewhat independent of information present in the other. \textcolor{\chcolor}{This is confirmed by applying NSC to the concatenation of the two views (TRAIT+LRT in Fig.~\ref{fig:facePe})}, and observing that classification accuracy is increased.

The intra-class structure preserving property of TRAIT is evident in Fig.~\ref{fig:ExampleFace} where we view transformed classes as faces in the original image domain. The original images of subject 10 are displayed together with their TRAIT and LRT transforms. TRAIT preserves a diversity of illumination conditions, whereas LRT blurs the differences between images. Classification performance is improved by using LRT and TRAIT features in combination.

\begin{figure}[h!]
	\includegraphics[width=\columnwidth]{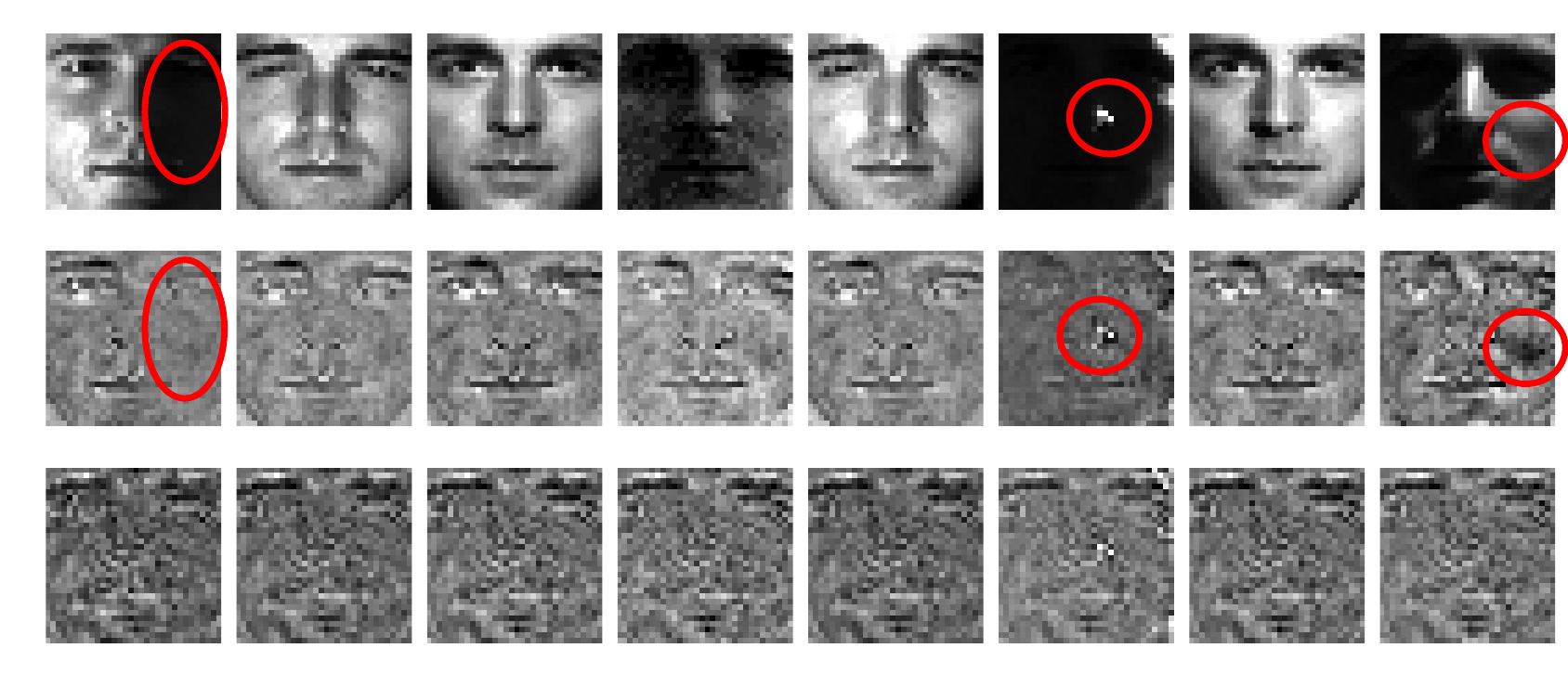}
	\caption{Comparison of original images (top) with TRAIT transformed images (middle) and LRT transformed images (bottom). Red circles indicate structure that is present in both the original and the TRAIT transformed image.}
	\label{fig:ExampleFace}
\end{figure}

\subsection{\textcolor{\chcolor}{Robustness to Model Mismatch}}
\label{sec:robustTRAIT}
{\color{\chcolor}
In the previous sections, we have demonstrated the effectiveness of TRAIT and LRT on both synthetic and real data. In this section, we present experiments showing that TRAIT is more robust with respect to model mismatch than is LRT. 
In many real world problems, data may not be exactly GMM distributed. Even if they are, there may not be sufficient training data to learn the covariances. Therefore, we use NSC throughout this section to assess the discriminability of the extracted features. 
Moreover, having seen the effectiveness of dimension reduction in previous sections, 
we turn to learning dimension reduced features, thereby saving computational cost on measured datasets.
\subsubsection{Synthetic Data}
The synthetic data is a three-class dataset, where datum $\x\in\R^{100}$ in the $k$-th ($k=1,2,3$) class is generated as 
\[\x=\U_k\boldsymbol\alpha+\n,\]
with $\U_k\in\R^{100\times 5}$ and $\U_k^\top\U_k=\I$. $\boldsymbol\alpha\sim\rm{Uniform}[-2,2]$ and $\n\sim\N(0,\sigma^2\I_{100})$. 
Note the data is not GMM distributed.
Each class has $100$ training samples and $10000$ testing samples.
We vary $\sigma^2$ and use NSC to classify TRAIT and LRT extracted features. Here we fix the extracted feature dimension to be $30$.

\begin{figure}
	\begin{minipage}[t]{0.45\columnwidth}
	\includegraphics[width=\columnwidth]{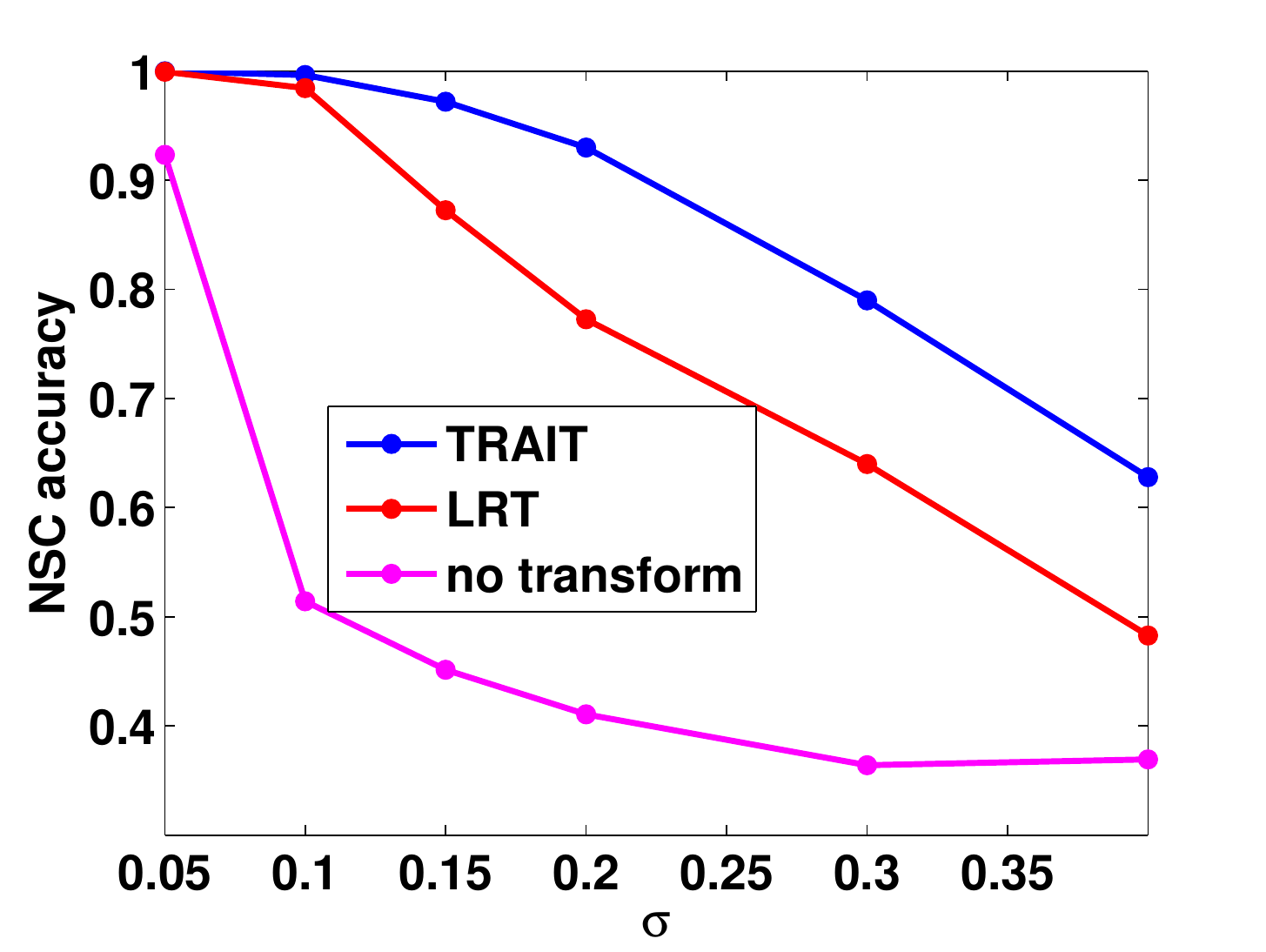}
	\caption{NSC performance on TRAIT and LRT features under different SNR}
	\label{fig:varySNR_synthetic}
\end{minipage}
\quad
\begin{minipage}[t]{0.45\columnwidth}
	\includegraphics[width=\columnwidth]{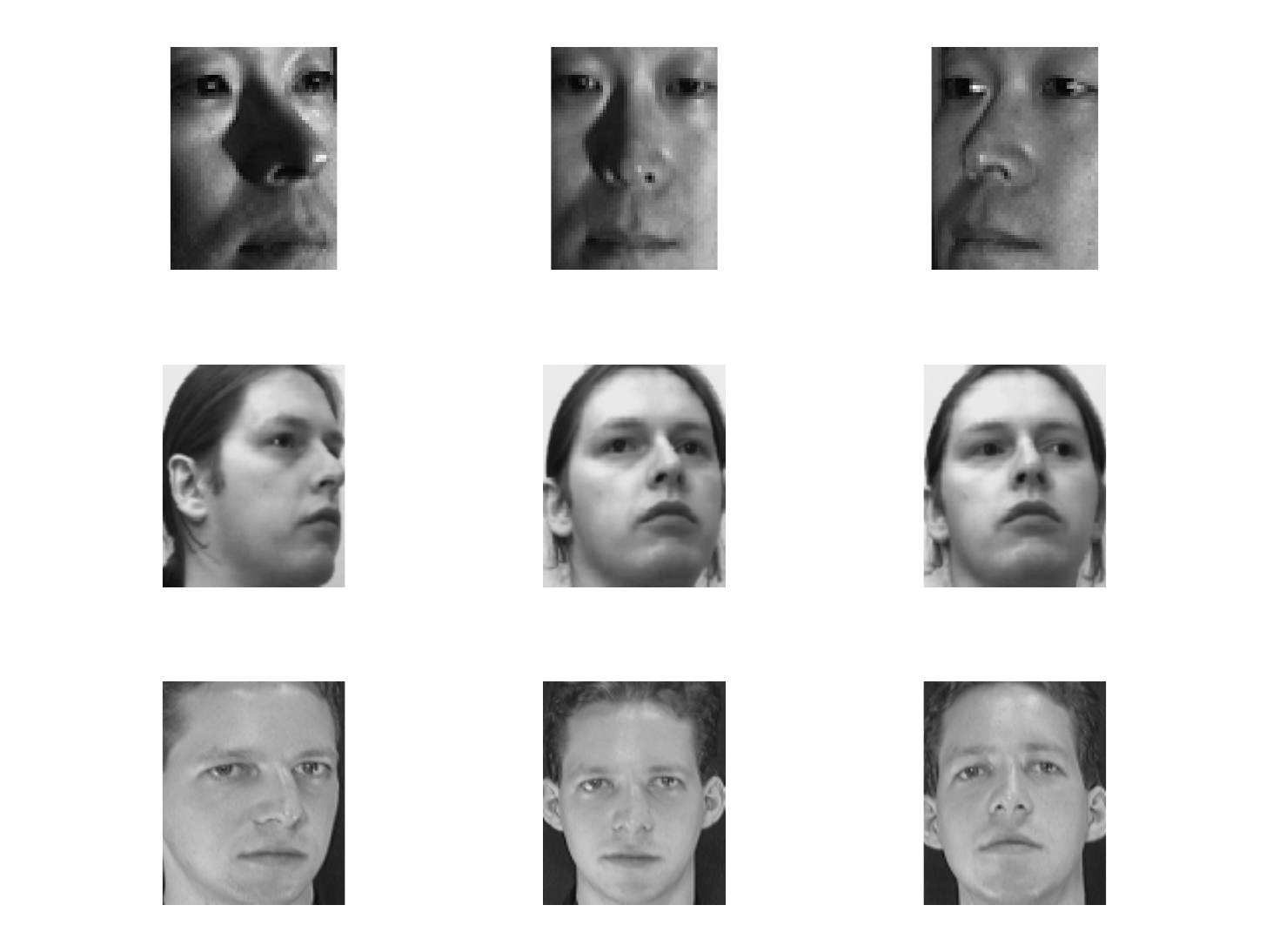}
	\caption{From top to bottom row: 
	subjects in PIE, UMIST and ORL database, taken under different poses}
	\label{fig:PIE}
\end{minipage}
\end{figure}

Fig.~\ref{fig:varySNR_synthetic} shows the NSC classification accuracy as a function of $\sigma$. Both TRAIT and LRT significantly improves classification performance compared with no transform. 
However, with increasing noise, TRAIT features outperform LRT features, showing greater robustness to model mismatch.

\subsubsection{Face Images with \textbf{non-frontal} Poses}
It is known that human frontal face images are well modeled by subspaces. For example, the Yale-B face in section~\ref{sec:yaleB}, where LRT slightly outperforms TRAIT. 
Now we further compare the performance of TRAIT and LRT in more mismatched cases by introducing non-frontal face images. We validate performance on three publicly available datasets, PIE~\cite{PIEdata}, UMIST\footnote{http://www.sheffield.ac.uk/eee/research/iel/research/face} and ORL\footnote{http://www.cl.cam.ac.uk/research/dtg/attarchive/facedatabase.html}. 
All of them have a considerable number of non-frontal face images. 
Fig.~\ref{fig:PIE} shows one subject from each database with different poses.

The PIE dataset includes $18562$ $64\times 48$ images of 68 subjects. Each image is labeled with one of 13 different pose tags. 
We randomly select 7 pose tags and the images of these tags are used as training samples. The rest are used in testing. 
UMIST comprises $575$ $112\times 92$ images of $20$ subjects, and ORL comprises $400$ $112\times 92$ images of $40$ subjects.
These two datasets have no pose tags. 
We split the UMIST and ORL datasets using the strategy followed for the Yale-B dataset in Section~\ref{sec:yaleB}. We derive 1000-dimensional features for each of random projection, LDA, LRT and TRAIT. Table~\ref{tab:moreFaces} lists accuracies of NSC classification for the different algorithms.
\begin{table}[h!]
	\centering 
	\caption{NSC accuracy on original and 1000 dimensional (compressed) extracted features}
	\begin{tabular}{c|c|c|c}
		\hline\hline
				  & PIE     & UMIST   & ORL \\
				  \hline
		Original  & 74.57\% & 96.14\% & 95.50\% \\
		random    & 72.14\% & 95.44\% & 94.50\% \\
		LDA       & 40.10\% & 84.91\% & 92.00\% \\
		LRT       & 70.80\% & 96.84\% & 95.00\% \\
		TRAIT     & \bf 76.11\% & \bf 97.90\% & \bf 97.00\% \\
		\hline\hline
	\end{tabular}
	\label{tab:moreFaces}
\end{table}

In all cases, TRAIT has the highest classification accuracy and outperforms LRT. 
LRT optimizes the rank (its convex relaxation), which is critical for reducing classification error in the high SNR regime. 
However, in this low SNR regime, TRAIT gains more discrimination via explicitly ``orthogonalizing" between the classes. 
The criteria employed by TRAIT do not depend on the specific SNR regime and therefore are more robust. 
}
\section{Conclusion}
\label{sec:conclu}
\textcolor{\chcolor}{In a low-rank Gaussian Mixture Model, we have explored how the probability of misclassification is governed by principal angles between subspaces. In the low-noise regime, the Bhattacharyya upper bound on misclassification is determined by the product of the sines of the principal angles.} In the high/moderate-noise regime it is determined by the sum of the squares of the sines of the principal angles. Analysis of the Nearest Subspace Classifier connected reliability of classification to the distribution of signal energy across principal vectors. Classification was shown to be more reliable when more signal energy is associated with principal vectors corresponding to large principal angles. This observation motivated the design of a transform, TRAIT, that achieves superior classification performance by enlarging principal angles and preserving intra-class structure. \textcolor{\chcolor}{Finally we showed that TRAIT complements a prior approach that enlarge inter-class distance while suppressing intra-class dispersion, and that it is more robust to model mismatch.}


\appendices

\section{Proof of high SNR case}
\label{proof:highSNR}
\noindent{\bf Proof of Theorem~\ref{thm:highSNR}} We have
\[
\begin{array}{ll}
\det\boldsymbol\Sigma_1=&(\sigma^2)^{n-d}\prod_{i=1}^d \left(\lambda_{1,i}+\sigma^2\right),\\
\det\boldsymbol\Sigma_2=&(\sigma^2)^{n-d}\prod_{i=1}^d \left(\lambda_{2,i}+\sigma^2\right).
\end{array}
\]
%
Let the SVD of
$
\U_{1,\cap}\boldsymbol\Lambda_{1,\cap} \U_{1,\cap}^\top+\U_{1,\backslash}\boldsymbol\Lambda_{1,\backslash} \U_{1,\backslash}^\top+\U_{2,\cap}\boldsymbol\Lambda_{2,\cap} \U_{2,\cap}^\top+\U_{2,\backslash}\boldsymbol\Lambda_{2,\backslash} \U_{2,\backslash}^\top
$
be ${\bf Z}\boldsymbol\Lambda {\bf Z}^\top$,
where $\boldsymbol\Lambda=\diag\{\lambda_1,\dots,\lambda_{2d-r}\}$.
Then,
\[
\det\left(\frac{\boldsymbol\Sigma_1+\boldsymbol\Sigma_2}{2}\right)
=(\sigma^2)^{n-2d+r}\prod_{i=1}^{2d-r}\left({\lambda_i\over2}+\sigma^2\right).
\]
Substituting the above into the Bhattacharyya bound, we have
\ben
\begin{aligned}
	P_e \leq & {1\over2}(\sigma^2)^{d-r\over2}\cdot \left[\frac{\sqrt{\prod_{i=1}^d \left(\lambda_{1,i}+\sigma^2\right)\prod_{i=1}^{d} \left(\lambda_{2,i}+\sigma^2\right)}}{\prod_{i=1}^{2d-r}\left({\lambda_i\over2}+\sigma^2\right)}\right]^{1\over2}\\
	=& (\sigma^2)^{d-r\over2}\cdot 2^{{2d-r\over2}-1}\left[\frac{\sqrt{\prod_{i=1}^d \lambda_{1,i}\prod_{i=1}^d \lambda_{2,i}}}{\prod_{i=1}^{2d-r}\lambda_i}\right]^{1\over2}\\
	& + o\left((\sigma^2)^{d-r\over2}\right).
	\label{eq:Bbound_partial_expand}
\end{aligned}
\een
Our objective is to expand $\prod_{i=1}^{2d-r}\lambda_i$ in terms of principal angles.
Since the image of $\U_{1,\cap}$ (or $\U_{2,\cap}$) is orthogonal to $\U_{1,\backslash}$ and $\U_{2,\backslash}$,
\[
\begin{aligned}
	\prod_{i=1}^{2d-r}\lambda_i =& \pdet(\U_{1,\cap}\boldsymbol\Lambda_{1,\cap}\U_{1,\cap}^\top + \U_{2,\cap}\boldsymbol\Lambda_{2,\cap}\U_{2,\cap}^\top)\\
	&\cdot\pdet([\U_{1,\backslash}\boldsymbol\Lambda_{1,\backslash}^{1\over 2}~\U_{2,\backslash}\boldsymbol\Lambda_{2,\backslash}^{1\over 2}][\U_{1,\backslash}\boldsymbol\Lambda_{1,\backslash}^{1\over 2}~\U_{2,\backslash}\boldsymbol\Lambda_{2,\backslash}^{1\over 2}]^\top)\\
	=&\pdet\left(\U_{1,\cap}\boldsymbol\Lambda_{1,\cap}\U_{1,\cap}^\top + \U_{2,\cap}\boldsymbol\Lambda_{2,\cap}\U_{2,\cap}^\top\right)\\
	&\cdot\det([\U_{1,\backslash}\boldsymbol\Lambda_{1,\backslash}^{1\over 2}~\U_{2,\backslash}\boldsymbol\Lambda_{2,\backslash}^{1\over 2}]^\top[\U_{1,\backslash}\boldsymbol\Lambda_{1,\backslash}^{1\over 2}~\U_{2,\backslash}\boldsymbol\Lambda_{2,\backslash}^{1\over 2}]),
\end{aligned}
\label{eq:expandNominator1}
\]
where we assume $n\geq 2(d-r)$ in order to derive the second equality, which simplifies as follows:
\ben
\begin{aligned}
	&\det([\U_{1,\backslash}\boldsymbol\Lambda_{1,\backslash}^{1\over 2}~\U_{2,\backslash}\boldsymbol\Lambda_{2,\backslash}^{1\over 2}]^\top[\U_{1,\backslash}\boldsymbol\Lambda_{1,\backslash}^{1\over 2}~\U_{2,\backslash}\boldsymbol\Lambda_{2,\backslash}^{1\over 2}])\\
	& = \det\left(\bmx\boldsymbol\Lambda_{1,\backslash} & \boldsymbol\Lambda_{1,\backslash}^{1\over2}\U_{1,\backslash}^\top \U_{2,\backslash}\boldsymbol\Lambda_{2,\backslash}^{1\over2}\\ \boldsymbol\Lambda_{2,\backslash}^{1\over2}\U_{2,\backslash}^\top \U_{1,\backslash}\boldsymbol\Lambda_{1,\backslash}^{1\over2} & \boldsymbol\Lambda_{2,\backslash} \emx\right)\\
	&=\det(\boldsymbol\Lambda_{1,\backslash})\det\left(\boldsymbol\Lambda_{2,\backslash}- \right.\\
	& \qquad\qquad~ \left.\boldsymbol\Lambda_{2,\backslash}^{1\over2}\U_{2,\backslash}^\top \U_{1,\backslash}\boldsymbol\Lambda_{1,\backslash}^{1\over2}
	\boldsymbol\Lambda_{1,\backslash}^{-1}\boldsymbol\Lambda_{1,\backslash}^{1\over2}\U_{1,\backslash}^\top \U_{2,\backslash}\boldsymbol\Lambda_{2,\backslash}^{1\over2}\right)\\
	&=\det(\boldsymbol\Lambda_{1,\backslash})\det\left(\boldsymbol\Lambda_{2,\backslash}^{1\over2}(\I-\U_{2,\backslash}^\top \U_{1,\backslash}\U_{1,\backslash}^\top \U_{2,\backslash})\boldsymbol\Lambda_{2,\backslash}^{1\over2} \right)\\
	&=\prod_{i=1}^{d-r} \boldsymbol\lambda_{1,\backslash,i} \cdot \prod_{i=1}^{d-r} \boldsymbol\lambda_{2,\backslash,i} \cdot \prod_{i=r+1}^d \sin^2\theta_i.
\end{aligned}
\label{eq:expandNominator2}
\een
The last equality follows from the observation that the eigenvalues of $\U_{2,\backslash}^\top \U_{1,\backslash}\U_{1,\backslash}^\top \U_{2,\backslash}$ are $\cos^2\theta_{r+1},\dots,\cos^2\theta_{d}$.
The theorem now follows by substituting Eq. \eqref{eq:expandNominator2} into Eq. \eqref{eq:Bbound_partial_expand}. 

\hfill$\qed$

\section{Proof of Low SNR case}
\label{proof:lowSNR}
We first state and prove (for completeness) two preliminary lemmas that are needed to characterize classification error.
\begin{lemma}
	\label{TaylorExp}
	Let ${\bf D}\in\R^{n\times n}$ be any positive semi-definite matrix with all eigenvalues smaller than $1$, then 
	\[
	\tr({\bf D})-{1\over2}\tr({\bf D}^2) \leq \ln\det(\I_n+{\bf D}) \leq \tr({\bf D})-{1\over4}\tr({\bf D}^2).
	\]	
\end{lemma}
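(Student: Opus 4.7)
The plan is to reduce the matrix inequality to a scalar one by diagonalizing $\mathbf{D}$, then verify the scalar inequality by elementary calculus. Since $\mathbf{D}$ is positive semi-definite, we may write $\mathbf{D} = \mathbf{Q}\operatorname{diag}\{\mu_1,\dots,\mu_n\}\mathbf{Q}^\top$ with $\mathbf{Q}$ orthogonal and $\mu_i \in [0,1)$ by hypothesis. Under this decomposition, the three quantities of interest split along the eigenvalues:
\[
\tr(\mathbf{D}) = \sum_{i=1}^n \mu_i,\quad \tr(\mathbf{D}^2) = \sum_{i=1}^n \mu_i^2,\quad \ln\det(\I_n+\mathbf{D}) = \sum_{i=1}^n \ln(1+\mu_i).
\]
Hence the lemma will follow immediately by summation once I establish, for every scalar $\mu\in[0,1)$, the sandwich
\[
\mu - \tfrac{1}{2}\mu^2 \;\leq\; \ln(1+\mu) \;\leq\; \mu - \tfrac{1}{4}\mu^2.
\]

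For the lower bound, I would define $f(\mu)=\ln(1+\mu)-\mu+\tfrac{1}{2}\mu^2$, observe $f(0)=0$, and compute
\[
f'(\mu) = \frac{1}{1+\mu} - 1 + \mu = \frac{\mu^2}{1+\mu} \;\geq\; 0\quad\text{for all }\mu\geq 0,
\]
so $f$ is non-decreasing on $[0,\infty)$, giving $f(\mu)\geq 0$. For the upper bound, define $g(\mu) = \mu - \tfrac{1}{4}\mu^2 - \ln(1+\mu)$. Again $g(0)=0$, and
\[
g'(\mu) = 1 - \tfrac{1}{2}\mu - \frac{1}{1+\mu} = \frac{\mu(1-\mu)}{2(1+\mu)},
\]
which is non-negative precisely on $[0,1]$. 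This is the reason for the eigenvalue restriction $\mu_i<1$: it is exactly what is needed for $g'\geq 0$ on the relevant interval, hence for $g(\mu)\geq 0$ to hold throughout.

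Summing both scalar inequalities over $i=1,\dots,n$ yields the lemma. There is no real obstacle here; the only subtle point is noticing that the upper bound truly requires $\mu<1$ (the constant $1/4$ is sharp in the sense that at $\mu=1$ equality $g'(1)=0$ begins to fail for any larger coefficient), which is why the hypothesis on the eigenvalues of $\mathbf{D}$ is stated as it is.
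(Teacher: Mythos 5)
Your proof is correct and follows essentially the same route as the paper's: diagonalize $\mathbf{D}$ to reduce the claim to the scalar inequality $x-\tfrac12 x^2\leq\ln(1+x)\leq x-\tfrac14 x^2$ on $[0,1]$, which the paper simply asserts and you verify by elementary calculus. (Your closing remark on the sharpness of $1/4$ is slightly overstated---failure of $g'\geq 0$ for a larger coefficient does not by itself imply failure of $g\geq 0$ on $[0,1]$---but this aside does not affect the validity of the argument.)
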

\begin{proof}
	Denote the nonnegative eigenvalues of $\mathbf D\succeq 0$ as $d_1,\dots,d_n$, where $d_1,\dots,d_n\leq1$.
	Then 
	\[\ln\det(\I_n+{\bf D})=\ln\prod_{i=1}^n(1+d_i)=\sum_{i=1}^n\ln(1+d_i).\]
	Since
	$
	x-\frac{x^2}{2}\leq\ln(1+x)\leq x-\frac{x^2}{4}
	$
	for all $x\in[0,1]$, we obtain
	\[
	\sum_i d_i-\frac{d_i^2}{2}
	\leq\ln\det(\I_n+{\bf D})
	\leq\sum_i d_i-\frac{d_i^2}{4},
	\]
	which reduces to
	\[\tr({\bf D})-{1\over 2}\tr({\bf D}^2) \leq \ln\det(\I_n+{\bf D}) \leq \tr({\bf D})-{1\over 4}\tr({\bf D}^2). \]
	This bound is very tight when all the $d_i$'s approach $0$.
\end{proof}
\begin{lemma}
	\label{traceIneq}
	Suppose $\U\in\R^{n\times d}, \V\in\R^{n\times d}$ are two orthonormal bases and that $\boldsymbol\Phi\in\R^{d\times d}$, $\boldsymbol\Psi\in\R^{d\times d}$ are diagonal with nonnegative decreasing diagonal elements $\phi_1,\dots,\phi_d$ and $\psi_1,\dots,\psi_d$ respectively. Denote the $i$-th principal angle between $\U$ and $\V$ as $\theta_i$ where $i=1,\dots,d$. Then 
	\[\phi_d\psi_d \sum_i\cos^2\theta_i \leq \tr(\U\boldsymbol\Phi \U^\top \V\boldsymbol\Psi V^\top)\leq\phi_1\psi_1\sum_i\cos^2\theta_i.\]
\end{lemma}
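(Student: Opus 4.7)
The plan is to reduce the trace to a weighted sum of squared entries of the cross-Gram matrix $\mathbf{M} \triangleq \U^\top \V$, and then to read off the principal angles from $\mathbf{M}$. First I would apply the cyclic property of the trace to write
\[
\tr(\U\boldsymbol\Phi \U^\top \V\boldsymbol\Psi \V^\top) = \tr(\boldsymbol\Phi\, \mathbf{M}\, \boldsymbol\Psi\, \mathbf{M}^\top),
\]
and then, using the fact that $\boldsymbol\Phi$ and $\boldsymbol\Psi$ are diagonal with entries $\phi_i$ and $\psi_j$, expand componentwise to obtain $\sum_{i,j} \phi_i \psi_j \mathbf{M}_{ij}^2$. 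At this point the entire problem has been reduced to bounding a sum of nonnegative weighted squares.

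Next I would use the monotonicity of the weights. Since $\phi_1 \geq \cdots \geq \phi_d \geq 0$ and $\psi_1 \geq \cdots \geq \psi_d \geq 0$, every product $\phi_i \psi_j$ is sandwiched between $\phi_d \psi_d$ and $\phi_1 \psi_1$, so a term-by-term comparison against $\mathbf{M}_{ij}^2 \geq 0$ produces
\[
\phi_d \psi_d \sum_{i,j} \mathbf{M}_{ij}^2 \;\leq\; \sum_{i,j} \phi_i \psi_j \mathbf{M}_{ij}^2 \;\leq\; \phi_1 \psi_1 \sum_{i,j} \mathbf{M}_{ij}^2.
\]

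The final step is to identify $\sum_{i,j} \mathbf{M}_{ij}^2 = \|\mathbf{M}\|_F^2 = \|\U^\top \V\|_F^2$. Since the singular values of $\U^\top \V$ are exactly the cosines of the principal angles $\cos\theta_1,\ldots,\cos\theta_d$, as recalled in Section~\ref{sec:pre}, the squared Frobenius norm equals $\sum_{i=1}^d \cos^2\theta_i$; substituting this identity into the two-sided estimate above yields the claimed bound. There is no genuinely hard step here; the only thing to be careful about is that the collapse of the trace to a nonnegative weighted sum of squared entries relies on both the diagonal structure of $\boldsymbol\Phi$ and $\boldsymbol\Psi$ (so that no cross terms survive) and the nonnegativity of their entries (so that the term-by-term envelope preserves the inequalities).
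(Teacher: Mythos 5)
Your proof is correct, but it takes a more elementary route than the paper. The paper conjugates by the SVD factors of $\U^\top\V=\mathbf{J}\mathbf{C}\mathbf{H}^\top$ and then invokes the trace inequality $\lambda_m(\A)\tr(\mathbf{B})\leq\tr(\A\mathbf{B})\leq\lambda_1(\A)\tr(\A\mathbf{B})$ for positive semidefinite matrices (cited to Kleinman--Athans), applying it twice to peel off $\phi_1$ (resp.\ $\phi_d$) and then $\psi_1$ (resp.\ $\psi_d$), finally landing on $\tr(\mathbf{C}^2)=\sum_i\cos^2\theta_i$. You instead expand $\tr(\boldsymbol\Phi\,\mathbf{M}\,\boldsymbol\Psi\,\mathbf{M}^\top)=\sum_{i,j}\phi_i\psi_j\mathbf{M}_{ij}^2$ directly and sandwich each weight $\phi_i\psi_j$ between $\phi_d\psi_d$ and $\phi_1\psi_1$, using only the nonnegativity of the squared entries and the identity $\|\U^\top\V\|_F^2=\sum_i\cos^2\theta_i$. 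Your version avoids both the external trace lemma and the intermediate SVD manipulation, and makes the source of the slack transparent (the spread of the weights $\phi_i\psi_j$ against the mass distribution of $\mathbf{M}_{ij}^2$); the paper's version generalizes more readily to non-diagonal positive semidefinite $\boldsymbol\Phi,\boldsymbol\Psi$, where an entrywise expansion would pick up cross terms of indefinite sign. Both arguments correctly recover the tightness observation when the diagonal entries are uniform.
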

\begin{proof}
	Let the Singular Value Decomposition of $\U^\top \V$ be ${\bf JCH}^\top$, then $\tr(\U^\top \V\V^\top \U)=\tr({\bf C}^2)=\sum_i\cos^2\theta_i$. 
	We have 
	\[
	\begin{aligned}
	&\tr(\U\boldsymbol\Phi \U^\top \V\boldsymbol\Psi \V^\top)=\tr(\boldsymbol\Phi \U^\top \V\boldsymbol\Psi \V^\top \U)\\
	&=\tr(\boldsymbol\Phi {\bf JCH}^\top \boldsymbol\Psi {\bf HCJ}^\top)=\tr({\bf J}^\top\boldsymbol\Phi {\bf J C H}^\top\boldsymbol\Psi \bf{ H C}).
	\end{aligned}
	\]
	For any two positive semidefinite matrices $\A, {\bf B}\in\R^{m\times m}$, let the maximum and minimum eigenvalues of $\A$ be $\lambda_1(\A), \lambda_m(\A)$ respectively,
	then by \cite{Kleinman1968}
	\[
	\lambda_m(\A)\tr({\bf B})\leq\tr({\bf AB})\leq\lambda_1(\A)\tr({\bf B}).
	\]
	Hence,
	\[
	\begin{aligned}
	\tr(\U\boldsymbol\Phi \U^\top \V\boldsymbol\Psi \V^\top)&\leq\phi_1\tr({\bf C H}^\top\boldsymbol\Psi {\bf H C})=\phi_1\tr({\bf H}^\top\boldsymbol\Psi {\bf H C}^2)\\
	&\leq\phi_1\psi_1\tr({\bf C}^2)=\phi_1\psi_1\sum_i\cos^2\theta_i.
	\end{aligned}  
	\]
	The lower bound can be proved in the same way. 
	This bound becomes tight when the diagonal elements of $\boldsymbol\Phi$ and $\boldsymbol\Psi$ are uniform.
\end{proof}

{\noindent \bf Proof of Theorem~\ref{thm:lowSNR}}
We are now ready to prove theorem~\ref{thm:lowSNR}.	
We expand $K$ in Eq.~\eqref{eq:Bbound} as
\ben
K={1\over2}\ln\det\left(\boldsymbol\Sigma_1+\boldsymbol\Sigma_2\over2  \right)-{1\over4}(\ln\det\boldsymbol\Sigma_1+\ln\det\boldsymbol\Sigma_2).
\een
The second term becomes:
	\ben
	-{1\over4}\left[\sum_{i=1}^d\ln\left(1+{\lambda_{1,i}\over\sigma^2}\right)
		+\sum_{i=1}^d\ln\left(1+{\lambda_{2,i}\over\sigma^2}\right) \right]-{n\over2}\ln(\sigma^2),
	\label{eq:denominator}
	\een
	and we use Lemma~\ref{TaylorExp} to bound the first term. Note that
	\ben
	\begin{aligned}
		&{1\over2}\ln\det\left(\boldsymbol\Sigma_1+\boldsymbol\Sigma_2\over2\right)\\
		&=
		{1\over2}\ln\det\left[\sigma^2\left(\I+\frac{\U_1\Lambda_1\U_1^\top+\U_2\boldsymbol\Lambda_2\U_2^\top}{2\sigma^2}  \right)  \right]\\
		&={n\over2}\ln(\sigma^2)+{1\over2}\ln\det\left( \I+\frac{\U_1\boldsymbol\Lambda_1\U_1^\top+\U_2\boldsymbol\Lambda_2\U_2^\top}{2\sigma^2} \right).
	\end{aligned}
	\label{eq:numeratorLower}
	\een
	Let ${\bf D}\triangleq \frac{\U_1\boldsymbol\Lambda_1\U_1^\top+\U_2\boldsymbol\Lambda_2\U_2^\top}{2\sigma^2}$.
	We apply Lemma~\ref{TaylorExp} to bound ${1\over2}\ln\det\left(\boldsymbol\Sigma_1+\boldsymbol\Sigma_2\over2\right)$:
	\ben
	\begin{aligned}
		&{n\over2}\ln(\sigma^2)+{1\over2}\left[\tr({\bf D})-{1\over2}\tr({\bf D}^2)\right] \leq {1\over2}\ln\det\left(\boldsymbol\Sigma_1+\boldsymbol\Sigma_2\over2\right)\\
		&\leq {n\over 2}\ln(\sigma^2)+{1\over2}\left[\tr({\bf D})-{1\over4}\tr({\bf D}^2)  \right],
	\end{aligned}
	\een
	Expanding $\tr({\bf D})$ gives
	\ben\begin{aligned} 
		&{n\over2}\ln(\sigma^2)+{1\over4}\left[\sum_{i=1}^d{\lambda_{1,i}\over\sigma^2} +\sum_{i=1}^d{\lambda_{2,i}\over\sigma^2}\right]-{1\over4}\tr({\bf D}^2)\\
		&\leq {1\over2}\ln\det\left(\boldsymbol\Sigma_1+\boldsymbol\Sigma_2\over2\right)\\
		&\leq
		{n\over2}\ln(\sigma^2)+{1\over4}\left[\sum_{i=1}^d{\lambda_{1,i}\over\sigma^2} +\sum_{i=1}^d{\lambda_{2,i}\over\sigma^2}\right]-{1\over8}\tr({\bf D}^2).
	\end{aligned}
	\een
	Note that
	\ben
	\tr({\bf D}^2)={1\over4\sigma^4} \bigg(\sum_{i=1}^d \lambda_{1,i}^2 +\sum_{i=1}^d \lambda_{2,i}^2  +2\tr(\U_1\boldsymbol\Lambda_1\U_1^\top \U_2\boldsymbol\Lambda_2\U_2^\top) \bigg).
	\label{eq:trD2}
	\een
	Envoking Lemma~\ref{traceIneq} to bound the last term of the above:
	\ben\begin{aligned}
		\tr(\U_1\boldsymbol\Lambda_1\U_1^\top \U_2\boldsymbol\Lambda_2\U_2^\top) &\geq \lambda_{1,d}\lambda_{2,d}\sum_i\cos^2\theta_i\\	
		\tr(\U_1\boldsymbol\Lambda_1\U_1^\top \U_2\boldsymbol\Lambda_2\U_2^\top) &\leq \lambda_{1,1}\lambda_{2,1}\sum_i\cos^2\theta_i
		\label{eq:useLemma}
	\end{aligned}
	\een
	Combining Eq.~\eqref{eq:denominator} to \eqref{eq:useLemma}, we obtain upper and lower bounds on $K$,\\
	\subsubsection{Upper bound}
	\ben\begin{aligned}
		K\leq&{1\over4}\left[\sum_{i=1}^d{\lambda_{1,i}\over\sigma^2} +\sum_{i=1}^d{\lambda_{2,i}\over\sigma^2}\right]\\
		&-{1\over32\sigma^4}\bigg(\sum_{i=1}^d \lambda_{1,i}^2 +\sum_{i=1}^d \lambda_{2,i}^2	+ 2\lambda_{1,d}\lambda_{2,d}\sum_{i=1}^d\cos^2\theta_i \bigg)\\
		&-{1\over4}\left[\sum_{i=1}^d\ln\left(1+{\lambda_{1,i}\over\sigma^2}\right)
		+\sum_{i=1}^d\ln\left(1+{\lambda_{2,i}\over\sigma^2}\right) \right] \\
		=& {1\over4}\left[\sum_{i=1}^d{\lambda_{1,i}\over\sigma^2}-{1\over2}\sum_{i=1}^d \left({\lambda_{1,i}\over 2\sigma^2}\right)^2-\sum_{i=1}^d\ln\left(1+{\lambda_{1,i}\over\sigma^2}\right) \right]\\
		&+{1\over4}\left[\sum_{i=1}^d{\lambda_{2,i}\over\sigma^2}-{1\over2}\sum_{i=1}^d \left({\lambda_{2,i}\over 2\sigma^2}\right)^2
		-\sum_{i=1}^d\ln\left(1+{\lambda_{2,i}\over\sigma^2}\right) \right]\\
		& - {1\over 16\sigma^4}\lambda_{1,d}\lambda_{2,d}\sum_{i=1}^d\cos^2\theta_i\\
		\triangleq& {1\over\sigma^4}\left(c_2-{1\over16}\lambda_{1,d}\lambda_{2,d}\sum_{i=1}^d\cos^2\theta_i\right).
	\end{aligned}
	\een
	
	\subsubsection{Lower bound}
	\ben
	\begin{aligned}
		K\geq&
		{1\over4}\left[\sum_{i=1}^d{\lambda_{1,i}\over\sigma^2} +\sum_{i=1}^d{\lambda_{2,i}\over\sigma^2}\right]\\
		&-{1\over16\sigma^4}\bigg(\sum_{i=1}^d \lambda_{1,i}^2 +\sum_{i=1}^d \lambda_{2,i}^2	+ 2\lambda_{1,1}\lambda_{2,1}\sum_{i=1}^d\cos^2\theta_i \bigg)\\
		&-{1\over4}\left[\sum_{i=1}^d\ln\left(1+{\lambda_{1,i}\over\sigma^2}\right)
		+\sum_{i=1}^d\ln\left(1+{\lambda_{2,i}\over\sigma^2}\right) \right] \\
		=& {1\over4}\left[\sum_{i=1}^d{\lambda_{1,i}\over\sigma^2}-\sum_{i=1}^d \left({\lambda_{1,i}\over 2\sigma^2}\right)^2
		-\sum_{i=1}^d\ln\left(1+{\lambda_{1,i}\over\sigma^2}\right) \right]\\
		&+{1\over4}\left[\sum_{i=1}^d{\lambda_{2,i}\over\sigma^2}-\sum_{i=1}^d \left({\lambda_{2,i}\over 2\sigma^2}\right)^2
		-\sum_{i=1}^d\ln\left(1+{\lambda_{2,i}\over\sigma^2}\right) \right] \\
		&- {1\over 8\sigma^4}\lambda_{1,1}\lambda_{2,1}\sum_{i=1}^d\cos^2\theta_i\\
		\triangleq& {1\over\sigma^4}\left(c_3-{1\over 8}\lambda_{1,d}\lambda_{2,d}\sum_{i=1}^d\cos^2\theta_i  \right).
	\end{aligned}
	\label{eq:lower1}
	\een
	Negating $K$ and exponentiating gives theorem~\ref{thm:lowSNR}.
	\hfill$\qed$	
	
\section{\textcolor{\chcolor}{Proof of Moderate SNR Case}}
\label{sec:proofModerateSNR}
{\color{\chcolor}
{\noindent \bf Proof of Lemma~\ref{lemma:ell(p)}} consider the function 
\[
f(\lambda_i)=\ln(1+\lambda_i)-\ln(1+p)-{1\over 1+p}(\lambda_i-p)+{1\over(1+p)^2}(\lambda_i-p)^2,
\]
defined in $[0,p]$. Its derivative is 
\[
f^\prime(\lambda_i)={1\over 1+\lambda_i}-{1\over 1+p}+{2(\lambda_i-p)\over(1+p)^2}=\frac{(p-\lambda_i)(p-1-2\lambda_i)}{(1+\lambda_i)(1+p)^2},
\]
which is positive in $\left[0,\frac{p-1}{2}\right)$ and negative in $\left(\frac{p-1}{2},p\right]$.
Therefore, $f(\lambda_i)$ is monotonically increasing in $\left[0,\frac{p-1}{2}\right)$ and decreasing in $\left(\frac{p-1}{2},p\right]$.
Further, $f(p)=0$ and $f(0)=-\ln(1+p)+{p\over 1+p}+{p^2\over (1+p)^2}$ whose sign depends on the value of $p$.
The shape of $f(\lambda_i)$ is now characterized. There exists $L<{p-1\over2}$ such that $f(\lambda_i)\geq 0$ when $\lambda_i\in[L,p]$. 

\hfill$\qed$

Before proving theorem~\ref{thm:moderateSNR}, we need to bound $\lambda_i$ using Weyl's inequality~\cite{MtxAnal}.
\begin{lemma}[Weyl's inequality~\cite{MtxAnal}]
	Let $\mathbf M$ and $\mathbf P$ be two $n\times n$ Hermitian matrices, with eigenvalues 
	$\mu_1\geq\dots\geq\mu_n$ and $\nu_1\geq\dots\geq\nu_n$ respectively. Denote the eigenvalues of $\mathbf M+\mathbf P$ by
	$\gamma_1\geq\dots\geq\gamma_n$. Then
	\[\max(\mu_i+\nu_n,\nu_i+\mu_n)\leq \gamma_i\leq \min(\mu_i+\nu_1,\nu_i+\mu_1).\]
\end{lemma}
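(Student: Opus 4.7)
The plan is to derive both inequalities from the Courant--Fischer min--max characterization of eigenvalues of Hermitian matrices. Recall that for any $n\times n$ Hermitian matrix $\mathbf A$ with eigenvalues $\alpha_1\geq\dots\geq\alpha_n$,
\[
\alpha_i \;=\; \max_{\substack{S\subseteq\mathbb{C}^n\\ \dim S=i}} \min_{\substack{x\in S\\ \|x\|=1}} x^\ast \mathbf A\,x \;=\; \min_{\substack{S\subseteq\mathbb{C}^n\\ \dim S=n-i+1}} \max_{\substack{x\in S\\ \|x\|=1}} x^\ast \mathbf A\,x.
\]
I would take this as the sole tool and just plug $\mathbf M+\mathbf P$ into the variational formulas, exploiting the bilinearity $x^\ast(\mathbf M+\mathbf P)x = x^\ast\mathbf M x + x^\ast\mathbf P x$.

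For the upper bound $\gamma_i\leq\mu_i+\nu_1$, I would use the $\min$--$\max$ form. Let $S^\star$ be the $(n-i+1)$-dimensional subspace on which the outer minimum for $\mathbf M$ is attained, so $\max_{x\in S^\star,\|x\|=1} x^\ast\mathbf M x = \mu_i$. Since $S^\star$ is only one candidate in the minimization defining $\gamma_i$,
\[
\gamma_i \;\leq\; \max_{x\in S^\star,\|x\|=1} x^\ast(\mathbf M+\mathbf P)x \;\leq\; \max_{x\in S^\star,\|x\|=1} x^\ast\mathbf M x + \max_{\|x\|=1} x^\ast\mathbf P x \;=\; \mu_i+\nu_1.
\]
The second bound $\gamma_i\leq\nu_i+\mu_1$ follows identically by swapping the roles of $\mathbf M$ and $\mathbf P$. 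Taking the minimum of the two gives the right-hand side of the inequality.

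For the lower bound $\gamma_i\geq\mu_i+\nu_n$, I would use the $\max$--$\min$ form. Let $T^\star$ be the $i$-dimensional subspace achieving the outer maximum for $\mathbf M$, so $\min_{x\in T^\star,\|x\|=1} x^\ast\mathbf M x = \mu_i$. Since $T^\star$ is only one candidate in the maximization defining $\gamma_i$,
\[
\gamma_i \;\geq\; \min_{x\in T^\star,\|x\|=1} x^\ast(\mathbf M+\mathbf P)x \;\geq\; \min_{x\in T^\star,\|x\|=1} x^\ast\mathbf M x + \min_{\|x\|=1} x^\ast\mathbf P x \;=\; \mu_i+\nu_n.
\]
Swapping $\mathbf M$ and $\mathbf P$ gives $\gamma_i\geq \nu_i+\mu_n$, and taking the maximum of the two yields the left-hand side.

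There is no real obstacle here; the only subtle point is that in each step one must use the fact that for any subspace $S$, $\max_{x\in S,\|x\|=1} x^\ast\mathbf P x \leq \max_{\|x\|=1} x^\ast\mathbf P x = \nu_1$, and dually $\min_{x\in S,\|x\|=1} x^\ast\mathbf P x \geq \nu_n$. That passage is immediate because enlarging the feasible set can only raise a maximum and lower a minimum. Combining the four inequalities gives exactly $\max(\mu_i+\nu_n,\nu_i+\mu_n)\leq \gamma_i\leq \min(\mu_i+\nu_1,\nu_i+\mu_1)$, completing the proof.
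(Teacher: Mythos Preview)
Your proof is correct and follows the standard Courant--Fischer min--max argument for Weyl's inequality. Note, however, that the paper does not actually supply its own proof of this lemma: it is stated as a known result with a citation to \cite{MtxAnal} and used as a tool in the proof of Theorem~\ref{thm:moderateSNR}. So there is no ``paper's proof'' to compare against; your argument simply fills in what the paper takes for granted, and it does so cleanly.
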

\noindent{\bf Proof of Theorem~\ref{thm:moderateSNR}} 
Since ${p\over c(p)}\leq {\lambda_{1,i}\over\sigma^2}, {\lambda_{2,i}\over\sigma^2} \leq p$,
by the Weyl's inequality, ${p\over2c(p)}={p/c(p)+0\over2}\leq \lambda_i\leq {p+p\over2}=p$.
Further, since $1\leq c(p)\leq {p\over2L(p)}$, we have $\lambda_1,\dots,\lambda_{2d-r}\in [L(p),p]$. 
By definition of $L(p)$, we can invoke Eq.~\eqref{eq:expansion_p} in Lemma~\ref{lemma:ell(p)} to obtain
\ben
\begin{aligned}
	&\ln\det\left({{\boldsymbol\Sigma}_1+{\boldsymbol\Sigma}_2\over2}\right) =\sum_{i=1}^{2d-r}\ln(1+\lambda_i)+n\ln(\sigma^2)\\
	&\geq (2d-r)\ln(1+p)+{\tr\mathbf D-p(2d-r)\over1+p}\\
	&-\frac{\tr \mathbf D^2-2p\tr\mathbf D+p^2(2d-r)}{(1+p)^2}+n\ln(\sigma^2).
\end{aligned}
\label{eq:numerator}
\een
Notice $\tr \mathbf D={1\over2}\sum_i \left( \frac{\lambda_{1,i}}{\sigma^2} + \frac{\lambda_{2,i}}{\sigma^2} \right)$, and by Eq.~\eqref{eq:trD2} and \eqref{eq:useLemma},
$\tr \mathbf D^2\leq {1\over4\sigma^4}\left(\sum_i\lambda_{1,i}^2+\lambda_{2,i}^2+2\lambda_{1,1}\lambda_{2,1}\sum_i\cos^2\theta_i\right)$.
Substituting these into Eq.~\eqref{eq:numerator}, we get
\[\begin{aligned}
	&\ln\det\left(\frac{\boldsymbol{\Sigma}_1+\boldsymbol{\Sigma}_2}{2}\right)\\
	\geq& n\ln(\sigma^2)+ (2d-r)\left[\ln(1+p)-\frac{p}{1+p}-\frac{p^2}{(1+p)^2}\right]\\
	&+{1+3p\over 2\sigma^2(1+p)^2}\left( \sum_i \lambda_{1,i} + \lambda_{2,i}\right)\\
	&-{1\over4\sigma^4(1+p)^2}\left(\sum_i\lambda_{1,i}^2+\lambda_{2,i}^2+2\lambda_{1,1}\lambda_{2,1}\sum_i\cos^2\theta_i\right)
\end{aligned}\]
Substituting the above into the Bhattacharyya bound~\eqref{eq:Bbound} yields an upper bound on $P_e$, of the form given in Theorem~\ref{thm:moderateSNR}. In particular,
\[
	c_4 = {1\over2}\left[\ln(1+p)-\frac{p}{1+p}-\frac{p^2}{(1+p)^2}\right],
\]
and
\[\begin{aligned} 
	c_5 = & -{1+3p\over4\sigma^2(1+p)^2}\sum_i\left(\lambda_{1,i}+\lambda_{2,i}\right)+{\sum_i\lambda_{1,i}^2+\lambda_{2,i}^2\over8\sigma^4(1+p)^2}\\
	&+{1\over4}\sum_i\left[\ln\left(1+{\lambda_{1,i}\over\sigma^2}\right) + \ln\left(1+{\lambda_{2,i}\over\sigma^2}\right)  \right].
\end{aligned}\] 
\hfill$\qed$
}
\section{Analysis of NSC}
\label{sec:proof_NSC}
\noindent{\bf Proof of Lemma~\ref{lemma:independence}}
Since that the joint distribution of $[a_i~ a_j]^\top$, $[b_i~ b_j]^\top$, $[a_i~ b_j]^\top$ and $[a_i+b_i~a_i-b_i]^\top$
are all Gaussian, it suffices to show that all covariance are diagonal. 
For any $i\neq j$,
\[
\begin{aligned}
&\bmx a_i \\ a_j \emx \sim \N\left(\bmx \alpha_i \\ \alpha_j \emx, \sigma^2\I_2  \right)\quad
\bmx b_i \\ b_j \emx \sim \N\left(\bmx \cos\theta_i\alpha_i \\ \cos\theta_j\alpha_j \emx, \sigma^2\I_2  \right) \\
&\bmx a_i \\ b_j \emx \sim \N\left(\bmx \alpha_i \\ \cos\theta_j\alpha_j \emx, \sigma^2\I_2  \right). \\
\end{aligned}
\]
For any $i$,
\ben
\begin{aligned}
&\bmx a_i \\ b_i \emx \sim \N\left(\bmx \alpha_i \\ \cos\theta_i\alpha_i \emx, \sigma^2\bmx 1 & \cos\theta_i \\\cos\theta_i & 1  \emx  \right)\\
&\bmx a_i+b_i \\ a_i-b_i\emx \sim \N\left(\bmx (1+\cos\theta_i)\alpha_i \\ (1-\cos\theta_i)\alpha_i \emx, \right.\\
&\qquad\qquad\qquad\qquad 2\sigma^2\left.\bmx (1+\cos\theta_i) & 0 \\ 0 & (1-\cos\theta_i) \emx     \right),
\end{aligned}
\label{eq:independence}
\een
which concludes the proof.
\hfill $\qed$

\noindent{\bf Proof of Lemma~\ref{lemma:ourProductNormal}}
As $\sigma^2\to0$, the mean-covariance ratios of both $a_i+b_i$ and $a_i-b_i$ tend to infinity.
Therefore, applying Lemma~\ref{lemma:productNormal} to Eq.~\eqref{eq:independence} (see proof of Lemma~\ref{lemma:independence}), we have $(a_i+b_i)(a_i-b_i)\sim\N\left(\sin^2\theta_i \alpha_i^2, 4\sigma^2\sin^2\theta_i(\alpha_i^2+\sigma^2) \right)$. 
Applying the independence between $(a_i+b_i)(a_i-b_i)$ and $(a_j+b_j)(a_j-b_j)$ ($i\neq j$), 
we obtain the desired result by summing the mean and variance over all $i$.
\hfill $\qed$

\noindent{\bf Proof of Theorem~\ref{thm:NSCbound}}
We prove the theorem by deriving upper bounds on $\Pr(\C_2|\C_1,\boldsymbol\alpha)$ and $\Pr(\C_1|\C_2,\boldsymbol\alpha)$.
\ben\begin{aligned}
	&{\Pr}(\C_2|\C_1,\boldsymbol\alpha)={\Pr}\left(\sum_i(a_i+b_i)(a_i-b_i)\leq 0\right)\\
	&={\Pr}\left(\frac{\sum_i(a_i+b_i)(a_i-b_i)-\sum_i \sin^2\theta_i\alpha_i^2}{2\sigma\sqrt{\sum_i\sin^2\theta_i (\alpha_i^2+\sigma^2)}} \leq  \right.\\
	&\qquad\qquad \left.-\frac{\sum_i \sin^2\theta_i\alpha_i^2}{2\sigma\sqrt{\sum_i\sin^2\theta_i (\alpha_i^2+\sigma^2)}}\right).\\
\end{aligned} 
\label{eq:relation1}
\een
As $\sigma\to0$, the term to the left of ``$\leq$" in the last line of Eq.~\eqref{eq:relation1} is standard normal distributed. Therefore we can invoke the Gaussian tail bound to obtain
\ben\begin{aligned}
	&{\Pr}(\C_2|\C_1,\boldsymbol\alpha)\\
	&= {\Pr}\left(\frac{\sum_i(a_i+b_i)(a_i-b_i)-\sum_i \sin^2\theta_i\alpha_i^2}{2\sigma\sqrt{\sum_i\sin^2\theta_i (\alpha_i^2+\sigma^2)}} \geq  \right.\\
	&\qquad\qquad \left.\frac{\sum_i \sin^2\theta_i\alpha_i^2}{2\sigma\sqrt{\sum_i\sin^2\theta_i (\alpha_i^2+\sigma^2)}}\right)\\
	&\leq{1\over2}\exp\left[-\frac{\left(\sum_i \sin^2\theta_i\alpha_i^2 \right)^2}{8\sigma^2\sum_i\sin^2\theta_i (\alpha_i^2+\sigma^2 )}   \right].
\end{aligned}\label{eq:conditionONalpha1}\een
$\Pr(\C_1|\C_2,\boldsymbol\alpha)$ can be upper bounded in the same manner:
\ben
	{\Pr}(\C_1|\C_2,\boldsymbol\alpha)\leq {1\over2}\exp\left[-\frac{\left(\sum_i \sin^2\theta_i\alpha_i^2 \right)^2}{8\sigma^2\sum_i\sin^2\theta_i (\alpha_i^2+\sigma^2 )} \right].
	\label{eq:conditionONalpha2}
\een
Therefore,
\ben\begin{aligned} 
	P_e =& {1\over2} \int{\Pr}(\C_2|\C_1,\boldsymbol\alpha)p(\boldsymbol\alpha)d\boldsymbol\alpha + {1\over2} \int{\Pr}(\C_1|\C_2,\boldsymbol\alpha)q(\boldsymbol\alpha)d\boldsymbol\alpha\\
	\leq&\int{1\over2}\exp\left[-\frac{\left(\sum_i \sin^2\theta_i\alpha_i^2 \right)^2}{8\sigma^2\sum_i\sin^2\theta_i (\alpha_i^2+\sigma^2 )} \right] \frac{p(\boldsymbol\alpha)+q(\boldsymbol\alpha)}{2} d\boldsymbol\alpha \\
	\triangleq& \int\Er(\theta,\boldsymbol\alpha,\sigma) \frac{p(\boldsymbol\alpha)+q(\boldsymbol\alpha)}{2} d\boldsymbol\alpha,
\end{aligned}
\label{eq:Pe}
\een
which concludes the proof.
\hfill$\qed$

\section{}
\label{proof:Prop1}
{\noindent \bf Proof of Proposition 1} 
	Observe that 
	\[\|\X^\top {\bf P} \X-{\bf T}\|_F^2=\|(\X^\top\otimes \X^\top)\vect({\bf P})-\vect({\bf T})\|_2^2,\]
	is a least squares problem with minimizer
	\[\vect({\bf P}^\star)=(\X^\top \otimes \X^\top)^\dagger \vect({\bf T})=\X^\top,\]
	which can be rearranged to give
	\[{\bf P}^\star=(\X^\top)^\dagger {\bf T} [(\X^\top)^\dagger]^\top=(\X\X^\top)^{-1} \X{\bf T}\X^\top (\X\X^\top)^{-1} \succeq 0.\] 
\hfill$\qed$

\ifCLASSOPTIONcaptionsoff
  \newpage
\fi

\bibliographystyle{IEEEtran}
\bibliography{refs}

%
%

\end{document}